\documentclass[lettersize,journal]{IEEEtran}

\usepackage{url}

\usepackage{tikz}
\usepackage{newfloat}
\usepackage{listings}
\usepackage{bm}
\usepackage{multirow}
\usepackage{booktabs}
\usepackage{svg} 
\usepackage{mathrsfs}
\usepackage{amsfonts}
\usepackage{multicol}
\usepackage{amsthm}
\usepackage{amsmath}
\usepackage{subfigure}
\usepackage{array}
\usepackage{xcolor, colortbl}
\usepackage{makecell}
\usepackage[colorlinks=true, citecolor=blue, linkcolor=blue, urlcolor=blue]{hyperref}
\usepackage{orcidlink}

\newtheorem{lemma}{Lemma}
\newtheorem{proposition}{Proposition}

\usepackage[linesnumbered,ruled,vlined]{algorithm2e}
\usepackage{algorithmic}

\newcommand{\rka}[1]{\cellcolor{gray!60}{#1}}

\newcommand{\rkat}[1]{\colorbox{gray!60}{#1}}
\newcommand{\rkb}[1]{\cellcolor{gray!30}{#1}}
\newcommand{\rkbt}[1]{\colorbox{gray!30}{#1}}

\newcommand{\MobileSignal}[1]{
    \begin{tikzpicture}[scale=0.5]
        \definecolor{gentlegreen}{RGB}{130, 179, 102}
        
        \foreach \i in {1,2,3,4,5} {
            \pgfmathsetmacro\xleft{(\i-1) * 0.3}
            \pgfmathsetmacro\xright{(\i-1) * 0.3 + 0.2}
            \pgfmathsetmacro\yheight{\i * 0.1 + 0.1}
            
            \ifnum\i>#1
                \fill[white,draw=black] (\xleft,0) rectangle (\xright,\yheight); 
            \else
                \fill[gentlegreen,draw=black] (\xleft,0) rectangle (\xright,\yheight); 
            \fi
        }
    \end{tikzpicture}
}

\begin{document}

\title{Coden: Efficient Temporal Graph Neural Networks for Continuous Prediction}

\author{Zulun Zhu\,\orcidlink{0000-0002-5176-6378},
        Siqiang Luo\,\orcidlink{0000-0001-8197-0903} , ~\IEEEmembership{Member,~IEEE}
        \thanks{The authors are with the College of Computing and Data Science,
  Nanyang Technological University, Singapore.
  E-mail: ZULUN001@ntu.edu.sg; siqiang.luo@ntu.edu.sg.
}}

\markboth{Journal of \LaTeX\ Class Files,~Vol.~14, No.~8, August~2021}%
{Shell \MakeLowercase{\textit{et al.}}: A Sample Article Using IEEEtran.cls for IEEE Journals}


\maketitle

\begin{abstract}
Temporal Graph Neural Networks (TGNNs) are pivotal in processing dynamic graphs. However, existing TGNNs primarily target one-time predictions for a given temporal span, whereas many practical applications require { continuous predictions}, that predictions are issued frequently over time. 
Directly adapting existing TGNNs to continuous-prediction scenarios introduces either significant computational overhead or prediction quality issues especially for large graphs. 
This paper revisits the challenge of { continuous predictions} in TGNNs, and introduces {\sc Coden}, a TGNN model designed for efficient and effective learning on dynamic graphs.
{\sc Coden}  innovatively overcomes the key complexity bottleneck in existing TGNNs while preserving comparable predictive accuracy.  {Moreover, we further provide theoretical analyses that substantiate the effectiveness and efficiency of {\sc Coden}, and clarify its duality relationship with both RNN-based and attention-based models. }
Our evaluations across five dynamic datasets show that {\sc Coden} surpasses existing performance benchmarks in both efficiency and effectiveness, establishing it as a superior solution for continuous prediction in evolving graph environments.
\end{abstract}

\begin{IEEEkeywords}
Temporal graph, State Space Models, PageRank.
\end{IEEEkeywords}

\vspace{-6mm}
\section{Introduction}
\IEEEPARstart{G}{raphs}, fundamental structures consisting of nodes and edges, are pervasive in representing complex relationships and interactions in numerous domains. In real life, graphs model various scenarios such as social networks where nodes represent individuals and edges symbolize social connections \cite{spikenet, quota, hu2023detecting, ju2019generating}, transportation networks with intersections as nodes and roads as edges \cite{wu2019graph, jin2023spatio,zheng2020gman}, and even Internet networks mapping the vast array of interconnected devices and pathways \cite{DBLP:journals/iotj/ChenZYZL22, khanfor2020graph}.  By leveraging node features and structural information, Graph Neural Networks (GNNs) foster the success of representation learning in non-Euclidean graph data and have become the cornerstone behind a variety of high-impact applications \cite{li2021towards, wu2023graph,reiser2022graph,lu2020spatiotemporal}.

Many modern services are intrinsically built on temporal graphs due to the fact that real-world graphs frequently receive updates including the alteration of nodes, edges, and attributes \cite{mo2022single}. For instance, social networks constantly evolve with user interactions \cite{mislove2007measurement,zhao2016link}, financial networks update with market fluctuations \cite{song2023towards,xu2023promoting}, and communication networks adjust to varying traffic loads \cite{shao2022decoupled,shao2022spatial}. In these domains, Temporal Graph Neural Networks (TGNNs) are prominent methods for capturing the temporal information in the evolving process of graphs. Inspired by representative algorithms ranging from recurrent neural networks (RNNs) \cite{seo2018structured,hajiramezanali2019variational}, attention mechanism \cite{zhang2023dynamic,xu2020inductive}, random walk \cite{nguyen2018dynamic, wang2021inductive} to temporal point process \cite{lu2019temporal, jin2023spatio}, a plethora of TGNN frameworks have been proposed to learn topological and temporal patterns simultaneously and improve the prediction performance. 

\textbf{The problem: continuous prediction in dynamic scenarios.} Despite the success of TGNNs over the past few years, most of the existing approaches only focus on the one-time prediction of a single time step (e.g., the final snapshot). Unfortunately, the frequent updates of graphs can result in rapidly outdated information, compelling users to continually request updated answers from the system. We take the e-commerce scenario as an application, where the historical interactions between users and goods are used to make a prediction on the gross merchandise value \cite{ye2022gaia}. Given the continuous queries on trading platforms (e.g., Amazon, Alibaba Taobao), it is crucial to continuously collect updated historical interactions in order to make accurate predictions for these queries. 
Moreover, many social platforms such as LinkedIn experience millions of interactions between users per day \cite{borisyuk2024lignn}, where the rapidly evolving social relationships also require us to continuously refresh the predicted recommendation results. However, in these continuously evolving scenarios, existing research on TGNNs, originally designed to enable a complicated analysis with a perspective of the global time span, become largely inapplicable \cite{feng2024comprehensive, skarding2021foundations}. 


\textbf{Limitations of existing works.} 
To enable continuous prediction in TGNNs, there are three primary temporal message passing paradigms, namely, \textit{single-snapshot} GNN methods, \textit{RNN-based} methods and \textit{attention-based} methods. The \textit{single-snapshot} GNN methods \cite{zheng2022instant, guo2022subset} simply utilize the embeddings of the current graph to answer the prediction request from the users. As the only paradigm to achieve high efficiency for continuous predictions, single-snapshot GNNs neglect the knowledge learned in previous snapshots while learning new patterns, resulting in degraded model performance in evolving scenarios. The \textit{RNN-based} methods \cite{evolvegcn,MPNN} can achieve the rapid iteration of node representations by collecting the current embeddings with the previous hidden states to formulate the updated hidden states. Due to the inherent {\it forgetting} mechanism of RNNs \cite{graves2012long}, these methods have a limited memory horizon and struggle to effectively model long-term interactions. The \textit{attention-based} methods \cite{sankar2020dysat, ASTGCN} employ the commonly used self-attention mechanism \cite{vaswani2017attention} to compute the attention score between previous node embeddings with current hidden state and selectively retain the significant part for predictions. However, as the mass of historical interaction explodes, attention-based methods incur $O(T^2)$ time complexity given $T$ time steps to compute the attention scores, which unavoidably degrades the prediction efficiency. We demonstrate the characteristics and the complexity of each paradigm in Figure \ref{fig:cat comparison} and Tab. \ref{table:method_comparison}, respectively. 

This work aims to reduce the computational complexity of TGNNs in continuously evolving environments while preserving long-term graph interactions. We note that achieving this goal for current TGNNs listed above is a non-trivial task, especially when giving consideration to both accuracy and efficiency in the context of frequently updated, large-scale graphs. 
First, the mainstream message-passing units of TGNNs focusing on one-time predictions struggle with an imbalanced trade-off between accuracy and efficiency.
Adapting existing methods typically requires either abandoning the previous states (e.g., single-snapshot and RNN-based methods) or incorporating additional graph snapshots into the analysis pipeline (e.g., attention-based methods), which will degrade the prediction performance or increase the computational complexity of the model, respectively. Second, directly adopting the state-of-the-art models for answering the graph updates will lead to significant efficiency challenges when the scale of graphs increases. This inefficiency stems from the tight coupling of the graph encoder (e.g., GCN \cite{DBLP:conf/iclr/KipfW17}, GraphSage \cite{graphsage})  with temporal-processing units (e.g., RNN \cite{cho2014learning}, GRU \cite{chung2014empirical}, LSTM \cite{hochreiter1997long}, Transformers \cite{vaswani2017attention}), requiring frequent regeneration of node embeddings. Thus, a natural question arises: \textit{is there a model that can integrate the advantages of listed paradigms in Figure \ref{fig:cat comparison} and take into account both accuracy and efficiency in evolving scenarios? }

These questions motivate us to design {\sc Coden}, a scalable framework targeting the continuous prediction on TGNNs. 
When handling frequent updates in dynamic scenarios, {\sc Coden} distinguishes itself among mainstream TGNNs by balancing both effectiveness and efficiency. On the theoretical side, {\sc Coden} achieves a complexity for updating node states that matches the minimal complexity of leading TGNNs, as demonstrated in Tab. \ref{table:method_comparison}.  {Furthermore, we establish an RNN--Attention duality for {\sc Coden}: under simple constraints it reduces to a gated recurrent update, while in the general case its unrolled dynamics are equivalent to masked kernel attention. This unified view provides a principled explanation of why {\sc Coden} simultaneously achieves high efficiency and strong predictive performance.} Compared with the state-of-the-art
method TGL~\cite{tgl} which use 98.3 hours to finish the continuous prediction on billion-scale graph \textit{Papers100M} (111M nodes, 1.6B edges) averagely, our method {{\sc Coden} finishes in 3.3 hours on a single GPU. Our contributions are summarized as follows.

$\bullet$ \textbf{New Efficient Model.} 
We propose {\sc Coden}, an efficient and scalable model designed to compute the temporal state of nodes and enable continuous prediction in real-world dynamic scenarios.
%
{\sc Coden} incorporates a series of techniques that facilitate the efficient update of node states in evolving graphs. Additionally, we
prove that our model can achieve the information compression of historical graphs, enabling a superior trade-off between accuracy and efficiency than state-of-the-art algorithms.



 {$\bullet$ \textbf{New Theory.} We reveal an RNN–Attention duality in our temporal-processing paradigm: the state update admits both a gated recurrent interpretation and a kernel attention form, clarifying that its information compression operates by selectively masking redundant signals along the temporal evolution. We also propose an alternative baseline model based on the attention mechanism to further highlight the efficiency and representation quality of {\sc Coden}.}

$\bullet$ \textbf{Empirical Validation.} We conduct extensive experiments on a diverse range of real-world datasets including large-scale graphs to conclusively demonstrate the efficiency and effectiveness of our approach in practical dynamic scenarios.
Compared with state-of-the-art methods, {\sc Coden} can achieve up to $44.80\times$ training acceleration while retaining comparable or better prediction accuracy.

\begin{figure}[t]
    \centering
  \includegraphics[width=3.4in]{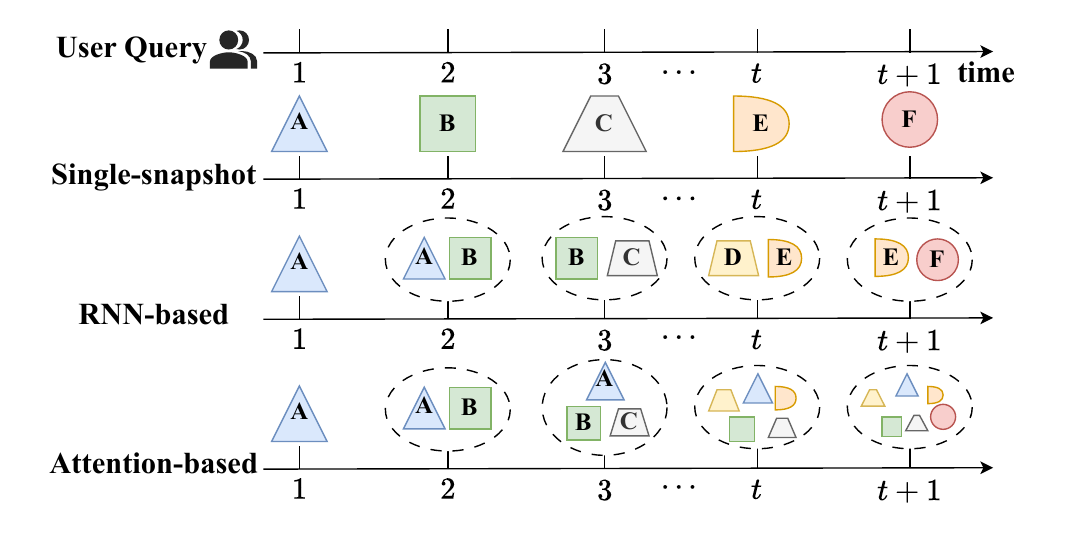}
   
    \caption{ {The comparison between different TGNN paradigms, where each color denotes a different status of the graph.  }}
     
    \label{fig:cat comparison}
    \vspace{-5mm}
\end{figure}

\setlength{\tabcolsep}{0.5mm}{
\begin{table*}[ht]
\centering
\small
\caption{{Comparison of different categories in TGNNs. }
$K$ stands for the number of convolution layers, $p$ means the number edge updates, $\lambda$ is the embedding distance parameter, and $F$ is the dimension of node features. We set the graph propagation error as $\epsilon = \frac{1}{n^{(t)}}$ and ${\bm x}_{max}$ is defined as: $\{{\bm x}_{max}\}_i = \max_{1\leq j\leq F}|{\bm X}^{(t)}_{ij}|$. The introduction of the listed methods can be found in Appendix \ref{app:listed baseline}.} 
 
\begin{tabular}[t]{c|c|c|c|c|c} \toprule[1pt]
\multirow{2}{*}{\bf Categories} & \multirow{2}{*}{\bf Methods} &\multicolumn{2}{c}{\makecell[c]{\bf Minimal Complexity for Updating $p$ Edges}}& {\multirow{2}{*}{{\bf Efficiency}}} &\multirow{2}{*}{\bf Memory Ability}  \\ 
 & &   {\bf Update Embeddings} & {\bf Update States}& &\\ \midrule[0.5pt]
 Single-snapshot&Instant \cite{zheng2022instant}, DynAnom \cite{guo2022subset}, IDOL \cite{IDOL} &$O\left(p\sum_{i=1}^F{||{\bm x}_i^{(t)}||_1}\right)$& N.A.& \MobileSignal{5}& \MobileSignal{1}\\ \midrule[0.5pt]
{RNN-based} & \makecell[c]{TGN \cite{tgn}, TGCN \cite{tgcn}, EvolveGCN \cite{evolvegcn}, \\MPNN \cite{MPNN}, ROLAND \cite{you2022roland}} & $O(Km^{(t)}F)$&$O(pn^{(t)}F^2)$ & \MobileSignal{4} &\MobileSignal{2} \\\midrule[0.5pt]

{Attention-based} & \makecell[c]{DySat \cite{sankar2020dysat}, TGAT \cite{xu2020inductive}, ASTGCN \cite{ASTGCN}, \\DNNTSP \cite{yu2020predicting}, SEIGN\cite{qin2023seign}, DyGFormer \cite{yu2023towards}}  & $O(Km^{(t)}F)$&$O(p\left(n^{(t)}\right)^2F)$ &\MobileSignal{1}&\MobileSignal{5}\\\midrule[0.5pt]

SSM-based  & Coden &  $O\left(p\sum_{i=1}^F{||{\bm x}_i^{(t)}||_1}\right)$&$O(\frac{||{\bm x}_{max}||_1+n^{(t)}}{\lambda}pF^2)$& \MobileSignal{4}&\MobileSignal{4} \\ 
\bottomrule[1pt]
\end{tabular}
 
\label{table:method_comparison}
\hfill
 
\end{table*}
}
\section{Preliminary and Related Work}
In this section, we start by reviewing the continuous-time dynamic graph (CTDG) settings. We then revisit the existing TGNN literature, providing a detailed discussion of these approaches. Specifically, we highlight the limitations of these methods in scenarios that require continuous prediction.

\subsection{Notations}

Consider a directed and attributed graph at time $t$ ($t\ge 0$), denoted as $\mathcal{G}^{(t)} = (\mathcal{V}^{(t)},\mathcal{E}^{(t)},{\bm X}^{(t)})$. Here $\mathcal{V}^{(t)} =\{v_1,v_2,...,v_{n^{(t)}}\}$ represents the set of $n^{(t)}$ nodes, $\mathcal{E}^{(t)}$ constitutes the set of $m^{(t)}$ edges and ${\bm X}^{(t)} = \{{\bm x}_1^{(t)},{\bm x}_2^{(t)},...,{\bm x}_F^{(t)}\}$ is the set of node attribute matrix with ${\bm x}_i^{(t)} \in \mathbb{R}^{n^{(t)}\times 1}$ representing the attribute vector in $i$-th dimension. 
Here we denote ${\bm A}^{(t)}$ and ${\bm D}^{(t)}$ as the adjacency matrix and the degree matrix at time $t$ respectively.
For each node $v \in \mathcal{V}^{(t)}$, $\mathcal{N}_{out}^{(t)}(v)$ denotes the out-neighbors of $v$ and $\mathcal{N}_{in}^{(t)}(v)$ denotes the in-neighbors at time $t$. 
In the setting of a continuous-time dynamic graph (CTDG) \cite{zheng2022instant,li2023zebra}, the graph is treated as an initial graph $\mathcal{G}^{(0)} = (\mathcal{V}^{(0)},\mathcal{E}^{(0)})$ and the subsequent update events consisting of edge insertions and deletions. \footnote{The insertion and deletion of vertices can be managed by adding or removing their respective incident edges. Therefore, this paper focuses exclusively on edge updates and we have $n^{(t)} = n$.} We denote the set of update events as $\Gamma  = \{e_1,e_2,...,e_{t},...\}$, where the edge update $e_{t} = \{u_{t},v_{t}, t\}$ indicates that the link between nodes $u_t$ and $v_t$ will be toggled at time $t$—added if absent, or deleted if present—transforming graph $\mathcal{G}^{(t-1)}$ into $\mathcal{G}^{(t)}$. We leave the notation table in Table \ref{table:fre_notations}.

\begin{table}[htb]
\centering
 \setlength{\abovecaptionskip}{0.5mm}
 	\setlength{\belowcaptionskip}{0.5mm}
\caption{\small Frequently used notations in this paper.}
\label{table:fre_notations}

\renewcommand{\arraystretch}{1.2}
\small

\begin{tabular}{|p{.135\textwidth}| p{.305\textwidth}|} \toprule[1pt]
\textbf{Notations} & \textbf{Descriptions}\\\hline \hline
$\mathcal{G}^{(t)}$& Directed graph at time $t$\\\hline
$n^{(t)}, m^{(t)}$& Numbers of nodes and edges at time $t$\\\hline
$F$& Dimension of the node attribute\\\hline
${\bm X}^{(t)}$&The node attribute matrix and ${\bm X}^{(t)}\in \mathbb{R}^{n^{(t)}\times F}$
\\\hline
${\bm x}_i^{(t)}$&The attribute vector of the $i$-th dimension\\\hline
$\mathcal{N}_{out}^{(t)}(v), \mathcal{N}_{in}^{(t)}(v)$& The out-neighbors and in-neighbors of $v$\\\hline
$e_{t} = \{u_{t},v_{t}\}, \Gamma$& The update event and the event set\\\hline
${\bm r}^{(t)}$& Residue vector at time $t$ and ${\bm r}^{(t)}\in \mathbb{R}^{n^{(t)}\times 1}$\\\hline
${\bm h}^{(t)}$& Reserve vector at time $t$ and ${\bm h}^{(t)}\in \mathbb{R}^{n^{(t)}\times 1}$\\\hline
${\bm H}^{(t)}$&The node embedding matrix at time $t$ \\\hline
${\bm M}^{(t)}$&The state matrix at time $t$ \\\hline
$\bar{\mathcal{A}} , \bar{\mathcal{B}}, \mathcal{C}$&The network parameters in SSM \\\hline
$\alpha$& Teleport probability of random walks \\\hline
$\lambda$& Threshold in to control the embedding distance
\\\bottomrule[1pt]
\end{tabular}
\end{table}

\vspace{1mm}
\noindent
\textbf{Problem Definition.} {Given batches of update events \( \{\Gamma_1, \Gamma_2, ..., \Gamma_k, ...\} \), where each \( \Gamma_k \) ends at time \( t_k \), our objective is to efficiently learn the node state \( {\bm M}^{(t)} \) for \( t \in T = \{t_1, t_2, ..., t_k, ...\} \) and continuously conduct the downsteam tasks with the current graph \( \mathcal{G}^{(t)} \).
 The state ${\bm M}^{(t)}$ should not only represent the current structure of the graph at time $t$, but also capture the historical evolution of each node by integrating dynamic changes over time.} This evolving state is crucial for effectively encoding long-term temporal dependencies and can be leveraged in downstream tasks such as temporal node classification. Our approach aims to achieve this  with  {limited computational resources}, ensuring efficiency as the large-scale graph continuously evolves.



\subsection{General Message Passing with CTDG}
Unlike static graphs, the nodes in temporal graphs can have varying neighbors at different time steps. A commonly used intuition \cite{tgl, tgn,apan, dgnn} to deploy this characteristic is maintaining the node states ${\bm M}^{(t)}(u)\in \mathbb{R}^{ F'}$ ($s\in \mathcal{V}^{(t)}$) to summarize the historical information of evolving neighbors until timestep $t$, where $F'$ is the hidden dimension of the representation. Specifically, given a new edge connecting $e_{t+1} = \{u,v\}$ from node $u$ to node $v$, these algorithms update the state of node $u$ as:
{\small
\begin{align}\label{equ:tgnn}
    {\bm M}^{(t+1)}(u) = \text{UPDT}\left({\bm M}^{(t)}(u), \text{MSG} \left({\mathcal{G}}^{(t)}, {\bm M}^{(t)}(u),  e_{t+1}\right)\right)
\end{align}
}
where $\text{MSG}(\cdot)$ is the message functions including the graph convolution with learnable parameters (e.g. MLPs) and $\text{UPDT}(\cdot)$ represents the temporal-processing units for state updating. In accordance with this framework, the state of node $v$ as well as those of its neighboring nodes $u$ and $v$ will also be updated. Next, we will introduce existing categories of TGNNs based on how they establish dependencies between different time steps using $\text{MSG}(\cdot)$ and $\text{UPDT}(\cdot)$ functions. Each paradigm offers its unique trade-off between accuracy and efficiency, which is demonstrated in Tab. \ref{table:method_comparison}.

\subsection{Single-snapshot Methods}

The single-snapshot GNN methods directly utilize the node embedding as the node states, which will be rapidly updated based on the new interactions. Specifically, single-snapshot GNN methods aim to transmit the updates into node embeddings ${\bm H}^{(t+1)}$ from ${\bm H}^{(t)}$ with the minimal time cost following: 
\begin{align}
    {\bm H}^{(t+1)} = \text{MSG} \left({\mathcal{G}}^{(t)}, {\bm H}^{(t)},  e_{t+1}\right).
\end{align}
To facilitate the incremental updating based on these pre-computed node embeddings, Instant \cite{zheng2022instant}, DynAnom \cite{guo2022subset} and IDOL \cite{IDOL} explore the invariant rules of graph propagation and conduct an invariant-based
algorithm of Personalized PageRank (PPR) to refresh the node embedding locally. The dominant update complexity of these methods can be bounded as $O(p\sum_{i=1}^F\frac{||{\bm x}_i^{(t)}||_1}{\epsilon n^{(t)}})$ given $p$ updates and $\epsilon$ approximation error \cite{zheng2022instant}.
However, only focusing on the current snapshot will miss significant interactions of past time steps, leading to sub-optimal prediction results. 


\subsection{RNN-based Methods} 
RNN-based methods are generally based on the classical RNN \cite{cho2014learning}, GRU \cite{chung2014empirical}, LSTM \cite{hochreiter1997long} algorithm, etc., which simply use both the current input and the previous hidden state to iteratively capture temporal dependencies. For example, the GRU algorithm updates the node states at time $t+1$ as:
{\small
\begin{align*}
&  {\bm Z}^{(t+1)}=\operatorname{sigmoid}\left({\bm W}_Z  {\bm H}^{(t+1)}+{\bm U}_Z  {\bm M}^{(t)}+{\bm B}_Z\right) \\
& {\bm R}_t=\operatorname{sigmoid}\left({\bm W}_R {\bm H}^{(t+1)}+{\bm U}_R {\bm M}^{(t)}+{\bm B}_R\right) \\
& \widetilde{{\bm H}}^{(t+1)}=\tanh \left({\bm W}_H {\bm H}^{(t+1)}+{\bm U}_H\left({\bm R}_t \circ {\bm M}^{(t)}\right)+{\bm B}_H\right) \\
& {\bm M}^{(t+1)}=\left(1- {\bm Z}^{(t+1)}\right) \circ {\bm M}^{(t)}+ {\bm Z}^{(t+1)} \circ \widetilde{{\bm H}}^{(t+1)},
\end{align*}}
where ${\bm W}, {\bm U}, {\bm B} $ denote the trainable parameters of the linear layer and ${\bm H}^{(t+1)}$ can be updated using the $\text{MSG}(\cdot)$. Specifically,
TGCN \cite{tgcn} and EvolveGCN \cite{evolvegcn} and incorporate the Graph Convolutional Network (GCN)\cite{DBLP:conf/iclr/KipfW17} as the $\text{MSG}(\cdot)$ function to regenerate the node embeddings while coupling with an RNN-based module to learn temporal node representations. Similarly, MPNN \cite{MPNN} transforms the node embeddings at different time steps into an RNN module and then captures the long-range dependency in the final hidden state. This category of method generally requires $O(Km^{(t)}F)$ and $O(pn^{(t)}F^2)$ to update the node embeddings and states \footnote{For a clear presentation, we assume the dimension of node state $F' = F$}, respectively. Due to their iterative structure, RNN-based methods can efficiently update node states. However, the simplistic recurrence mechanism often leads to difficulties in retaining historical information, especially as the graph size and temporal scope expand \cite{graphssm, gers2000learning}.

\subsection{Attention-based Methods} To address the forgetting problem of RNNs, attention-based methods rely on the self-attention mechanism and abstain from using recurrence form, which encodes the position of sequences and enables the efficient information flow from past to current representations. We take the representative work APAN \cite{wang2021apan} as an example to demonstrate the core mechanism of this category.
Considering matrices ${\bm Q}\in \mathbb{R}^{n\times F}$ denoted as "query", ${\bm K}\in \mathbb{R}^{n\times F}$ denoted as "keys", and ${\bm V}^{n\times F}$ denoted as "values", the classical self-attention algorithms perform the following computation to obtain the optimized embeddings:
\begin{align*}
    &{\bm M}^{(t+1)} = \mathrm{Attn}({\bm Q},{\bm K},{\bm V})=\mathrm{softmax}\left(\frac{{\bm Q \bm K}^{\top}}{\sqrt{F}}\right){\bm V},\\
    &{\bm Q} = {\bm H}^{(t+1)} {\bm W}_q,  {\bm K} = {\bm M}^{(t)} {\bm W}_k,
    {\bm V} = {\bm M}^{(t)} {\bm W}_v,
\end{align*}
where ${\bm W}_q, {\bm W}_k, {\bm W}_ \in \mathbb{R}^{ F\times F'}$ are the network parameters. The dot-product term $\left(\frac{{\bm Q \bm K}^{\top}}{\sqrt{F}}\right)$ takes the role of weighting the interactions between entity "query-key" pairs. A higher value within this term increases the contribution of ${\bm V}$ to the embedding space. Thus, attention-based methods create the expressive attention score to capture the relationship between the current embedding and the state of the last time step. Following this intuition, DySat \cite{sankar2020dysat} employs the generalized GAT module \cite{velivckovic2018graph} to integrate the embeddings of a single node from different time steps to generate its refreshed one. ASTGCN \cite{ASTGCN} and DNNTSP \cite{yu2020predicting} further incorporate the self-attention mechanism to capture the spatial and temporal dependency for enhanced representation quality. For each time step, these methods generally require $O(T\left(n^{(t)}\right)^2F)$ time complexity to calculate the final output given $T$ time step.
While attention mechanisms can retain the most relevant parts of the sequences to avoid the forgetting issue, they can become computationally expensive with frequent updates \cite{thomasgraph}.

\textbf{Other methods.} (i) \textit{SNN-based methods.} Another typical mechanism of this category is based on the biological Spiking Neural Networks (SNNs), which simulate the brain behaviors and maintain the membrane potential given the data sequences. SpikeNet \cite{spikenet} retrieves the node embeddings from multiple time steps and finally generates the prediction results by the spike firing 
process. Dy-SIGN \cite{yin2024dynamic} incorporates SNNs mechanism into dynamic graphs to mitigate the information loss and memory consumption problem. Nevertheless, the demand for multiple simulation steps to generate reliable embeddings can significantly degrade the efficiency of these TGNN methods. (ii) \textit{SSM-based methods.} There are also some works which employs the SSM mechanism on temporal graphs.
For example, STG-Mamba \cite{li2024stg} formulates the feature stream of each node as the long-term context, which improves the embedding quality for feature-varying graphs. Graph-SSM \cite{graphssm} addresses the unobserved graph mutations between consecutive snapshots, and achieves an effective discretization with long-term information. However, these methods only focus on the discrete-time dynamic graph and fail
to model the continuous topology changes as the graph evolves. Furthermore, directly adapting these methods will incur significant computational overhead as the graph evolves, creating a gap between current algorithms and continuous prediction in practical dynamic scenarios. 

\subsection{State Space Models}
In recent years, structured State Space Models (S4) \cite{guefficiently, gu2021combining} have emerged as promising frameworks for modeling long-distance sequences, offering the advantage of only a linear increase in computational cost.
Given the input series $x(t) \in \mathbb{R}$, structured State Space Models (S4) \cite{guefficiently, gu2021combining} formulates the state variable series $h(t)\in \mathbb{R}$ and the output series $y(t)\in \mathbb{R}$ using the following equations:

{\footnotesize
\hspace{-0.03\textwidth}
\begin{minipage}{0.25\textwidth}
\begin{align}\label{equ:cont}
\left\{\begin{array}{cl}
h'(t) &= \mathcal{A}h(t) + \mathcal{B}x(t) \\
y(t) &= \mathcal{C}h(t), \end{array} \right.
\end{align}
\end{minipage}
\hspace{-0.01\textwidth}
\begin{minipage}{0.23\textwidth}
\begin{align}\label{equ:dis}
\left\{\begin{array}{cl}
h_t &= \bar{\mathcal{A}}h_{t-1} + \bar{\mathcal{B}}x_t\\
y_t &= \mathcal{C}h_t,
\end{array} \right .
\end{align}
\end{minipage}
}
where $(\mathcal{A}, \mathcal{B}, \mathcal{C}, \bar{\mathcal{A}}, \bar{\mathcal{B}})$ are trainable parameters.
Here Equ. \ref{equ:cont} and \ref{equ:dis} are the continuous and discretization process, respectively. Compared with the RNN-based or attention-based mechanism, S4's strength lies in its adherence to a linear mechanism, which guarantees enhanced stability control \cite{graphssm}. This facilitates effective long-term modeling of sequences through meticulous initialization of state space layer parameters \cite{gu2020hippo,orvieto2023resurrecting}. As a result, we employ the SSMs as our temporal-processing unit to meticulously balance the accuracy and efficiency of our {\sc Coden} framework.

\section{Methodology}

\subsection{Overview}
The goal of {\sc Coden} is to minimize the cost of learning the new node state ${\bm M}^{(t)}$ across a series of time steps. The realization of this goal relies on two key components: (a) a scalable algorithm to enable the aggregation of new neighborhood information (reflected in $\text{MSG}(\cdot)$); and (b) an efficient paradigm to enable the summarization of evolution information (reflected in $\text{UPDT}(\cdot)$).
For component (a), state-of-the-art methods such as DyGFormer \cite{yu2023towards}, EvolveGCN \cite{evolvegcn}, and ROLAND \cite{you2022roland} rely heavily on a large number of trainable parameters for neighborhood aggregation. Frequent updates necessitate continuous re-training of these parameters, resulting in significant computational overhead. For component (b), dynamic scenarios require the node states to  continuously summarize the evolving information. 
However, as highlighted in Tab. \ref{table:method_comparison}, existing methods struggle to balance accuracy and efficiency, leading to suboptimal performance in such cases.

To address these challenges, {\sc Coden} introduces a novel approach based on two key principles. First, it decouples graph propagation from the training process and incrementally integrates permutation information from $p$ updates to transition the node embedding from ${\bm H}^{(t)}$ to ${\bm H}^{(t+p)}$. This embedding is parameter-free, enabling scalable and efficient updates to the node state ${\bm M}^{(t+p)}$. Second, {\sc Coden} employs a lazy-sampling strategy to discretize the continuous update sequence, effectively managing the trade-off between accuracy and efficiency during the summarization of evolving information. We illustrate the design of {\sc Coden} with following four steps:

$\bullet$ \textbf{Step 1 - Incremental state update.} We simplify the mechanisms of existing TGNNs by incrementally integrating the SSM structure with parameter-free node embeddings, enabling a efficient refresh of evolving node-state information.

$\bullet$ \textbf{Step 2 - Efficient batch processing.} For efficient batch processing, we constrain the potential disruptions from updates on node embeddings and performs a lazy-sampling method to extract intermediate embeddings. Our theoretical analysis shows that the refreshed node states effectively compress historical graph information, backed by an approximation guarantee.

$\bullet$ \textbf{Step 3 - RNN-Attention duality.}
We formalize the temporal-processing mechanism of {\sc Coden} by showing that, under mild parameter constraints, its state-space update reduces to a gated recurrent form, while in the general case its unrolled dynamics are equivalent to masked kernel attention. This duality explains how {\sc Coden} compresses historical information by selectively masking redundant temporal signals.

$\bullet$ \textbf{Step 4 - Comparison with attention mechanism.} Building on the previous equivalence proof, we propose an alternative model that leverages the attention mechanism to enable a fair comparison. Theoretical analysis demonstrates that our model outperforms the alternative in both effectiveness and efficiency.

To this end, we demonstrate an efficient method for incrementally updating node embeddings while ensuring accuracy, which is strictly required by our theoretical analysis. The overall framework of Coden is illustrated in Fig. \ref{fig:overview}.
For the limitation of space, all missing proofs and the auxiliary materials can be found in the appendix.

\subsection{ {Efficient State Update}}\label{sec:ssm}
\textbf{PPR-based Node Embeddings.}
Existing RNN-based and attention-based methods are inclined to adopt the classical learnable encoder (e.g., GCN \cite{DBLP:conf/iclr/KipfW17}, GraphSage \cite{graphsage}, GAT \cite{velivckovic2018graph}) to generate node embeddings, which require costly retraining whenever the graph changes. To address this issue, we adopt the Personalized PageRank (PPR) based embedding, because it is more scalable in computation \cite{scara,luo2012disks, luo2019baton, IDOL} and it can also alleviate over-smoothing issues \cite{wu2022non}. Specifically, after setting ${\bm P}^{(t)} = {{\bm A}^{(t)}}^{\top}{{\bm D}^{(t)}}^{-1}$, we defined our PPR-based node embedding as following:
{\footnotesize
 
\begin{align*}
 {\bm Z}^{(t)} =\sum_{l=0}^{\infty} \alpha(1-\alpha)^l \left({\bm P}^{(t)}\right)^l  {\bm X}^{(t)} = \alpha \left(I-(1-\alpha){\bm P}^{(t)}\right)^{-1} {\bm X}^{(t)},
\end{align*}
}
 
where ${\bm P}^{(t)}$ is the normalized adjacency matrix at time $t$ and $\alpha$ is the teleport probability of random walks. To improve computational efficiency without sacrificing model effectiveness, we adopt the approximation ${\bm H}^{(t)}$ with the guarantee $||{\bm H}^{(t)} - {\bm Z}^{(t)}||_1 \leq n^{(t)}\epsilon$. For the clearness of presentation, we leave the basic calculation of our embedding and the incremental update algorithm in Appendix \ref{app:forward} and Sec. \ref{sec: incre_update} respectively.

\vspace{1mm}
\noindent
\textbf{From Embeddings to States.}
After obtaining the parameter-free node embeddings ${\bm {H}}^{(t)}$ at different timestamps, it is necessary to integrate the evolution information from these embedding sequences into the node states ${\bm {M}}^{(t)}$. However, existing mechanisms, such as RNNs and attention-based methods, often face challenges in balancing accuracy and efficiency, as highlighted in Tab. \ref{table:method_comparison}. Inspired by recent advancements in applying physical systems to sequence processing \cite{mamba}, we leverage the classical State Space Models (SSMs) \cite{kalman1960new} to effectively summarize rich temporal information across different time steps. Specifically, given an edge update $e_{t+1}$ at time $t+1$, we update the state matrix ${\bm M}^{(t+1)}(u)$ for each $u\in \mathcal{V}^{(t+1)}$ with the following equations:
{
\small
\begin{equation}\label{equ:Coden_all}
    {\bm M}^{(t+1)}  = \bar{\mathcal{A}} \cdot {\bm M}^{(t)} + \bar{\mathcal{B}} \cdot {\bm H}^{(t+1)},
\end{equation}
}
where $\bar{\mathcal{A}}, \bar{\mathcal{B}} \in \mathbb{R}^{F'\times F}$ are trainable parameters adhering to the standard formulations utilized in the literature \cite{mamba,guefficiently}. The prediction of time $t+1$ is then generated through an MLP layer as ${\bm Y}^{(t+1)} = \text{MLP}( {\bm M}^{(t+1}) $. Compared with the methods \cite{graphssm, li2024stg} applying SSMs on temporal graph tasks, our approach is the first to prioritize efficiency of continuous prediction in practical dynamic scenarios. By ensuring the precision of the node embeddings, our method achieves a significant efficiency advantage over current algorithms while maintaining comparable prediction accuracy. In the following sections, we will explore the theoretical effectiveness of {\sc Coden} (see Sec. \ref{sec: info compre} and \ref{sec:equ}) and the underlying rationale behind its efficiency gap (see Sec. \ref{sec:complexity}).

\begin{figure}[t]
    \centering
  \includegraphics[width=3.4in]{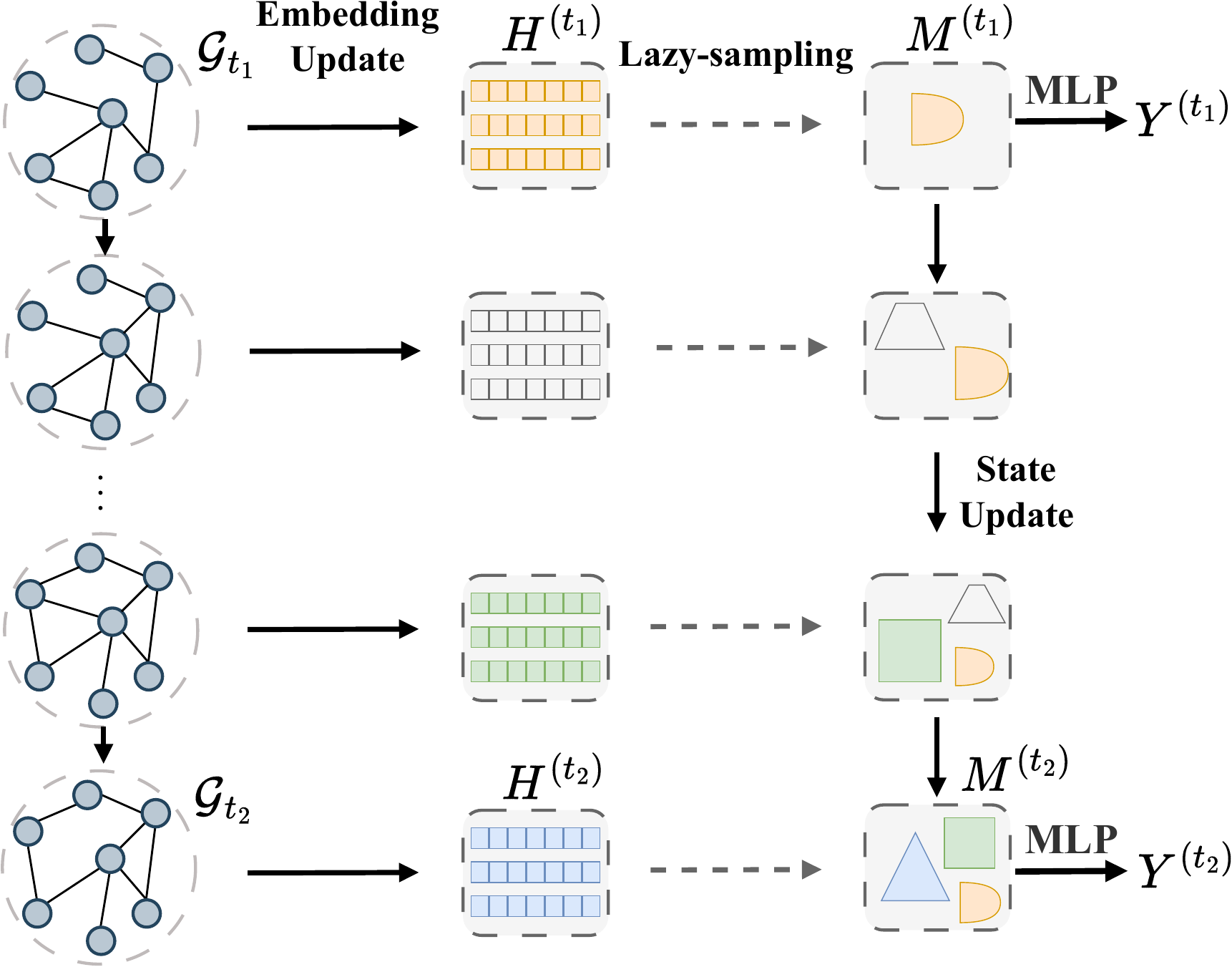}
   
    \caption{ {An illustration of the Coden framework. ${\bm M}^{(t)}$, ${\bm H}^{(t)}$ and ${\bm Y}^{(t)}$ denote the node state matrix, embedding matrix and the prediction results at time $t$, respectively.  }}
     
    \label{fig:overview}
\end{figure}

\subsection{Efficient Batch Processing } \label{sec: info compre}
In our previous discussions, we update the node states iteratively at each time step when new updates arrive, as outlined in Equ. \ref{equ:Coden_all}. However, updates in TGNNs often occur in batches, especially in update-dense dynamic 
scenario. In such cases, processing each update individually is associated with time complexity of 
$O(p)$ for $p$ updates, leading to prohibitive computation overhead. 
To address this challenge, we propose to sample intermediate embeddings across different snapshots. 

\vspace{1mm}
\noindent
\textbf{Lazy-sampling Strategy.} Since each update will only cause a minor change on the embedding space \cite{zheng2022instant, IDOL}, we consider to skip the snapshots until there is a significant shift in the embedding space of ${\bm H}^{(t)}$. However, monitoring the shift of ${\bm H}^{(t)}$ requires sequential updates with each new arrival in the system, which significantly reduces efficiency. Therefore, we provide the following upper bound of the shift between ${\bm H}^{(t)}$ and ${\bm H}^{(t+1)}$ given an update at $t+1$ to estimate the potential shift and reduce the computational overhead:

\begin{lemma}\label{lemma:single shift}
    If there exists an edge update at time $t+1$ and $||{\bm H}^{(t)} - {\bm Z}^{(t)}||_1 \leq n^{(t)}\epsilon$ holds for  time $t$, the difference between ${\bm H}^{(t)}$ and ${\bm H}^{(t+1)}$ satisfies:
    {\footnotesize
    \begin{align}\label{equ:single diff}
      ||{\bm H}^{(t+1)} - {\bm H}^{(t)}||_1 \leq  \frac{1-\alpha}{\alpha}\left\lVert\left( {\bm P}^{(t+1)} - {\bm P}^{(t)}\right)\cdot{\bm x}_{max}\right\rVert_1 +2n^{(t)}\epsilon,
    \end{align}}
    where ${\bm x}_{max}$ is the row-wise maximum absolute value vector and the $i$-th entry of ${\bm x}_{max}$ is defined as: $\{{\bm x}_{max}\}_i = \max_{1\leq j\leq F}|{\bm X}^{(t)}_{ij}|$.
\end{lemma}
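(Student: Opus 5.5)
Our plan is to pass through the exact embeddings ${\bm Z}^{(t)}$ and ${\bm Z}^{(t+1)}$ and apply the triangle inequality:
\[
\|{\bm H}^{(t+1)}-{\bm H}^{(t)}\|_1 \le \|{\bm H}^{(t+1)}-{\bm Z}^{(t+1)}\|_1 + \|{\bm Z}^{(t+1)}-{\bm Z}^{(t)}\|_1 + \|{\bm Z}^{(t)}-{\bm H}^{(t)}\|_1 .
\]
The two outer terms are approximation errors. The hypothesis bounds the last one by $n^{(t)}\epsilon$. For the first, we use that the incremental algorithm maintains the same guarantee at time $t+1$ with $n^{(t+1)}=n^{(t)}=n$, since only edges change. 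Together they give the additive $2n^{(t)}\epsilon$. The remaining work is to bound the exact PPR shift $\|{\bm Z}^{(t+1)}-{\bm Z}^{(t)}\|_1$.

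For this middle term, we write $\Pi^{(t)}=\alpha(I-(1-\alpha){\bm P}^{(t)})^{-1}$ and use the resolvent identity
\[
\Pi^{(t+1)}-\Pi^{(t)} = \frac{1-\alpha}{\alpha}\,\Pi^{(t+1)}\big({\bm P}^{(t+1)}-{\bm P}^{(t)}\big)\Pi^{(t)}.
\]
It follows from $(I-(1-\alpha){\bm P}^{(t+1)})^{-1}-(I-(1-\alpha){\bm P}^{(t)})^{-1}=(I-(1-\alpha){\bm P}^{(t+1)})^{-1}(1-\alpha)({\bm P}^{(t+1)}-{\bm P}^{(t)})(I-(1-\alpha){\bm P}^{(t)})^{-1}$, with the two factors of $\alpha$ accounted for. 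Hence
\[
{\bm Z}^{(t+1)}-{\bm Z}^{(t)}=\tfrac{1-\alpha}{\alpha}\Pi^{(t+1)}\Delta{\bm P}\,{\bm Z}^{(t)},\qquad \Delta{\bm P}={\bm P}^{(t+1)}-{\bm P}^{(t)}.
\]
Since ${\bm P}={\bm A}^\top{\bm D}^{-1}$ is column-stochastic (assuming no dangling nodes, or treating them as in the appendix), $\Pi^{(t+1)}$ is column-stochastic and nonnegative. So $\|\Pi^{(t+1)}{\bm v}\|_1\le\|{\bm v}\|_1$ for every vector ${\bm v}$, and the left factor can be dropped.

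The main obstacle is controlling $\Delta{\bm P}\,{\bm Z}^{(t)}$ entrywise by $\Delta{\bm P}\,{\bm x}_{max}$, with the $\ell_1$ norm taken as the entrywise sum over all $F$ columns.

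First, each entry satisfies $|{\bm Z}^{(t)}_{ij}|\le \{\Pi^{(t)}{\bm x}_{max}\}_i$. The issue is that the bound must not carry the propagation $\Pi^{(t)}$. We would argue that only the columns of $\Delta{\bm P}$ for the updated source node $u$ are nonzero. Then $\Delta{\bm P}\,{\bm Z}^{(t)}$ equals the column $\Delta{\bm P}_{:,u}$ times the row ${\bm Z}^{(t)}_{u,:}$, and we need $|{\bm Z}^{(t)}_{uj}|$ bounded by the local quantity $\{{\bm x}_{max}\}_u$.

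This is where the argument is delicate. A PPR value is a convex combination of features, so it is at most $\max_k\{{\bm x}_{max}\}_k$ but not necessarily at most $\{{\bm x}_{max}\}_u$. If it does not hold as stated, we would first try to absorb the discrepancy into the $2n\epsilon$ slack, or use the convention implicit in the paper's ${\bm x}_{max}$, for example normalized features. Failing that, we would interpret the bound as $\frac{1-\alpha}{\alpha}\|\Delta{\bm P}\,|{\bm Z}^{(t)}|\|_1$ and then bound $|{\bm Z}^{(t)}|\le{\bm x}_{max}$ rowwise under the assumption that features are propagated with row-normalized, per-node-bounded magnitudes.

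Finally, we sum over the $F$ columns. To match the stated bound with a single vector ${\bm x}_{max}$, we would read $\|\cdot\|_1$ on the right-hand side as matching the per-column maximum, so that combining the three pieces yields \eqref{equ:single diff}.
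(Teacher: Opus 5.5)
\textbf{Same route, same obstruction.} Your route is the paper's route. You pass through ${\bm Z}^{(t)},{\bm Z}^{(t+1)}$ by the triangle inequality to get the $2n\epsilon$ term; like you, the paper tacitly also needs the guarantee at time $t+1$. You then apply the resolvent identity; the paper's $({\bm I}+c{\bm L}^{(t)})^{-1}$ with $c=\frac{1-\alpha}{\alpha}$ is exactly your $\Pi^{(t)}$. Finally you remove the left factor $\Pi^{(t+1)}$ by column-stochasticity. Up to
\[
\lVert{\bm Z}^{(t+1)}-{\bm Z}^{(t)}\rVert_1\le\tfrac{1-\alpha}{\alpha}\,\lVert\Delta{\bm P}_{:,u}\rVert_1\max_{j}|{\bm Z}^{(t)}_{uj}|
\]
your argument is sound. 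This requires $\lVert\cdot\rVert_1$ to be the max-column-sum norm $\max_f\sum_u$ used by the paper; with the entrywise sum over all $F$ columns, a single vector on the right would cost you a factor $F$.

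The gap is exactly the step you flag, and it cannot be closed. The paper gets past it only by deleting the inner factor $({\bm I}+c{\bm L}^{(t)})^{-1}$ from its chain of inequalities. That is illegitimate: $\Delta{\bm P}$ reads only the entry $(\Pi^{(t)}{\bm x})(u)$, which is fed by $u$'s in-neighbourhood and is not controlled by ${\bm x}(u)$.

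Your patches fail as well:
\begin{itemize}
\item The discrepancy is of order one, not $O(\epsilon)$, so the $2n\epsilon$ slack cannot absorb it.
\item Your claim that PPR values are convex combinations of features is false for ${\bm P}={\bm A}^{\top}{\bm D}^{-1}$. Here $\Pi^{(t)}$ is column-stochastic but not row-stochastic. For example, a hub joined in both directions to $m$ leaves, with ${\bm X}=\mathbf{1}$, gets value $(1+(1-\alpha)m)/(2-\alpha)$. So neither feature normalization nor a global $\lVert{\bm x}_{max}\rVert_\infty$ bound rescues the step.
\end{itemize}

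\textbf{The statement is false.} Take $F=1$, nodes $w,u,a,v$, edges $w\to u$, $u\to a$, $a\to w$, $v\to w$, features ${\bm x}={\bm e}_w$, and insert $u\to v$ at time $t+1$. Then $\Delta{\bm P}=\frac12({\bm e}_v-{\bm e}_a){\bm e}_u^{\top}$ and $\{{\bm x}_{max}\}_u=0$, so the right-hand side is $2n\epsilon=8\epsilon$.

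Use exact embeddings ${\bm H}={\bm Z}$, which satisfy the hypothesis. The values ${\bm z}(w)=\alpha/(1-(1-\alpha)^3)$ and ${\bm z}(u)=(1-\alpha){\bm z}(w)$ are unchanged by the insertion. However, the mass $(1-\alpha){\bm z}(u)$ previously sent to $a$ is now split equally between $a$ and $v$. Hence $\lVert{\bm H}^{(t+1)}-{\bm H}^{(t)}\rVert_1=(1-\alpha){\bm z}(u)\approx 0.26$ for $\alpha=0.2$, independent of $\epsilon$. The inequality therefore fails whenever $\epsilon<0.03$.

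\textbf{What can be proved.} Your third fallback, stopped before its last step, is the correct repair. Completing your argument proves the bound with ${\bm x}_{max}$ replaced by ${\bm z}_{max}$, where $\{{\bm z}_{max}\}_i=\max_j|{\bm Z}^{(t)}_{ij}|$. A computable version uses ${\bm h}_{max}+\epsilon\mathbf{1}$, where $\{{\bm h}_{max}\}_i=\max_j|{\bm H}^{(t)}_{ij}|$, via the entrywise Forward Push guarantee. This is still one sparse column times a scalar, so it serves the lazy-sampling purpose equally well.
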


Interestingly, the R.H.S. of Equ~\ref{equ:single diff} in Lemma \ref{lemma:single shift} can be calculated efficiently as 
it only involves multiplication between a sparse matrix $\left( {\bm P}^{(t+1)} - {\bm P}^{(t)}\right)$ and vector ${\bm x}_{max}$. 
We set $\epsilon$ to $O(\frac{1}{(n^{(t)})^2})$ to ensure a small uppper bound.
Since $||{\bm H}^{(t+p)} - {\bm H}^{(t)}||_1 \leq \sum_{i=0}^{p-1} ||{\bm H}^{(t+i+1)} - {\bm H}^{(t+i)}||_1$, we can accumulate the upper bound as indicated in Equ. \ref{equ:single diff} to further estimate the shift between ${\bm H}^{(t)}$ and ${\bm H}^{(t+p)}$ given a batch containing $p$ updates.


We incorporate this lazy-sampling strategy in {\sc Coden}, which is presented in Alg. \ref{alg:Coden} starting with $\mathcal{G}^{(0)}$. First, for each prediction time $t_{k+1} \in T$, we collect each update event happening between $t_{k}$ and $t_{k+1}$. Then the upper bound for each edge update will be calculated and accumulated
in $\sigma$ (line \ref{line: accumu}), which aims to delay the sampling of embeddings until the accumulated error exceeds a certain control error $\lambda$. Subsequently, we run the algorithm described in Sec. \ref{sec:ssm} to efficiently update the embeddings (line \ref{line:update}).
Ultimately, we output the sampled embedding matrices and use them as input for the SSM model described in Equ. \ref{equ:Coden_all} (line \ref{line:ssm}). The prediction of time $t_{k+1}$ is then generated through an MLP layer as ${\bm Y}^{(t_{k+1})} = \text{MLP}( {\bm M}^{(t_{k+1})}) $.

\vspace{1mm}
\noindent
\textbf{Information Compression by Lazy-sampling}. As {\sc Coden} involves omitting certain evolving information during the sampling process, it is crucial to assess its influence on the model's performance. By leveraging the upper bound on embedding distance, we effectively control the extent of information loss from the evolving dynamics. Interestingly, 
we observe that {\sc Coden} is equally to compress the evolving information between time $t_{k}$ and $t_{k+1}$ of the graph into its state matrix ${\bm M}^{(t_{k+1})}$, preserving the essential temporal dependencies within the graph. 

To simplify the illustration, we assume that there are two sampled embeddings between the prediction time $t_{k}$ and $t_{k+1}$ and the corresponding sampling time stamps are denoted as $\tau_1$ and $\tau_2$ ($t_{k} \leq \tau_1 \leq \tau_2 \leq t_{k+1}$). 
Then the distance of two adjacent sampled embedding using the lazy-sampling process are bounded such that $||{\bm H}^{(\tau_1)}-{\bm H}^{(\tau_2)}||\leq \lambda$.
Note that there are still multiple edge updates between $\tau_1$ and $\tau_2$, where the intermediate embeddings formed by these updates are omitted to enhance efficiency.
 Specifically, assuming there exist $p_{s}$ edges between two consecutively sampled embeddings ${\bm H}^{(\tau_1)}$ and ${\bm H}^{(\tau_2)}$, we demonstrate that {\sc Coden} can effectively incorporate these 
$p_{s}$ omitted embeddings as inputs to the SSM, as detailed in the following proposition:
\begin{proposition}\label{pro:approx}
    If there are $p_{s}$ edges between the sampling time $\tau_1$ and $\tau_2$ and $||{\bm H}^{(t)} - {\bm Z}^{(t)}||_1 \leq n^{(t)}\epsilon$ holds for each time step $t$, then the input ${\bm H}^{(\tau_2)}$ saved in ${\bm M}^{(\tau_2)}$ can be regarded as the approximation of the normalized summation of all exact PPR-based embeddings between $\tau_1$ and $\tau_2$ such that:
{\small
 
\begin{align*}
    \left|\left|{\bm H}^{(\tau_2)} - \sum_{i=1}^{p_s}\sum_{l=0}^{\infty} \alpha\left((1-\alpha)^l {\bm P}^{(\tau_1+i)}\right)^l {\bm X}^{(\tau_1+i)} {\Lambda}_{i}\right|\right|_1 \leq \lambda, 
\end{align*}
}
where $(\tau_1+i)$ is the time step of the $i$-th edge between $\tau_1$ and $\tau_2$ and $\Lambda_i\in \mathbb{R}^{F\times F}$ is the hidden non-negative diagonal matrices saved in $\bar{\mathcal{A}}$ which satisfy $\sum_{i=1}^{p_s}\Lambda_i={\bm I}$.
\end{proposition}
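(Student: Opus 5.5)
The strategy is to write the target sum as a convex combination of the exact embeddings $\bm Z^{(\tau_1+i)}$ and to compare it with $\bm H^{(\tau_2)}$ column by column. First I read the inner term $\sum_{l}\alpha((1-\alpha)^l\bm P^{(\tau_1+i)})^l\bm X^{(\tau_1+i)}$ as the exact PPR embedding $\bm Z^{(\tau_1+i)}$ defined in Sec.~\ref{sec:ssm}; I take the misplaced exponent to be a typo for $\alpha(1-\alpha)^l(\bm P^{(\tau_1+i)})^l$. The target is then $\sum_{i=1}^{p_s}\bm Z^{(\tau_1+i)}\Lambda_i$, where each $\Lambda_i$ is a non-negative diagonal matrix acting on the $F$ feature columns and $\sum_i\Lambda_i=\bm I$. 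Write $\Lambda_i=\mathrm{diag}(\lambda_{i,1},\dots,\lambda_{i,F})$. For each fixed column $j$ the weights $\lambda_{i,j}$ form a convex combination over $i=1,\dots,p_s$. Since $\sum_i\Lambda_i=\bm I$, I can write $\bm H^{(\tau_2)}=\sum_i\bm H^{(\tau_2)}\Lambda_i$. The difference then becomes
\begin{align*}
\bm H^{(\tau_2)}-\sum_{i=1}^{p_s}\bm Z^{(\tau_1+i)}\Lambda_i=\sum_{i=1}^{p_s}\left(\bm H^{(\tau_2)}-\bm Z^{(\tau_1+i)}\right)\Lambda_i .
\end{align*}

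Next I bound each term. By the triangle inequality, $\|\bm H^{(\tau_2)}-\bm Z^{(\tau_1+i)}\|_1\le\|\bm H^{(\tau_2)}-\bm H^{(\tau_1+i)}\|_1+\|\bm H^{(\tau_1+i)}-\bm Z^{(\tau_1+i)}\|_1$. The second piece is at most $n\epsilon$ by the standing approximation hypothesis. For the first piece I use the telescoping bound stated after Lemma~\ref{lemma:single shift}: the distance is at most the sum of the single-step bounds of Equ.~\ref{equ:single diff} for the updates between $\tau_1+i$ and $\tau_2$. That sum is a sub-sum of the quantity $\sigma$ accumulated by the lazy-sampling rule. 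Because $\tau_2$ is the first time at which $\sigma$ reaches $\lambda$ after $\tau_1$, every partial accumulation over the window is controlled by $\lambda$, which is also the design statement $\|\bm H^{(\tau_1)}-\bm H^{(\tau_2)}\|\le\lambda$. Using the entrywise $\ell_1$ norm, column scaling by a diagonal $\Lambda_i$ with entries in $[0,1]$ gives
\begin{align*}
\Big\|\sum_i \bm E_i\Lambda_i\Big\|_1\le\sum_j\sum_i\lambda_{i,j}\|(\bm E_i)_{:,j}\|_1\le\max_i\|\bm E_i\|_1 .
\end{align*}
Here the convexity of the weights in each column is what avoids any factor of $p_s$. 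The result is a bound of the form $\lambda'+n\epsilon$, where $\lambda'$ is the accumulated shift.

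The main obstacle is the bookkeeping needed to get exactly $\lambda$ rather than $\lambda+O(n\epsilon)$. The shift bounds from Lemma~\ref{lemma:single shift} each carry a $2n\epsilon$ term, and the approximation error adds one more $n\epsilon$. My plan is to absorb these into the threshold. The accumulated $\sigma$ already contains the $2n\epsilon$ terms, and with $\epsilon=O(1/n^2)$ the extra $n\epsilon$ is negligible. So I would define the sampling condition, or interpret $\lambda$ in Alg.~\ref{alg:Coden}, so that the last accumulated increment plus $n\epsilon$ stays within $\lambda$. Equivalently, I would sample just before the threshold is crossed.

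The remaining step is the interpretive claim that the $\Lambda_i$ are ``saved in $\bar{\mathcal A}$''. For this I would unroll Equ.~\ref{equ:Coden_all} over the skipped steps: a full per-update recursion would feed $\sum_i\bar{\mathcal A}^{p_s-i}\bar{\mathcal B}\bm H^{(\tau_1+i)}$. Taking $\bar{\mathcal B}$ into account, one can then posit diagonal normalized weights realizing this combination. Since the bound above holds for every admissible family $\{\Lambda_i\}$, this identification is not needed for the inequality itself.
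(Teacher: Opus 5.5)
Your decomposition is the paper's own first step. The paper's proof also multiplies the embedding at $\tau_2$ by $\sum_k\Lambda_k=\bm I$ and bounds each $(\cdot)\Lambda_k$ term by the accumulated graph change, though it does so through the resolvent identity on $(\bm I+c\bm L)^{-1}\bm X$ rather than by telescoping Lemma~\ref{lemma:single shift}. Your version is sharper in two ways. You keep the term $\|\bm H^{(\tau_2)}-\bm Z^{(\tau_2)}\|_1\le n\epsilon$, which the paper silently drops because it compares $\bm Z^{(\tau_2)}$, not $\bm H^{(\tau_2)}$, with the mixture. You also use the column-wise convexity of the $\Lambda_i$ to get a maximum over $i$, whereas the paper, after pulling $\|\sum_k\Lambda_k\|$ out of the sum, is left with a sum of $p_s$ suffix shifts.

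However, your convexity inequality is false in the entrywise $\ell_1$ norm you chose. Take $F=p_s=2$, $\Lambda_1=\mathrm{diag}(1,0)$, $\Lambda_2=\mathrm{diag}(0,1)$, and let $\bm E_1$, $\bm E_2$ be supported on columns $1$ and $2$ respectively, each with unit entrywise norm. Then $\|\bm E_1\Lambda_1+\bm E_2\Lambda_2\|_1=2>1=\max_i\|\bm E_i\|_1$. In general the entrywise norm only gives $\sum_j\max_i\|(\bm E_i)_{:,j}\|_1$, which loses up to a factor $F$. Use the max-column-sum norm instead. It is the paper's $\|\cdot\|_1$ (first line of the proof of Lemma~\ref{lemma:single shift}) and the norm in which your per-edge bounds hold. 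In that norm, $\|\sum_i\bm E_i\Lambda_i\|_1\le\max_j\sum_i\lambda_{i,j}\|(\bm E_i)_{:,j}\|_1\le\max_i\|\bm E_i\|_1$ is valid.

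The substantive gap is the claim that every partial accumulation over the window is controlled by $\lambda$. Let $\delta_k$ be the increment Alg.~\ref{alg:Coden} adds to $\sigma$ for the $k$-th edge of the window. When sampling is triggered by the threshold rather than by a prediction time, the rule samples at the first $p_s$ with $\sum_{k=1}^{p_s}\delta_k>\lambda$. So you only know $\sum_{k=1}^{j}\delta_k\le\lambda$ for $j<p_s$.

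The quantities you actually need are the suffix sums $\sum_{k=i+1}^{p_s}\delta_k$. Every one of them with $i<p_s$ contains the triggering increment $\delta_{p_s}$, and nothing bounds that by $\lambda$; a single edge out of a low-degree node can exceed $\lambda$ on its own. For Alg.~\ref{alg:Coden} as written, your argument therefore proves only $\lambda-\delta_1+\delta_{p_s}+n\epsilon\le\lambda+\delta_{p_s}$.

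The obstacle is therefore not the $O(n\epsilon)$ bookkeeping you flagged. That part is free, because your suffix sums never contain $\delta_1\ge 2n\epsilon$. The obstacle is $\delta_{p_s}$. Your fallback of sampling before the threshold is crossed means assuming $\sum_{k=1}^{p_s}\delta_k\le\lambda$ for the whole window. That does close the proof with exactly $\lambda$, with no $\lambda-n\epsilon$ adjustment needed. But it is a change to the algorithm or hypothesis, and you should state it as one.

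The endpoint condition $\|\bm H^{(\tau_1)}-\bm H^{(\tau_2)}\|\le\lambda$, which you treat as equivalent, would not suffice either. It says nothing about the intermediate $\bm H^{(\tau_1+i)}$; consider an edge inserted and then deleted inside the window. The paper's proof has the same hole: its final step $c\sum_k\|(\bm L^{(\tau_2)}-\bm L^{(\tau_1+k)})\bm X\|\le\lambda$ is asserted without justification, and that sum dominates your maximum.
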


Based on this proposition, the state update process is equivalent to inputting the normalized summation of the exact PPR-based embeddings between $\tau_1$ and $\tau_2$ into the SSM module. Specifically, the hidden diagonal matrices $\Lambda_i$ ($1\leq i \leq p_s$) will be implicitly modified during the training of the parameter $\bar{\mathcal{A}}$, which will in turn influence the amount of information retained in future time windows \cite{graphssm}. In this scenario, we effectively compress the continuously evolving process, selectively retaining information that enhances prediction capabilities for downstream tasks. In Sec. \ref{sec:equ}, we will further reinforce this key insight by demonstrating its equivalence with attention-based algorithms.

 {
\subsection{Embedding Update Guaranteeing
\texorpdfstring{$\lVert \bm{H}^{(t)} - \bm{Z}^{(t)}\rVert_1 \le n^{(t)}\epsilon$}
{||H(t)-Z(t)||_1 <= n(t) epsilon}}
\label{sec: incre_update}
As mentioned, our objective is to efficiently propagate frequent updates into node embeddings from the preceding time step. The key rationale is that an edge update typically affects only a local neighborhood, so most node states remain nearly unchanged \cite{DBLP:journals/tkde/MoL23}. Motivated by this observation, single-snapshot methods such as Instant \cite{zheng2022instant} and IDOL \cite{IDOL} update embeddings locally on the affected nodes, and we incorporate these local-update strategies into our framework as the additional technique to accelerate state refresh after each update. As a result, our framework maintains the approximation condition in Lemma~\ref{lemma:single shift} and Proposition~\ref{pro:approx}, namely $\lVert {\bm H}^{(t)}-{\bm Z}^{(t)} \rVert_1 \le n^{(t)}\epsilon$ for a given $\epsilon$ at every time step. More details are provided in Appendix~\ref{sec: incremental embed}.
}

\IncMargin{1em}
\setlength{\textfloatsep}{0.5pt}%
\begin{algorithm}[tb]
\small
   \SetKwInOut{Input}{Input}\SetKwInOut{Output}{Output}
   \Input{Initial Graph $\mathcal{G}^{(0)} = (\mathcal{V}^{(0)},\mathcal{E}^{(0)})$, initial embedding ${\bm H}^{(0)}$, update set $\{(u_1,v_1), (u_2,v_2), ...., (u_t,v_t), ...\}$, prediction time $T = \{t_1, t_2, ...., t_k, ...\}$.}
   
   \Output{Prediction result  ${\bm Y}^{(t_{k+1})}$ for $k = 0, 1, 2, 3, ...$}

$\sigma = 0$; $\Gamma = \emptyset$;  $t= 0$; $k= 0$ \; 

\For{$k = 0, 1, 2, 3, ...$}{
\ForEach{update $(u_t,v_t)$ between $t_{k}$ and $t_{k+1}$}{
$\sigma+ = \frac{1-\alpha}{\alpha}\left\lVert\left( {\bm P}^{(t)} - {\bm P}^{(t-1)}\right)\cdot{\bm x}_{max}\right\rVert_1 +2n\epsilon$\; $\Gamma.\text{add}((u_t,v_t))$; $t+=1$\;\label{line: accumu}

\If{$\sigma > \lambda$ or $t=t_{k+1}$}{
\ForEach{$ {\bm h}^{(t-|\Gamma|)}\in {\bm H}^{(t-|\Gamma|)}$ \textbf{in parallel}}{
$\label{line:update}\textit{Embedding Update}(\mathcal{G}^{(t-|\Gamma|)}, \Gamma, {\bm h}^{(t-|\Gamma|)}, $\\$ {\bm P}^{(t-|\Gamma|)}), {\bm P}^{(t)})$;
$\sigma = 0$; $\Gamma = \emptyset$\; 
}

${\bm M}^{(t)} = \bar{\mathcal{A}} \cdot {\bm M}^{(t-|\Gamma|)} + \bar{\mathcal{B}} \cdot {\bm H}^{(t)}$\;\label{line:ssm}}}
${\bm Y}^{(t_{k+1})} = \text{MLP}( {\bm M}^{(t_{k+1})}) $\; 
}

\caption{ {{\sc Coden} on Continous Prediction}}\label{alg:Coden}
\end{algorithm}\DecMargin{1em}
 

\subsection{Complexity Analysis}\label{sec:complexity}

In this section, we analyze the time complexity of {\sc Coden}, which is dominated by two processes: the operation in the initial propagation, the complexity of updating node embeddings, and the complexity of updating node states. We will analyze the complexity under two edge arrival patterns: (i) \textit{the average case:} edges arriving randomly, with each node having an equal probability of being the starting node, and (ii) \textit{the worst case:} 
 edges arriving randomly, but with varying probabilities for each node to be the starting node.

\vspace{1mm}
\noindent
\textbf{Complexity of updating node embeddings.}
Given the attribute vector ${\bm x}^{(0)}$, the complexity for propagating this vector using \textit{Forward Push} with error $\epsilon$ can be bounded by $O(\|{\bm x}^{(0)}\|_1/\alpha\epsilon)$ according to \cite{scara, andersen2006local}. We show our results of complexity analysis as follows:
\begin{lemma}\label{lemma:emb complexity}
    Assuming there exist $p$ edge updates during the prediction time $t_k$ and $t_{k+1}$, the complexity of updating node embeddings from $t_k$ to $t_{k+1}$ is $O\left(\frac{p}{\epsilon n^{(t_k)}}\sum_{i=1}^F{||{\bm x}_i^{(t_k)}||_1}\right)$ under pattern (i)  and $O\left(\frac{p}{\epsilon }\sum_{i=1}^F{\max_{u\in \mathcal{V}^{(t)}}|{\bm x}_i^{(t_k)}(u)|}\right)$ under pattern (ii).
\end{lemma}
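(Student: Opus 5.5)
The plan follows the standard analysis of local-update PPR algorithms such as Instant~\cite{zheng2022instant} and IDOL~\cite{IDOL}, which the embedding update of Sec.~\ref{sec: incre_update} incorporates. The strategy has three parts:
\begin{itemize}
\item bound the cost of a single edge update for a single feature dimension $i$;
\item sum over the $F$ dimensions;
\item multiply by the $p$ updates between $t_k$ and $t_{k+1}$.
\end{itemize}

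\textbf{Step 1: cost of one update as a push cost.} For a fixed dimension $i$ we maintain a reserve ${\bm h}$ and a residue ${\bm r}$ satisfying the forward-push invariant. When edge $(u,v)$ is toggled, only the transition column of $u$ changes, so restoring the invariant changes only the residues of $u$ and of its neighbors. The magnitude of this change is proportional to ${\bm h}(u)/d_{out}(u)$, and its total $\ell_1$ mass is $O({\bm h}(u)/\alpha)$.

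By the Forward Push bound $O(\|{\bm r}\|_1/(\alpha\epsilon))$ quoted above (from \cite{scara, andersen2006local}), the push work needed to bring all residues back below $\epsilon$ is $O\big({\bm h}(u)/(\alpha\epsilon)\big)$ per dimension, up to the residue already below threshold. Treating $\alpha$ as a constant, the cost of one update for dimension $i$ is $O({\bm h}_i(u)/\epsilon)$. Since ${\bm h}_i$ approximates ${\bm z}_i=\alpha(I-(1-\alpha){\bm P})^{-1}{\bm x}_i$, we have ${\bm h}_i(u)\le {\bm z}_i(u)+\epsilon$.

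\textbf{Step 2: averaging over the starting node.} The two arrival patterns are handled separately.
\begin{itemize}
\item Under pattern (i), $u$ is uniform over $\mathcal{V}^{(t_k)}$. The expected cost is therefore $\frac{1}{n}\sum_u {\bm z}_i(u)/\epsilon$. Because ${\bm P}$ is column-stochastic, the PPR operator preserves $\ell_1$ mass on nonnegative vectors (absolute values in general), so $\|{\bm z}_i\|_1\le \|{\bm x}_i\|_1$. This gives $O\big(\|{\bm x}_i^{(t_k)}\|_1/(\epsilon n^{(t_k)})\big)$.
\item Under pattern (ii), no averaging is available, so we bound ${\bm z}_i(u)$ pointwise. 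Each entry of ${\bm z}_i$ is a convex combination of entries of ${\bm x}_i$, because the rows of $\alpha\sum_l (1-\alpha)^l {\bm P}^l$ sum to at most $1$ when columns are normalized appropriately. This gives ${\bm z}_i(u)\le \max_{w}|{\bm x}_i(w)|$, and the cost is $O\big(\max_w|{\bm x}_i^{(t_k)}(w)|/\epsilon\big)$.
\end{itemize}
Summing over $i=1,\dots,F$ and over the $p$ updates gives both claimed bounds. We use ${\bm x}^{(t_k)}$ throughout, since features are fixed between prediction times and only edges change.

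\textbf{Main obstacle.} I expect the hardest part to be the pointwise bound in pattern (ii). With ${\bm P}={\bm A}^\top{\bm D}^{-1}$, the matrix is column-stochastic rather than row-stochastic, so ${\bm z}_i(u)$ is not literally an average of entries of ${\bm x}_i$: a node with many in-neighbors can accumulate mass. I would first try to state the bound in terms of the residue injected by the update, which scales as ${\bm h}(u)/d(u)$ per affected neighbor, and use the degree normalization to recover a bound of order $\max|{\bm x}_i|$. Failing that, I would adopt the reverse-push convention of IDOL~\cite{IDOL}, where the relevant quantity is genuinely an average.

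A secondary subtlety is that successive updates interact through residues left over from earlier pushes. I would handle this by an amortization argument: the leftover residue is at most $\epsilon$ per node, and its cost is charged to the initial propagation.
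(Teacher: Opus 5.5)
Your route is the paper's: bound the residue injected by one toggle at $u$ by $O(|{\bm h}(u)|)$ (the paper bounds $\|(1-\alpha)\Delta{\bm P}\,{\bm h}^{(t)}\|_1\le 2(1-\alpha)|{\bm h}^{(t)}(u)|$), and charge $\|{\bm r}\|_1/(\alpha\epsilon)$ push work. Then replace ${\bm h}(u)$ by the exact PPR value ${\bm \phi}(u)$ up to $\epsilon$. For pattern (i) you average over a uniform $u$ using $\|{\bm \phi}_i\|_1\le\|{\bm x}_i\|_1$; for pattern (ii) you bound pointwise; finally you multiply by $F$ and $p$. Pattern (i) is fine as you wrote it.

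The gap is in pattern (ii), where your convex-combination argument fails under the paper's convention. Write $\pi(w,u)$ for the probability that an $\alpha$-terminated walk from $w$ stops at $u$. Then ${\bm\phi}_i(u)=\sum_w\pi(w,u){\bm x}_i(w)$. Because ${\bm P}={\bm A}^{\top}{\bm D}^{-1}$ is column-stochastic rather than row-stochastic, $\sum_w\pi(w,u)$ is unbounded. For example, if $k$ nodes each have a single out-edge pointing to $u$, then with ${\bm x}_i=\mathbf 1$ you get ${\bm\phi}_i(u)\ge\alpha(1-\alpha)k$.

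Your first fallback repairs this only for undirected graphs. The injected mass is actually $2(1-\alpha)|{\bm h}(u)|/|\mathcal N^{(t+1)}_{out}(u)|$, and reversibility $\pi(w,u)/d(u)=\pi(u,w)/d(w)$ (with $d(\cdot)$ the degree) gives ${\bm\phi}_i(u)/d(u)\le\max_w|{\bm x}_i(w)|$. In the directed example, however, $u$ can have out-degree $1$ before the insertion. The push-cost estimate then only yields $O(k/\epsilon)$ per update, i.e.\ $O(n/\epsilon)$ for $k=\Theta(n)$, rather than $O(\max_w|{\bm x}_i(w)|/\epsilon)=O(1/\epsilon)$. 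Your reverse-push fallback would analyse a different procedure, since Algorithm~\ref{alg:increment} runs Forward Push.

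To finish the argument you need either an extra hypothesis (undirected graphs, or $\max_u\sum_w\pi(w,u)=O(1)$) or a bound stated in terms of $\max_u|{\bm\phi}_i(u)|$. The paper's proof takes exactly this step without justification, asserting $\frac{2(1-\alpha)|{\bm\phi}^{(t)}(u)|}{\alpha\epsilon}\le\frac{2(1-\alpha)\max_u|{\bm x}^{(t)}(u)|}{\epsilon}$. So you have located the weak point of the published argument, not missed an idea it contains.
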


\vspace{1mm}
\noindent
\textbf{Complexity of updating node states.} Since the number of sampled embeddings can affect the time of training in SSM, hence we first solve the problem of how many embeddings will be sampled given $p$ update events and the threshold $\lambda$. Then we can compute the number of iterations and the complexity of updating node states. We formally show the result with the following lemma:
 
\begin{lemma}\label{lemma: up state}
    Updating the node states of {\sc Coden} requires $O\left(\frac{pF^2}{\lambda}(||{\bm x}_{max}||_1+\epsilon(n^{(t)})^2)\right)$ time under pattern (i) and $O\left(\frac{pF^2n^{(t)}}{\lambda}{ \max_{u\in \mathcal{V}^{(t)}}(|{\bm x}_{max}(u)|+\epsilon n^{(t)}})\right)$ under pattern (ii), where $\{{\bm x}_{max}\}_i = \max_{1\leq j\leq F}|{\bm X}^{(t)}_{ij}|$.
\end{lemma}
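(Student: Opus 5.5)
The bound follows by counting samples: each SSM state update (line \ref{line:ssm} of Alg.~\ref{alg:Coden}) multiplies $\bar{\mathcal{A}}$ and $\bar{\mathcal{B}}$ into $n^{(t)}\times F$ matrices. If we charge $O(F^2)$ per node whose state actually changes, or more crudely $O(n^{(t)}F^2)$ per sample, it suffices to bound the number $S$ of sampled embeddings among the $p$ updates. A new sample is triggered exactly when the accumulated $\sigma$ exceeds $\lambda$. Hence
\[
S\le 1+\frac{1}{\lambda}\sum_{j=1}^{p}\Big(\tfrac{1-\alpha}{\alpha}\big\|({\bm P}^{(t_j)}-{\bm P}^{(t_j-1)}){\bm x}_{max}\big\|_1+2n^{(t)}\epsilon\Big),
\]
so the argument reduces to bounding the expected per-update increment $\mathbb{E}\,\big\|({\bm P}^{(t_j)}-{\bm P}^{(t_j-1)}){\bm x}_{max}\big\|_1$ from Lemma~\ref{lemma:single shift}.

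\textbf{Per-update increment.} Toggling the edge $(u,v)$ changes only column $u$ of ${\bm P}={\bm A}^{\top}{\bm D}^{-1}$. Its entries go from $1/d_u$ on the old out-neighbors to $1/(d_u\pm1)$, with one entry added or removed at $v$. A direct computation gives $\|({\bm P}^{(t)}-{\bm P}^{(t-1)}){\bm x}_{max}\|_1=\|{\bm P}^{(t)}e_u-{\bm P}^{(t-1)}e_u\|_1\,|{\bm x}_{max}(u)|\le 2|{\bm x}_{max}(u)|$. The increment is therefore $O(|{\bm x}_{max}(u)|+n^{(t)}\epsilon)$, treating $\alpha$ as a constant.

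\textbf{Averaging over arrival patterns.} Under pattern (i), the start node $u$ is uniform, so $\mathbb{E}|{\bm x}_{max}(u)|=\|{\bm x}_{max}\|_1/n^{(t)}$. Summing over the $p$ updates gives $\mathbb{E}S=O\big(\tfrac{p}{\lambda}(\|{\bm x}_{max}\|_1/n^{(t)}+n^{(t)}\epsilon)\big)$. Multiplying by the $O(n^{(t)}F^2)$ cost per sample yields $O\big(\tfrac{pF^2}{\lambda}(\|{\bm x}_{max}\|_1+\epsilon(n^{(t)})^2)\big)$. Under pattern (ii), replace the average by the maximum over start nodes: $S=O\big(\tfrac{p}{\lambda}\max_u(|{\bm x}_{max}(u)|+\epsilon n^{(t)})\big)$. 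Multiplying by $n^{(t)}F^2$ gives the second bound.

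\textbf{Main obstacle.} The delicate step is the per-update bound. It needs the exact column-difference computation, and it must handle dangling nodes where $d_u$ becomes $0$ or $1$; there the change is still at most $2|{\bm x}_{max}(u)|$ in $\ell_1$. A second point to check is the additive $+1$ from the final forced sample at $t=t_{k+1}$. I would absorb it into the $O(\cdot)$ by assuming $p$ times the per-update increment is at least of order $\lambda$; otherwise it contributes a lower-order term.
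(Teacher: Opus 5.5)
Your proposal is correct and takes essentially the same route as the paper. You bound each update's increment to $\sigma$ by $O(|{\bm x}_{max}(u)|+n^{(t)}\epsilon)$, divide the accumulated total by $\lambda$ to count sampled embeddings, average over the start node under pattern (i) or take the maximum under pattern (ii), and multiply by the $O(n^{(t)}F^2)$ SSM cost per sample.

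Your explicit column-difference computation and the extra $+1$ for the forced sample at $t_{k+1}$ are somewhat more careful than the paper, which borrows the increment bound from the proof of Lemma~\ref{lemma:emb complexity} and ignores the forced sample. One correction: when $p$ times the per-update increment is below $\lambda$, that forced sample is the dominant term, not a lower-order one, so your assumption is genuinely needed for the stated bound.
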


\section{RNN–Attention Duality in {\sc Coden}}

In this section, we provide a theoretical analysis to clarify the mechanism and expressive form of {\sc Coden}. We first show that, under simple constraints on the trainable parameters, {\sc Coden} reduces to a standard gated recurrent update, providing an RNN-style interpretation of its memory evolution. We then analyze the general setting and prove that the unrolled dynamics of {\sc Coden} are equivalent to masked kernel attention, revealing an attention-style weighted aggregation over historical embeddings. Together, these results establish {\sc Coden} as a unified interpretation that connects recurrent gating with attention aggregation within a single model.

\subsection{A Gated-RNN View Under Simple Constraints}\label{sec:rnn}
We first provide an recurrent view that connects our update rule to classical gated RNNs. The key observation is that {\sc Coden} evolves its memory through a state-space recurrence. Therefore, under simple constraints on the trainable parameters, the update reduces to the Gated-RNN interpolation that is widely used in recurrent networks. We formalize this view in the following proposition:

\begin{lemma}[Gated-RNN View under Constrained Parameters]
\label{lem:coden_gru_form}
Assume $(\bar{\mathcal A}^{(t)},\bar{\mathcal B}^{(t)})$ are obtained by ZOH discretization~\cite{mamba,guefficiently}, i.e.,
$\bar{\mathcal A}=\exp(\bm\Delta\mathcal A)$ and
$\bar{\mathcal B}=(\bm\Delta\mathcal A)^{-1}\bigl(\exp(\bm\Delta\mathcal A)-\bm I\bigr)\bm\Delta\mathcal B$,
where $\mathcal A$, $\mathcal B$, and $\bm\Delta$ are trainable parameters.
Under the constraint $\mathcal A=-\bm I$, $\mathcal B=\bm I$, and $\Delta^{(t)}>0$, {\sc Coden} can be rewritten in gated-RNN form:
{\small
\begin{align*}
\bm M^{(t+1)}=\bm z^{(t+1)}\bm M^{(t)}+\bigl(\bm I-\bm z^{(t+1)}\bigr)\bm H^{(t+1)}, \bm z^{(t)}=\exp(-\Delta^{(t)}).
\end{align*}
}
\end{lemma}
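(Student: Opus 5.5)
The plan is a direct substitution of the constrained parameters into the ZOH formulas, followed by simplification of the resulting matrix exponentials, and finally substitution into the state update of Equ.~\ref{equ:Coden_all}.

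\textbf{Step 1: the transition matrix.} First we fix how $\bm\Delta$ enters. In the standard selective-SSM convention $\bm\Delta$ is a step size, here time dependent, and we treat it as the scalar (or scalar multiple of the identity) $\Delta^{(t+1)}>0$ used at step $t+1$. With $\mathcal A=-\bm I$ we get $\bm\Delta\mathcal A=-\Delta^{(t+1)}\bm I$. Because a scalar multiple of the identity commutes with everything, the matrix exponential is computed entrywise on the diagonal:
\[
\bar{\mathcal A}^{(t+1)}=\exp\bigl(-\Delta^{(t+1)}\bigr)\bm I=\bm z^{(t+1)}\bm I .
\]

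\textbf{Step 2: the input matrix.} Since $\Delta^{(t+1)}>0$, the matrix $\bm\Delta\mathcal A=-\Delta^{(t+1)}\bm I$ is invertible with inverse $-(\Delta^{(t+1)})^{-1}\bm I$. Then
\[
\bar{\mathcal B}^{(t+1)}=-\frac{1}{\Delta^{(t+1)}}\bigl(e^{-\Delta^{(t+1)}}-1\bigr)\Delta^{(t+1)}\bm I .
\]
The factors $\Delta^{(t+1)}$ cancel, leaving
\[
\bar{\mathcal B}^{(t+1)}=\bigl(1-e^{-\Delta^{(t+1)}}\bigr)\bm I=\bm I-\bm z^{(t+1)} .
\]
The step to watch is the cancellation of $\bm\Delta$ against $(\bm\Delta\mathcal A)^{-1}$. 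This is where positivity of $\Delta^{(t)}$ is needed: it guarantees invertibility, and it also gives $\bm z^{(t+1)}\in(0,1)$, so the update is a genuine convex gate.

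\textbf{Step 3: substitution.} Plugging $\bar{\mathcal A}^{(t+1)}$ and $\bar{\mathcal B}^{(t+1)}$ into Equ.~\ref{equ:Coden_all} yields
\[
\bm M^{(t+1)}=\bm z^{(t+1)}\bm M^{(t)}+\bigl(\bm I-\bm z^{(t+1)}\bigr)\bm H^{(t+1)} .
\]
This has the same structure as the GRU interpolation recalled in the RNN-based methods subsection, with the gate $\bm z^{(t+1)}$ playing the role of $1-\bm Z^{(t+1)}$ and the candidate state replaced by $\bm H^{(t+1)}$.

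\textbf{Main obstacle.} The hard part is dimensional and notational bookkeeping rather than analysis. Equ.~\ref{equ:Coden_all} writes $\bar{\mathcal A},\bar{\mathcal B}\in\mathbb R^{F'\times F}$, while the constraint $\mathcal A=-\bm I$, $\mathcal B=\bm I$ requires square identities, so we adopt the paper's convention $F'=F$. If $\bm\Delta$ is instead a diagonal matrix of per-channel positive step sizes, the same argument goes through. Diagonal matrices commute, so the exponential and inverse act coordinatewise and each channel gets its own gate $\exp(-\Delta_j^{(t+1)})$. We would state this per-channel case explicitly so that $\bm z^{(t+1)}$ is interpreted as a diagonal gate matrix.
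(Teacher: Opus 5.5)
Your proposal is correct and matches the paper's proof essentially step for step. The paper likewise substitutes $\mathcal A=-\bm I$, $\mathcal B=\bm I$ into the ZOH formulas to get $\bar{\mathcal A}^{(t+1)}=\exp(-\Delta^{(t+1)})\bm I$ and $\bar{\mathcal B}^{(t+1)}=\bm I-\exp(-\Delta^{(t+1)})\bm I$, then plugs these into Equ.~\ref{equ:Coden_all}. Beyond what the statement requires, it also remarks that taking $\Delta^{(t)}=\log(1+\exp(s^{(t)}))$ makes the gate the sigmoid $\sigma(-s^{(t)})$.
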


This shows that {\sc Coden} can be transformed into an RNN structure without changing its overall modeling pipeline, and it provides a complementary perspective on how the model maintains and updates temporal memory.

\subsection{Equivalence with Kernel Attention}\label{sec:equ}

As pioneers in applying SSM to temporal graphs, we offer a theoretical perspective that demonstrates our algorithm's equivalence to the kernel attention mechanism \cite{katharopoulos2020transformers, peng2021random, choromanski2020rethinking}, an enhanced variant derived from the classical self-attention mechanism \cite{vaswani2017attention}. In comparison to existing approaches utilizing self-attention on temporal graphs, we further theoretically prove that our framework not only achieves greater efficiency but also exhibits enhanced expressive power in its embeddings. 

Given ${\bm M}^{(0)} =\bar{\mathcal{A}}^{(0)} \cdot {\bm 0}+\bar{\mathcal{B}}^{(0)} \cdot {\bm H}^{(0)} = \bar{\mathcal{B}}^{(0)}{\bm H}^{(0)}$ where $\bar{\mathcal{A}}^{(t)}$ and $\bar{\mathcal{B}}^{(t)}$ denote the parameter metrices at time $t$. Following the state update principle \footnote{Without the loss of generalization, here we simplify the notation and assume the sample time corresponds with the prediction time for a clear presentation.} of Equ. \ref{equ:Coden_all}, the state matrix at $t$ can be formulated as:
\begin{footnotesize}
\begin{align}
    {\bm M}^{(t)} = &\bar{\mathcal{A}}^{(t)} \bar{\mathcal{A}}^{(t-1)}...\bar{\mathcal{A}}^{(1)} \bar{\mathcal{B}}^{(0)} {\bm H}^{(0)}+ \bar{\mathcal{A}}^{(t)} \bar{\mathcal{A}}^{(t-1)}... \bar{\mathcal{B}}^{(1)} {\bm H}^{(1)}+...\nonumber\\
    &+ \bar{\mathcal{A}}^{(t)} \bar{\mathcal{A}}^{(t-1)}\bar{\mathcal{B}}^{(t-2)}{\bm H}^{(t-2)}+\bar{\mathcal{A}}^{(t)}\bar{\mathcal{B}}^{(t-1)}{\bm H}^{(t-1)} + \bar{\mathcal{B}}^{(t)}{\bm H}^{(t)} \nonumber\\
    &= \sum_{s=0}^t \prod_{i=s+1}^t \bar{\mathcal{A}}^{(i)} \bar{\mathcal{B}}^{(s)}{\bm H}^{(s)}, 
\end{align}
\end{footnotesize}
where we define $\prod_{i=t+1}^t \bar{\mathcal{A}}^{(i)} = {\bm I}$. When we vectorize the prediction result over time $[0, t]$ and restructure the mathematical expression, we can derive the formula representing our prediction across all time steps:
\begin{align}
    {\bm Y}^{(0:t)} = \left({\bm L}\circ {\bm Q \bm K}^{\top}\right) \cdot {\bm V},
\end{align}
where (i) ${\bm L}$ is a lower-triangular matrix and ${\bm L}_{uv} = \prod_{i=v+1}^u \bar{\mathcal{A}}^{(i)}$; (ii) $\bm Q$ is a lower-triangular matrix and ${\bm Q}_{uv} = \mathcal{C}^{(u)}$; (iii) $\bm K$ is a diagonal matrix and ${\bm K}_{uu} = \bar{\mathcal{B}}^{(u)}$; (iv) ${\bm V}$ is the vectorized node embedding generated at all time steps and ${\bm V}(u) = {\bm H}^{(u)}$. 

As illustrated in Fig. \ref{fig:atte}, we observe that the fundamental methodology of Coden for processing temporal information is fully aligned with the definition of the kernel attention mechanism \cite{choromanski2020rethinking}. which is developed to mitigate the “over attention” phenomenon \cite{DBLP:conf/icml/DaoG24} in computer vision \cite{DBLP:conf/iclr/DarcetOMB24} and natural language processing (NLP) areas \cite{DBLP:conf/iclr/XiaoTCHL24}.
Upon revisiting the motivation behind kernel attention in NLP, where it is crucial to selectively maintain and degrade specific tokens \cite{DBLP:conf/iclr/XiaoTCHL24}, it becomes apparent that the trainable matrices $\bar{\mathcal{A}}^{(i)}$ ($0\leq i \leq t$) play a pivotal role.
Recalling the significant impact of $\bar{\mathcal{A}}^{(i)}$ in proposition \ref{pro:approx}, $\bar{\mathcal{A}}^{(i)}$ enable Coden to selectively neglect the status of certain snapshots and thus effectively controlling how much information is preserved over time. As a result, 
these matrices effectively mask redundant information from the evolving process, thereby providing valuable contexts to the node state and significantly boosting performance in downstream tasks. 

\begin{figure}[t]
    \centering
\includegraphics[width=3.5in]{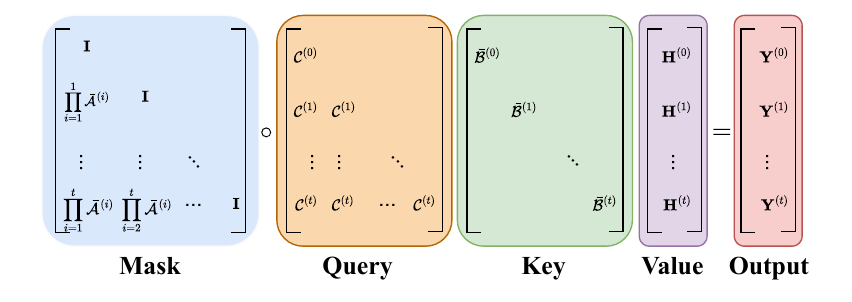}

    \caption{The equivalent kernel attention in Coden.}

    \label{fig:atte}
\end{figure}

\subsection{Superior Embedding with SSM}\label{sec:super}

With the equivalence mentioned earlier, an important question arises: \textit{Given the equivalence, why do we still need the SSM module for {\sc Coden} when processing temporal graphs, especially in contrast to the attention module?} In order to further highlight the merits of incorporating the SSM mechanism into {\sc Coden}, we further provide a matched ablation variant named as {\sc Coden}-A. This variant retains all the features of {\sc Coden} but substitutes the SSM module with the \textit{attention mechanism}, providing a direct comparison of their efficiency and effectiveness. Following the standard structure building of \cite{apan,yu2020predicting}, we compute the temporal dependency of node embeddings over multiple time steps by following the attention expression: 
\begin{footnotesize}
\begin{align*}
   {\bm M}^{(0:t)} =\mathrm{softmax}\left({\left({\bm W_q}{\bm H}^{(0:t)}\right) \cdot\left({\bm W_k}{\bm H}^{(0:t)}\right)}/{\sqrt{F'}}\right)\cdot\left({\bm W_v}{\bm H}^{(0:t)}\right) 
\end{align*}
\end{footnotesize}
where ${\bm W_q}, {\bm W_k}, {\bm W_v} \in \mathbb{R}^{F'\times F}$ are trainable parameters to calculate the queries, keys, and values, facilitating the modeling of interactions among different states across time steps. Next, we will compare Coden and Coden-A with respect to the time complexity and the representation quality:

 (i) \textbf{Time complexity comparison.} Given the prediction time $t$, Coden-A is required to calculate the dependence between the embedding at $t$ and other time steps. Formally, we obtain the state at time $t$ as: $\boldsymbol{M}^{(t)}=\sum_{s=0}^t\mathrm{softmax}\left(\frac{\left(\boldsymbol{W}_q\boldsymbol{H}^{(t)}\right)\cdot\left(\boldsymbol{W}_k\boldsymbol{H}^{(s)}\right)^\top}{\sqrt{F^{\prime}}}\right)\cdot\left(\boldsymbol{W}_v\boldsymbol{H}^{(s)}\right)$. Compared with the information compression mechanism of Coden, Coden-A suffers from a 
        $O(t(n^{(t)})^2 F)$ time complexity when the time window is extended to $t$. 
        
(ii) \textbf{Quality comparison based on Dirichlet Energy.} As an emerging measurement to access the quality of node representations, the degree of over-smoothing becomes pronounced as  GNNs deepen and the node representations go indistinguishable, which significantly degrades the performance in the downstream tasks. To quantify the distance of node pairs, we employ the metric of \textit{Dirichlet Energy} to observe how SSM can alleviate the over-smoothing issue compared with the self-attention mechanism. Given the state $\boldsymbol{M}^{(t)}$ of nodes, the Dirichlet Energy $\mathbb{DE}(\boldsymbol{M}^{(t)})$ is defined as follows:
\begin{small}
        \begin{align}
            \mathbb{DE}(\boldsymbol{M}^{(t)})&=\mathrm{tr}\left( \boldsymbol{M}^{(t)} ({\bm I}-{{\bm A}^{(t)}}^{\top}{{\bm D}^{(t)}}^{-1}) {\boldsymbol{M}^{(t)}}^{\top}\right) \nonumber \\
            &= \frac12\sum {\bm A}^{(t)}_{ij}||\frac{\boldsymbol{M}^{(t)}(i)}{\sqrt{1+|\mathcal{N}_{out}^{(t)}(i)|}}-\frac{\boldsymbol{M}^{(t)}(j)}{\sqrt{1+|\mathcal{N}_{out}^{(t)}(j)|}}||_2^2,
        \end{align}
\end{small}
where $\mathrm{tr}(\cdot)$ denotes the trace of the matrix and ${\bm A}^{(t)}_{ij}$ is the $(i,j)$-th element of ${\bm A}^{(t)}$. Dirichlet Energy reflects the smoothness of adjacent node representations. Therefore, a relatively larger value of $ \mathbb{DE}(\boldsymbol{M}^{(t)})$ reveals that the representations of different nodes can be more distinct, making it easier to separate different nodes in classification tasks. Based on this insight, we investigate the potential over-smoothing issues of Coden and Coden-A by analyzing the Dirichlet Energy of their state matrices: 
\begin{lemma}\label{lemma:energy}
    Denoting the state matrix of Coden and Coden-A as $\boldsymbol{M}^{(t)}_C$ and $\boldsymbol{M}^{(t)}_A$ respective, we have:  $\mathbb{DE}(\boldsymbol{M}^{(t)}_A) \leq \mathbb{DE}(\boldsymbol{M}^{(t)}_C)$, if $F \cdot \sigma_{min}^2 (\prod_{i=s+1}^t \bar{\mathcal{A}}^{(i)} \bar{\mathcal{B}}^{(s)})\ge 1$ for $0 \leq s \leq t$, where $\sigma_{min}(\cdot)$ means the minimum eigenvalue of the matrix.
\end{lemma}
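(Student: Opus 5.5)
We compare the two states through the Laplacian-type quadratic form in the definition of $\mathbb{DE}$. Write $\bm L^{(t)} = \bm I - {\bm A^{(t)}}^{\top}{\bm D^{(t)}}^{-1}$. Then $\mathbb{DE}(\bm M) = \mathrm{tr}(\bm M \bm L^{(t)} \bm M^{\top})$, which is a positive semidefinite quadratic form in $\bm M$; we work with its symmetrized version, as the second expression in the definition does. The plan is to get an upper bound on $\mathbb{DE}(\bm M^{(t)}_A)$ and a lower bound on $\mathbb{DE}(\bm M^{(t)}_C)$, both in terms of the same quantities $\mathrm{tr}(\bm H^{(s)} \bm L^{(t)} {\bm H^{(s)}}^{\top})$, and then compare them.

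\textbf{Step 1 (Coden-A is a convex combination).} For Coden-A, the softmax weights $\beta_s \ge 0$ satisfy $\sum_{s=0}^t \beta_s = 1$. Hence $\bm M^{(t)}_A = \sum_s \beta_s \bm W_v \bm H^{(s)}$ is a convex combination of the value embeddings, and since $\mathbb{DE}$ is a convex quadratic form, Jensen's inequality gives
\[
\mathbb{DE}(\bm M^{(t)}_A) \le \sum_s \beta_s\, \mathbb{DE}(\bm W_v \bm H^{(s)}) \le \max_s \mathbb{DE}(\bm W_v \bm H^{(s)}).
\]
We then normalize the value projection so that this is controlled by $\max_s \mathrm{tr}(\bm H^{(s)} \bm L^{(t)} {\bm H^{(s)}}^{\top})$, for instance by requiring $\|\bm W_v\|_2\le 1$, as in standard attention normalization. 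Strictly speaking, a softmax weight that depends on the node makes the combination row-wise. We handle this by treating the weights per node and using that each row is still a convex combination, so the pairwise-difference form of $\mathbb{DE}$ is still bounded by the largest term.

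\textbf{Step 2 (Coden is an unnormalized sum).} Unrolling gives $\bm M^{(t)}_C = \sum_{s} \bm G_s \bm H^{(s)}$ with $\bm G_s = \prod_{i=s+1}^t \bar{\mathcal A}^{(i)} \bar{\mathcal B}^{(s)}$. We use $\mathrm{tr}(\bm G \bm H \bm L \bm H^{\top} \bm G^{\top}) \ge \sigma_{\min}^2(\bm G)\,\mathrm{tr}(\bm H \bm L \bm H^{\top})$, which holds because $\bm H \bm L \bm H^{\top}$ is PSD. For the cross terms, we argue that they are nonnegative, or we drop them using the alignment of PPR embeddings across time, since all $\bm H^{(s)}$ are nonnegative propagations of the same features. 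This yields
\[
\mathbb{DE}(\bm M^{(t)}_C) \ge \sum_s \sigma_{\min}^2(\bm G_s)\,\mathrm{tr}(\bm H^{(s)} \bm L^{(t)} {\bm H^{(s)}}^{\top}).
\]
Invoking the hypothesis $F\sigma_{\min}^2(\bm G_s)\ge 1$, each term is at least $\frac1F \mathrm{tr}(\bm H^{(s)} \bm L^{(t)} {\bm H^{(s)}}^{\top})$.

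\textbf{Step 3 (comparison).} The factor $F$ must be absorbed on the attention side. The natural place is the $1/\sqrt{F'}$ scaling together with a row-norm bound on $\bm W_v$ (Frobenius norm at most 1). The Frobenius-to-spectral passage introduces exactly a dimension factor, which gives $\mathbb{DE}(\bm W_v\bm H^{(s)}) \le \frac1F\mathrm{tr}(\bm H^{(s)}\bm L^{(t)}{\bm H^{(s)}}^{\top})$ up to the normalization convention. The max over $s$ is then bounded by the sum over $s$, which closes the inequality.

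\textbf{Main obstacle.} The cross terms in Step 2 are the main difficulty. Without them the lower bound is immediate. My first attempt would be to show $\mathrm{tr}(\bm G_s\bm H^{(s)}\bm L{\bm H^{(r)}}^{\top}\bm G_r^{\top})\ge 0$ using nonnegativity of the PPR embeddings and of the discretized (diagonal, positive) $\bar{\mathcal A},\bar{\mathcal B}$ from ZOH discretization, as in Lemma~\ref{lem:coden_gru_form}. Failing that, I would bound them via Proposition~\ref{pro:approx}: consecutive embeddings differ by at most $\lambda$, so all $\bm H^{(s)}$ are close to a common matrix and the cross terms are approximately equal to diagonal terms.
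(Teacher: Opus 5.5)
Your skeleton matches the paper's: bound the attention state above using that the softmax weights are stochastic, bound the Coden state below by $\sigma_{\min}^2$ of each unrolled factor $\bm G_s=\prod_{i=s+1}^t\bar{\mathcal A}^{(i)}\bar{\mathcal B}^{(s)}$, and compare term by term using $F\sigma_{\min}^2(\bm G_s)\ge 1$. The step you flag as the main obstacle is a genuine gap, and neither of your fixes closes it.

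Write $\mathbb{DE}(\bm U,\bm W)$ for the polarized bilinear form, so the cross terms are $2\,\mathbb{DE}(\bm G_s\bm H^{(s)},\bm G_r\bm H^{(r)})$. Entrywise nonnegativity does not make these nonnegative, because the Laplacian has negative off-diagonal entries. Across a single edge, the nonnegative signals $(1,0)$ and $(0,1)$ already have negative cross energy. In addition, outside the constrained setting of Lemma~\ref{lem:coden_gru_form} the matrices $\bar{\mathcal A},\bar{\mathcal B}$ are arbitrary, and the features need not be nonnegative.

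In fact the stated hypothesis alone cannot suffice. Take $t=1$, $\bm H^{(1)}=\bm H^{(0)}$ (or merely close to it), $\bar{\mathcal B}^{(0)}=\bar{\mathcal B}^{(1)}=\bm I$ and $\bar{\mathcal A}^{(1)}=-\bm I$. Then $F\sigma_{\min}^2(\bm G_s)=F\ge1$ for $s=0,1$, yet $\bm M^{(1)}_C=\bm 0$, so $\mathbb{DE}(\bm M^{(1)}_C)=0$. Meanwhile $\bm M^{(1)}_A=\bm W_v\bm H^{(0)}$ generically has positive energy.

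Your fallback via Proposition~\ref{pro:approx} also fails. Closeness of consecutive samples (up to $\lambda$) does not make all $\bm H^{(s)}$ close to a common matrix over the window. At best it gives the inequality up to an additive error. The paper's own proof does not handle this either: its ``following the similar derivation'' step simply writes $\mathbb{DE}(\bm M_C)$ as the sum of the diagonal terms. An extra assumption is therefore unavoidable, for example $\mathbb{DE}(\bm G_s\bm H^{(s)},\bm G_r\bm H^{(r)})\ge0$ for $s\ne r$.

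Two further steps need repair.

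\textbf{Node-dependent softmax weights.} Let $\bm V^{(s)}=\bm W_v\bm H^{(s)}$. If the weights $\beta^{(i)}_s$ vary with the node $i$, then $\bm M_A(i)-\bm M_A(j)$ is not a convex combination of the differences $\bm V^{(s)}(i)-\bm V^{(s)}(j)$. For instance, suppose each $\bm V^{(s)}$ is constant across nodes (zero energy) but $\bm V^{(0)}\ne\bm V^{(1)}$. Then node-dependent weights produce positive energy. You need weights shared across nodes, which is what the paper implicitly assumes with its single stochastic $\bm S_t$.

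\textbf{The factor $F$.} It has no source on the attention side. The $1/\sqrt{F'}$ temperature sits inside the softmax, so the weights still sum to one and the output scale is unchanged. Likewise, $\lVert\bm W_v\rVert_F\le1$ gives only $\sigma_{\max}^2(\bm W_v)\le1$, not $\le 1/F$; the Frobenius--spectral comparison runs the wrong way. You have to assume $\sigma_{\max}^2(\bm W_v)\le1/F$ outright; the paper likewise just inserts a $/F$ without derivation.

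With these assumptions made explicit (and your PSD reading of $\mathbb{DE}$), your Steps 1--3 do give a clean proof. Your Jensen step is then sounder than the paper's bound $\sigma_{\max}(\bm S_t\bm S_t^{\top})\le1$. That bound fails for row-stochastic matrices with several rows, e.g.\ when all rows equal $\bm e_1^{\top}$.
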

The detailed proof can be found in Appendix \ref{lemma:energy}. Note that we can easily apply the $L_2$ regularization during training to prevent the minimum eigenvalue of the parameter matrices from approaching zero \cite{ghorbani2019investigation,lewkowycz2020training}, which helps mitigate over-fitting and ensures better model stability. In the later experiments, the condition of  $F \cdot \sigma_{min}^2 (\prod_{i=s+1}^t \bar{\mathcal{A}}^{(i)} \bar{\mathcal{B}}^{(s)})\ge 1$ can generally satisfied (see Lemma \ref{lemma:spectral_floor} in the Appendix). The experimental comparison of these two models is provided in Sec. \ref{sec:ablation}.

The above in-depth analysis shows that while Coden produces an equivalent scheme with the self-attention mechanism, the non-trivial integration of SSM for temporal graph processing enhances both the efficiency and the quality of representations achieved by Coden. Moreover, we believe this direct association between SSM and the self-attention mechanism could shed light on the rationale behind the need for the application of SSM in temporal graphs.

\noindent
\subsection{Summary} 
As summarized in Table~\ref{table:method_comparison}, {\sc Coden} attains the most favorable complexity for continuous prediction among existing TGNN architectures. Together with the RNN--Attention duality established above, this indicates that {\sc Coden} combines the efficiency of lightweight recurrence with the expressive power of attention-style aggregation. To clarify how these properties arise, we relate {\sc Coden} to three major lines of temporal graph models as follows.

\noindent
$\bullet$ \textbf{Efficient update as single-snapshot methods.}
{\sc Coden} simplifies the traditional graph convolution (e.g., GCN~\cite{DBLP:conf/iclr/KipfW17}) and directly treats the parameter-free node embeddings as the base state.
This design avoids the usual $O(K m^{(t)} F)$ cost of running $K$-layer message passing on the updated graph, and thus enables fast incremental refresh of node states after each update.

\noindent
$\bullet$ \textbf{Optimized complexity within the RNN family.}
From the recurrent view, {\sc Coden} follows the same iterative pattern as RNN-based TGNNs, whose per-step update and training typically require $O(p n^{(t)} F^{2})$ time.
At the same time, {\sc Coden} employs a lazy-sampling strategy to control the effective sequence length, substantially improving the efficiency of iteration.
The trade-off is governed by the parameter $\lambda$: when $\lambda$ is small (e.g., $\lambda = O(1)$), {\sc Coden} processes each update individually and its runtime is comparable to that of standard RNN-based methods.

\noindent
$\bullet$ \textbf{Effectiveness comparable to attention-based methods.}
Attention-based TGNNs usually rely on explicit temporal attention, incurring $O\!\left(p (n^{(t)})^{2} F\right)$ complexity to achieve strong temporal memorization.
Our analysis in Section~\ref{sec:equ} shows that the unrolled dynamics of {\sc Coden} are equivalent to masked kernel attention, providing a theoretical explanation for its effectiveness.
This duality allows {\sc Coden} to retain the expressive benefits of attention-style aggregation, while enjoying substantially lower computational cost.

\vspace{-2mm} 
\section{Experiments}
In this section, we conduct comprehensive experiments to evaluate the key performances of {\sc Coden} against strong TGNN baselines.  In Sec. \ref{exp:acc_time} and \ref{sec:efficiency}, we separately provide the comparison based on the temporal node classification task, respectively highlighting the effectiveness and efficiency of our approach.  The ablation study and parameter analysis are provided in Sec. \ref{sec:ablation} and \ref{exp:hyper}. All experimental results are obtained with 10 runs on a Linux machine with an Intel(R) Xeon(R) Gold 6238R CPU @ 2.20GHz with 160GB RAM and an NVIDIA RTX A5000 with 24GB memory.
We provide an open-source implementation of our model at \url{https://anonymous.4open.science/r/Coden-46FF}.

\vspace{-2mm}
\subsection{Datasets}
We adopt five representative real-world dynamic datasets: \textit{DBLP} \cite{spikenet}, \textit{Tmall} \cite{lu2019temporal}, and three large-scale graphs, \textit{Reddit} \cite{graphsage}, \textit{Patent} \cite{hall2001} and \textit{Papers100M} \cite{hu2020open}. The statistics of the datasets are shown in Tab. \ref{table:dataset_statistics}. Within these datasets, the training, validation, and test sets are randomly allocated in proportions of 70\%, 10\%, and 20\% respectively. To simulate scenarios that necessitate continuous prediction, we adopt the experimental framework outlined in \cite{zheng2022instant} and \cite{IDOL}, where the graph is segmented into an initial graph and $|T|$ batches of edge sequences. Then, each batch of edges will be added at distinct time steps in a dynamically evolving state. All methods are evaluated through a pipeline encompassing both training and inference.

\setlength{\tabcolsep}{2mm}{
\small
\begin{table}[htb]
\centering
\small
\caption{\small Statistics of the datasets. $F$, $C$, and $|T|$ stand for the dimension of attributes, the number of classes, and the number of time steps to be predicted.}
 
\begin{tabular}{c|c|c|c|c|c} \toprule[1pt]
\textbf{Datasets} & Nodes ${n}$ & Edges ${m}$ & ${F}$ & ${C}$ & ${|T|}$\\ \midrule[0.5pt]
\textit{DBLP} & 28,085 & 236,894 & 128 & 10& 26 \\
\textit{Tmall} &  577,314	&  4,807,545 &  80&  5 & 19 \\
\textit{Reddit} & 227,853& 114,615,892 & 602& 40 & 16  \\
\textit{Patent} &2,738,012&  13,960,811 & 128& 6 & 16  \\
\textit{Papers100M} & 111,059,956 & 1,615,685,872 & 128 & 172 & 21 \\
\bottomrule[1pt]
\end{tabular}
\label{table:dataset_statistics}
\end{table}
}

\subsection{Baseline Methods}
We compare the proposed method {\sc Coden} with several state-of-the-art TGNN methods. They include: 
(i) the single-snapshot method Instant \cite{zheng2022instant}; (ii) the RNN-based methods 
TGCN \cite{tgcn},  EvolveGCN \cite{evolvegcn}, MPNN \cite{MPNN}, TGL+TGN \cite{tgl}\footnote{We run TGL with 4 GPUs in parallel and adopt the most efficient backbone TGN \cite{tgn}.} and CAWN \cite{CAWN}; (iii) the attention-based methods DNNTSP \cite{yu2020predicting}, DyGFormer \cite{dyformer}, and ASTGCN \cite{ASTGCN}.
Moreover, we also include the results of the recent competitive models SpikeNet \cite{spikenet} and Zebra \cite{li2023zebra}. 
To this end, we include several strong baselines focusing on the single snapshot, such as classical GCN \cite{DBLP:conf/iclr/KipfW17}, GraphSage \cite{graphsage}, and the scalable methods SCARA \cite{scara}.
Since most of the compared RNN-based and attention-based methods are designed for small-scale graphs, we adopt the neighboring sampling techniques \cite{graphsage} for these methods when applied on the \textit{Reddit}, \textit{Patent} and \textit{Papers100M} datasets to avoid the out-of-memory problem. We summarize the experimental settings for all baselines in Tab. \ref{table:exp setting}, where the common parameters, such as learning rate and hidden size, are maintained consistently for Coden. Specifically, we set the threshold $\epsilon = 1e^{-7}$, $\lambda = 0.1$,  $F' =16$, and $\alpha = 0.2$ by default in Coden.

{\small
\begin{table}[t]
\centering
\caption{\small Parameter settings. Here "lr" means the learning rate, "$K$" means the number of convolution layers, "hidden" means the hidden size of the network, and "batch number" means the number of neighbor sampling for baselines.}
 
\begin{tabular}{c|c|c|c|c} \toprule[1pt]
\textbf{Datasets} & lr &K&  hidden & batch number\\ \midrule[0.5pt]
\textit{DBLP} & 1e-3 &4& 1024 & 12 \\
\textit{Tmall} & 1e-3 &  4&1024&  12   \\
\textit{Reddit} &1e-3& 4&512& 12   \\
\textit{Patent} &1e-3& 4&512& 12  \\
\textit{Papers100M} &1e-3& 2&512& 12  \\
\bottomrule[1pt]
\end{tabular}
\label{table:exp setting}
\end{table}
}

\vspace{-2mm}
\subsection{Accuracy Comparison}\label{exp:acc_time}
To simulate the evolving scenarios where the edge updates happen periodically, we first remove all edges from the graph and then progressively re-add them in chronological order according to the time steps. With $|T|$ time steps in our setting, each dataset can yield $|T|$ distinct results. Note that this setting is substantially different from a general experimental setting in TGNNs, which typically focuses only on the fully formed final graph and fails to reflect the evolving process of the graph. Collectively, our objective is to evaluate the performance of all compared methods in a scenario requiring continuous prediction.

\setlength{\tabcolsep}{1.2mm}{
\begin{table*}[t]
\centering
\footnotesize
 
\caption{\small The average, the best and the worst accuracy (\%, denoted as Ave. and Bes. respectively) across all time steps. "OOM" stands for out of memory on a GPU with 24GB memory. The best results are ranked  \rkat{first}, \rkbt{second} for TGNN methods.}
 
\label{table:alldata}
\begin{tabular}[t]{c|cc|cc|cc|cc|cc} \toprule[1pt]
\multirow{2}{*}{\bf Method} & \multicolumn{2}{c|}{\bf DBLP}&\multicolumn{2}{c|}{\bf Tmall} &\multicolumn{2}{c|}{\bf Reddit} & \multicolumn{2}{c|}{\bf Patent} & \multicolumn{2}{c}{\bf Papers100M}\\ & {\bf Ave.} & {\bf  Bes.} 
 & {\bf Ave.} & {\bf  Bes.}  & {\bf Ave.} & {\bf  Bes.}  & {\bf Ave.} & {\bf  Bes.} & {\bf Ave.} & {\bf  Bes.}  \\ \midrule[0.5pt]
 GCN &$72.32_{\pm 0.12}$ &$74.24_{\pm 0.22}$  &$60.11_{\pm 0.15}$ &$62.36_{\pm 0.46}$&$90.92_{\pm 0.45}$ &{$93.30_{\pm 0.84}$ }&$81.26_{\pm 0.54}$ &$83.95_{\pm 0.60}$&OOM&OOM\\
 GraphSage &$72.53_{\pm 0.64}$ &$74.55_{\pm 0.61}$ & $60.25_{\pm 0.40}$ & {$64.62_{\pm 0.71}$}  &$90.65_{\pm 0.26}$& {$92.95_{\pm 0.37}$}& {$82.95_{\pm 0.75}$} &{$83.21_{\pm 0.44}$}&OOM&OOM\\ 
 SCARA & {$73.57_{\pm 0.29}$} &  {$75.68_{\pm 0.66}$}  & {$61.32_{\pm 0.43}$} &$63.25_{\pm 0.11}$  & {$91.65_{\pm 0.24}$}&$92.29_{\pm 0.50}$ & {$82.44_{\pm 0.32}$ }&{$83.55_{\pm 0.29}$} 
 &$62.22_{\pm 0.46}$&$63.40_{\pm 0.21}$\\
 Instant &{$74.25_{\pm 0.21}$} & {$75.22_{\pm 0.44}$}  &  {$61.22_{\pm 0.87}$}&{$64.26_{\pm 0.43}$} &{$92.10_{\pm 0.67}$} & {$92.34_{\pm 0.29} $}&{$81.95_{\pm 0.37}$}&$82.99_{\pm 0.59}$&$62.75_{\pm 0.17}$&$64.15_{\pm 0.35}$\\\midrule[0.5pt]
TGCN &$74.17_{\pm 0.83}$ &$75.65_{\pm 0.75}$  &$60.00_{\pm 0.93}$ &$61.92_{\pm 0.71}$ & $92.13_{\pm 0.83}$&$93.73_{\pm 0.44}$  & $82.51_{\pm 0.68}$&$83.39_{\pm 0.60}$&OOM&OOM\\
 ASTGCN &{$75.04_{\pm 0.54}$}&$76.7_{\pm 0.19}$  &$61.80_{\pm 0.72}$ & \rkb{$66.59_{\pm 0.62}$} &$91.66_{\pm 0.69}$& $93.79_{\pm 0.81}$ & {$82.52_{\pm 0.70}$}&{$83.36_{\pm 0.56}$}&OOM&OOM\\
 EvolveGCN & $73.27_{\pm 0.48}$&$76.09_{\pm 0.54}$ &$64.27_{\pm 0.63}$ & $66.05_{\pm 0.39}$& $91.33_{\pm 0.53}$& {$93.50_{\pm 0.27}$}&$82.74_{\pm 0.46}$&$83.51_{\pm 0.22}$&OOM&OOM\\
 MPNN & $74.58_{\pm 0.17}$&\rkb{$77.02_{\pm 0.19}$} &$61.81_{\pm 0.36}$& $65.04_{\pm 0.60}$ &$91.09_{\pm 0.30}$ &$93.08_{\pm 0.64}$ & \rkb{$83.25_{\pm 0.52}$}& \rkb{$84.11_{\pm 0.49}$}&OOM&OOM\\ 
  CAWN & $73.56_{\pm 0.27}$&$75.58_{\pm 0.67}$ &$62.34_{\pm 0.39}$ & $65.35_{\pm 0.21}$& $91.54_{\pm 0.76}$& {$93.53_{\pm 0.65}$}&$82.21_{\pm 0.36}$&$83.62_{\pm 0.72}$&OOM&OOM\\
 DNNTSP & $74.25_{\pm 0.32}$&$75.87_{\pm 0.76}$ &\rkb{$64.29_{\pm 0.88}$}& $65.9_{\pm 0.83}$ &OOM&OOM&OOM&OOM&OOM&OOM\\
 SpikeNet & $74.83_{\pm 0.58}$&$75.72_{\pm 0.71}$ &$64.21_{\pm 0.40}$& $66.02_{\pm 0.66}$&\rkb{$92.19_{\pm 0.41}$ }&\rkb{$93.83_{\pm 0.37}$} &$81.86_{\pm 0.63}$ &$82.52_{\pm 0.47}$&OOM&OOM\\
 Zebra & \rkb{$75.04_{\pm 0.40}$}&$75.97_{\pm 0.33}$ &$63.14_{\pm 0.27}$& $65.12_{\pm 0.89}$&{$91.77_{\pm 0.42}$ }&{$92.94_{\pm 0.12}$} &$82.86_{\pm 0.66}$ &{$83.98_{\pm 0.27}$}&OOM&OOM\\
 TGL+TGN & {$73.12_{\pm 0.34}$}&$76.91_{\pm 0.60}$ &$62.56_{\pm 0.77}$& $65.02_{\pm 0.89}$&{$92.11_{\pm 0.55}$ }&{$93.53_{\pm 0.33}$} &$81.20_{\pm 0.31}$ &{$83.48_{\pm 0.66}$}&\rkb{$62.88_{\pm 0.26}$}&\rkb{$64.33_{\pm 0.53}$}\\
 DyGFormer & {$73.88_{\pm 0.54}$}&$76.79_{\pm 0.29}$ &$63.04_{\pm 0.30}$& $65.94_{\pm 0.33}$&{$91.54_{\pm 0.61}$ }&{$93.32_{\pm 0.62}$} &$82.16_{\pm 0.47}$ &{$83.59_{\pm 0.25}$}&OOM&OOM\\
 {{\sc Coden}} &\rka{$ 76.35_{\pm 0.17}$} & \rka{$77.23_{\pm 0.27}$}&\rka{$65.14_{\pm 0.19}$}& \rka{$66.7_{\pm 0.23}$} &\rka{$92.61_{\pm 0.19}$}&\rka{$94.17_{\pm 0.30}$} &\rka{$83.74_{\pm 0.39}$}&\rka{$84.53_{\pm 0.14}$} & \rka{$64.89_{\pm 0.16}$} & \rka{$66.45_{\pm 0.33}$}\\\bottomrule[1pt]
\end{tabular} 
 
\hfill
\centering
\end{table*}
}
\noindent
\textbf{Overall statistic.} To synthetically evaluate the performance of all compared methods, we first provide detailed statistics including the average, best, and worst accuracy across all prediction times, as shown in Tab. \ref{table:alldata}. We separate the methods according to fact whether they incorporate the temporal information. Since most of methods run out of memory on the \textit{Papers100M} dataset, we mainly focus on the results from the other four datasets in Tab. \ref{table:alldata}.

\noindent
\textbf{Temporal information is crucial for accurate prediction.} 
Methods designed for single snapshots, such as GCN, GraphSage, SCARA, and Instant, fall short compared to other TGNN approaches. This disparity arises because these methods focus exclusively on individual snapshots, overlooking the evolving dynamics within the graph. This underscores the critical role of temporal information in achieving more accurate predictions. Therefore, our comparison will primarily focus on methods that process temporal information in the following sections.

\noindent
\textbf{{\sc \textbf{Coden}} exhibits desirable performance across various time steps.}
We then compare the accuracy performance of {\sc Coden} with other strong competitors. As shown in Fig. \ref{fig:accuracy}, we present the Micro-F1 scores on \textit{Reddit} and \textit{Patent} across different prediction times to demonstrate the accuracy comparison.
As an overview, one key observation is indicated by these comparison results: \textit{{\sc Coden} can match or even outperform the state-of-the-art methods in terms of prediction accuracy in most cases.} Furthermore, while other methods exhibit significant fluctuations across different time steps, {\sc Coden} maintains more robust performance, with accuracy improving steadily. We attribute this noteworthy performance to two primary factors: (i) Our incorporated \textit{Embedding Update} algorithm ensures the accuracy of embeddings in an evolving process, thereby enhancing their quality and improving robustness during updates. (ii) The embedding sequences generated through the lazy-sampling strategy enable effective information compression when integrated with the SSM module. This distinctive information compression mechanism allows {\sc Coden} to selectively retain valuable information throughout the evolving process, thereby effectively filtering useful data.



\subsection{Efficiency Comparison}\label{sec:efficiency}

\setlength{\tabcolsep}{0.6mm}{
\begin{table}[htb]
\centering
{
\caption{The total training time ($s$, denoted as Train.) and inference time ($s$, denoted as Infer.).}
 
\label{table:average_time}
\tiny
\begin{tabular}[t]{c|c|c|c|c|c} \toprule[1pt]
\multirow{1}{*}{\bf Method} & \multicolumn{1}{c|}{\bf DBLP}&\multicolumn{1}{c|}{\bf Tmall} &\multicolumn{1}{c|}{\bf Reddit} & \multicolumn{1}{c|}{\bf Patent} & \multicolumn{1}{c}{\bf Papers100M}\\ \midrule[0.5pt]
TGCN &$11.12$ ($0.04$) & $214.23$( $0.08$) &\rkb{$2.29K$  ($12.16$)}& \rkb{$4.62K$ ($19.54$)} & OOM \\
 ASTGCN &$439.56$ ($0.14$) &$3.09K$ ($1.50$) & $51.03K$ ($13.54$)& $77.59K$ ($34.58$) & OOM\\
 EvolveGCN &\rkb{$8.43$ ($0.06$)} & \rkb{$123.54$($0.08$)} & $2.61K$ ($12.56$) & $5.16K$ ($19.45$)& OOM\\
 MPNN &$21.36$ ($0.04$) & $303.65$ ($0.12$)& $2.80K$ ($13.10$) & $6.43K$ ($19.98$)& OOM\\ 
 DNNTSP &$3.86K$ ($0.10$) & $17.35K$ ($5.04$)& OOM&OOM& OOM\\
 SpikeNet & $255.11$ ($0.12$) & $546.21$ ($0.55$)& $3.91K$ ($15.26$) & $12.83K$ ($29.56$)& OOM\\
 Zebra &$435.26$ ($0.14$) & $946.28$ ($0.76$)& $4.03K$ ($15.28$) & $20.34K$ ($32.71$)& OOM\\
 TGL &$868.65$ ($0.15$) & $1.96K$ ($1.22$)& $18.48K$ ($40.99$) & $38.56K$ ($68.98$)& $354K$ ($150.45$)\\
 {{\sc Coden}} & \rka{$6.00$ ($0.02$)}& \rka{$23.68$($0.05$)} & \rka{$1.31K$ ($8.15$)}&  \rka{$1.73K$ ($11.69$)} 
 &$12.30K$ ($20.15$)
 \\\bottomrule[1pt]
\end{tabular} 
\hfill
\centering
}
\end{table}
}

\noindent
 {\textbf{{\sc \textbf{Coden}} achieves a speedup of up to $\mathbf{44.80\times}$ in time consumption and finishes continuous predictions within 3.3 hours averagely on the billion-scale graph \textit{Papers100M}.}} To ensure a fair comparison of efficiency, we evaluate the total time consumption, encompassing \textit{ the network training (100 epochs), and inference}, which collectively constitute a single complete prediction. We report the overall training time and the inference time for each method, as shown in Tab. \ref{table:average_time}. The experimental results highlight the superiority of our model in achieving scalability through the learning phases. On the large-scale graph such as \textit{Patent}, {\sc Coden} achieves $2.67-44.80\times$ acceleration in training time, along with improved inference speed. 
 
 To demonstrate this difference clearly, 
we record the detailed results on \textit{Reddit} and \textit{Patent} datasets across all time steps in Fig. \ref{fig:time_patent}. 
It is implied that all baselines experience an increase in training time as the number of edge updates grows, escalating rapidly with the data size. 
In contrast, {\sc Coden} exhibits only a slight increase in the time consumption (e.g., ranging from $1.71K$s to $1.86K$s on \textit{Patent}), significantly improving training efficiency and reducing system workload. 
 {Furthermore, both {\sc Coden} and TGL are capable of performing continuous prediction on the \textit{Papers100M} dataset, but {\sc Coden} significantly outperforms TGL in terms of efficiency, efficiently handling the scale where other TGNN methods fail in memory limitations.}
We believe this superior performance derives from two reasons: (i) Different from other baselines requiring graph propagation from scratch with updates, 
the embedding updating process of {\sc Coden} is conducted with an incremental form, avoiding redundant computation and reducing the time consumption of training at each prediction time. (ii) {\sc Coden} leverages the linear recurrence mechanism of SSM to retain long-range historical information in node states. These two properties create a more scalable pipeline and enhance computational speed.


\begin{figure}[htb]
\centering
 
\subfigure[\textit{DBLP}]{
\includegraphics[width=1.55in]{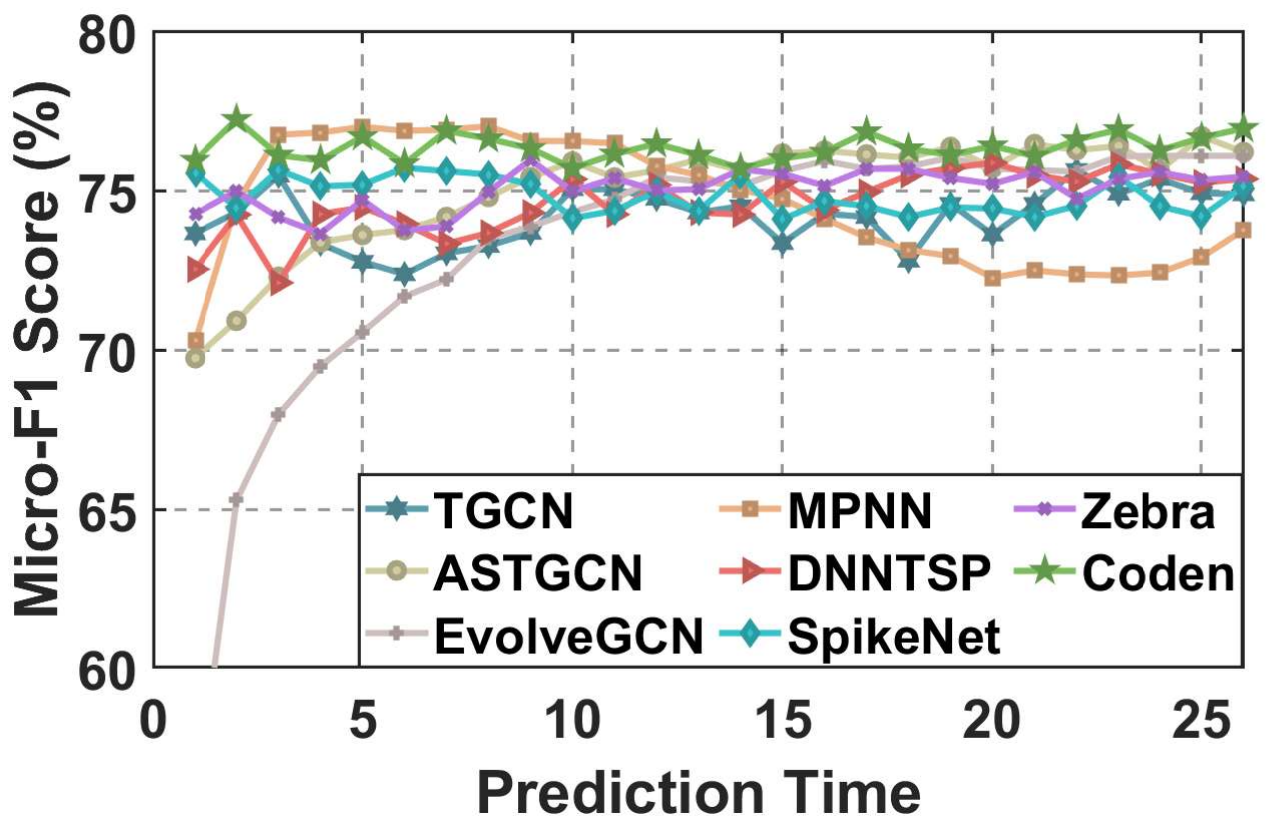}}
\subfigure[\textit{Tmall}]{
\includegraphics[width=1.55in]{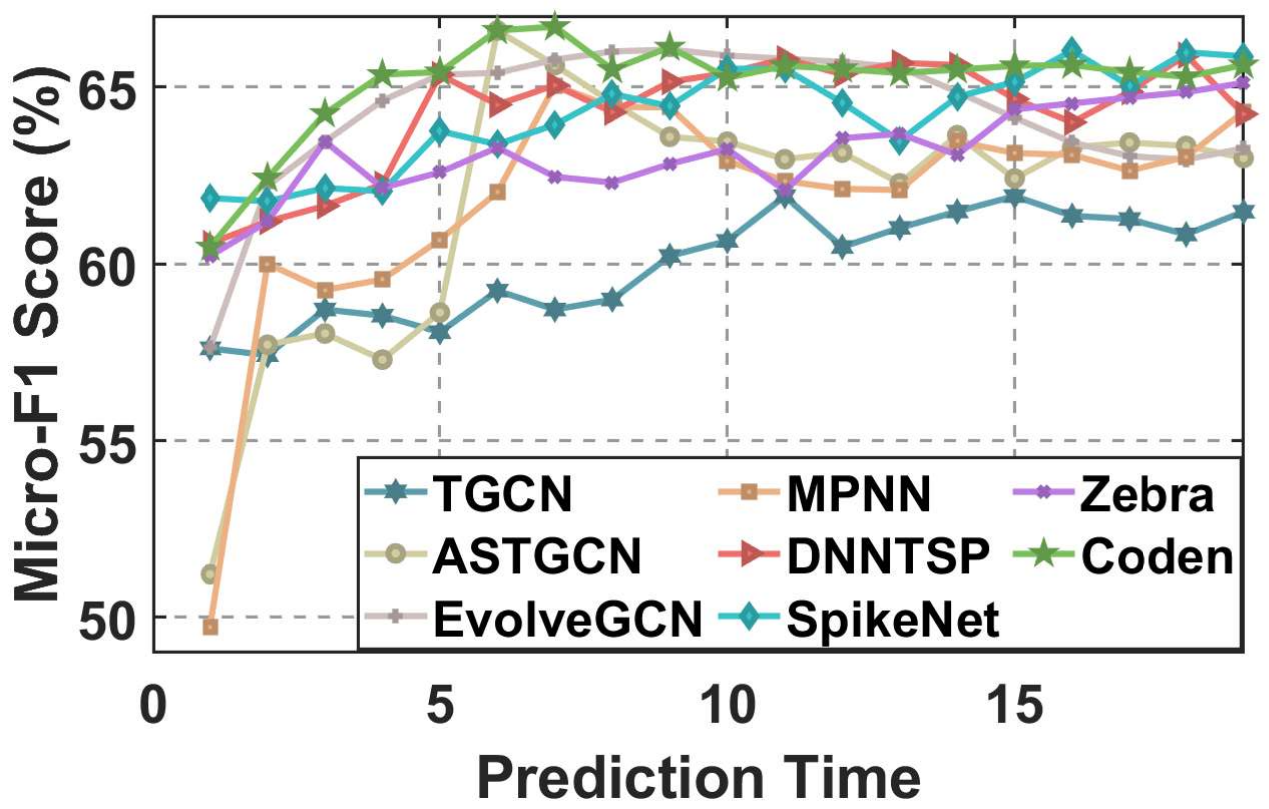}
}

\subfigure[\textit{Reddit}]{
\includegraphics[width=1.55in]{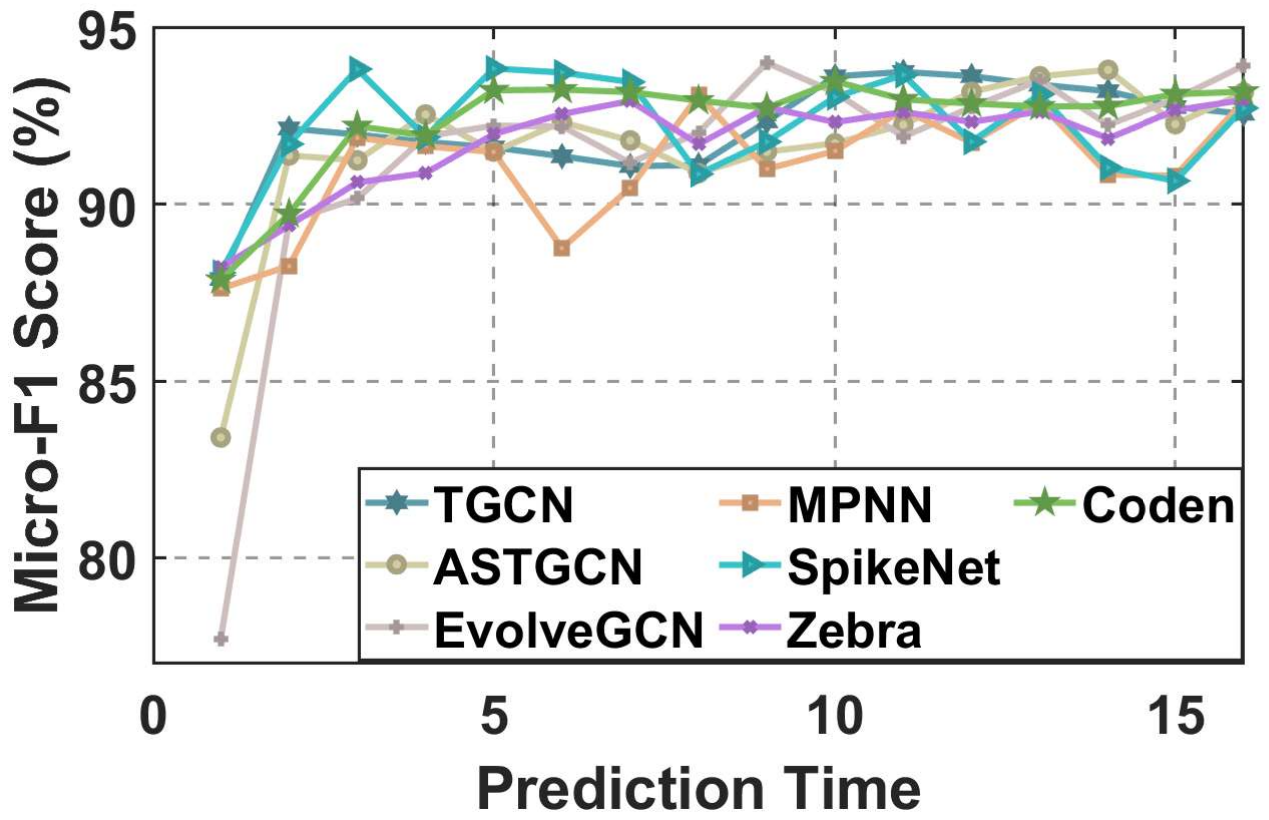}}
\subfigure[\textit{Patent}]{
\includegraphics[width=1.55in]{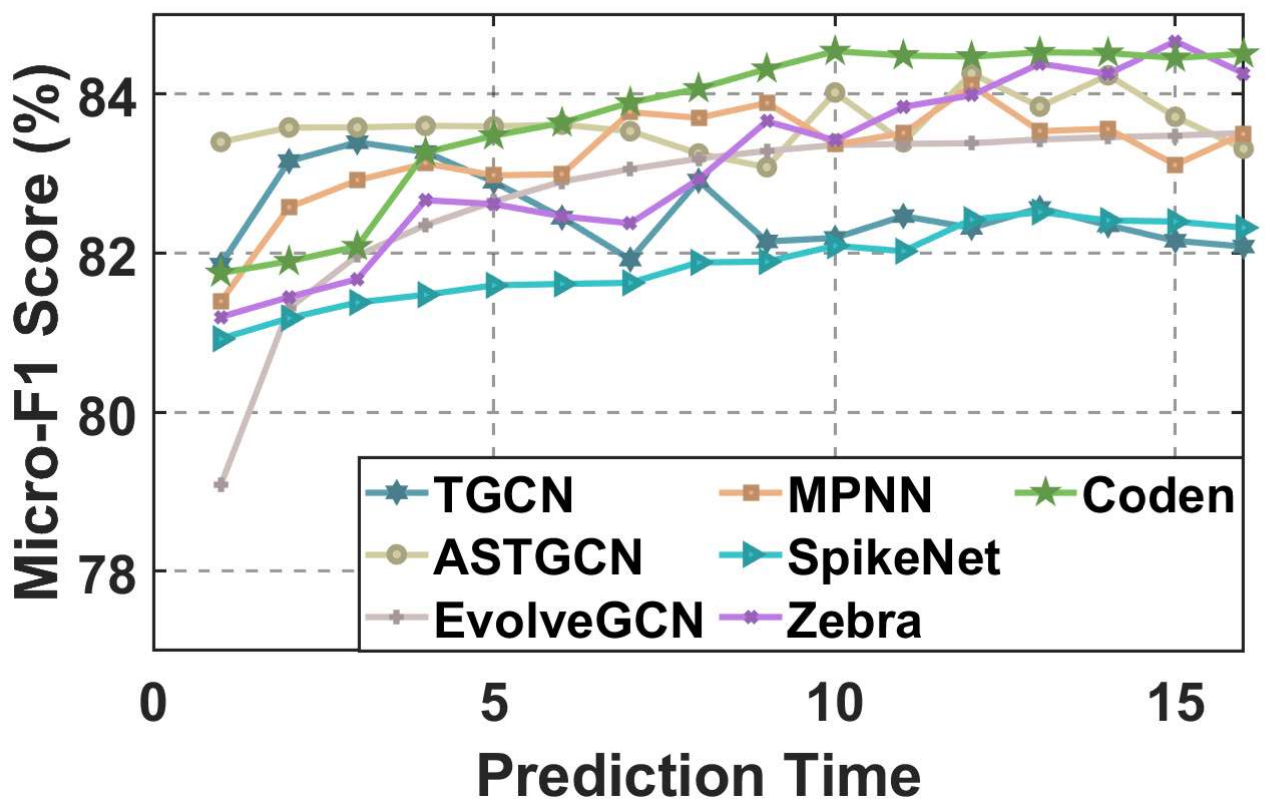}
}
 
\caption{Micro-F1 scores for each prediction time of four datasets.}

\label{fig:accuracy}
\end{figure}

\begin{figure}[tb]
\centering
 
\subfigure[\textit{DBLP}]{
\includegraphics[width=1.55in]{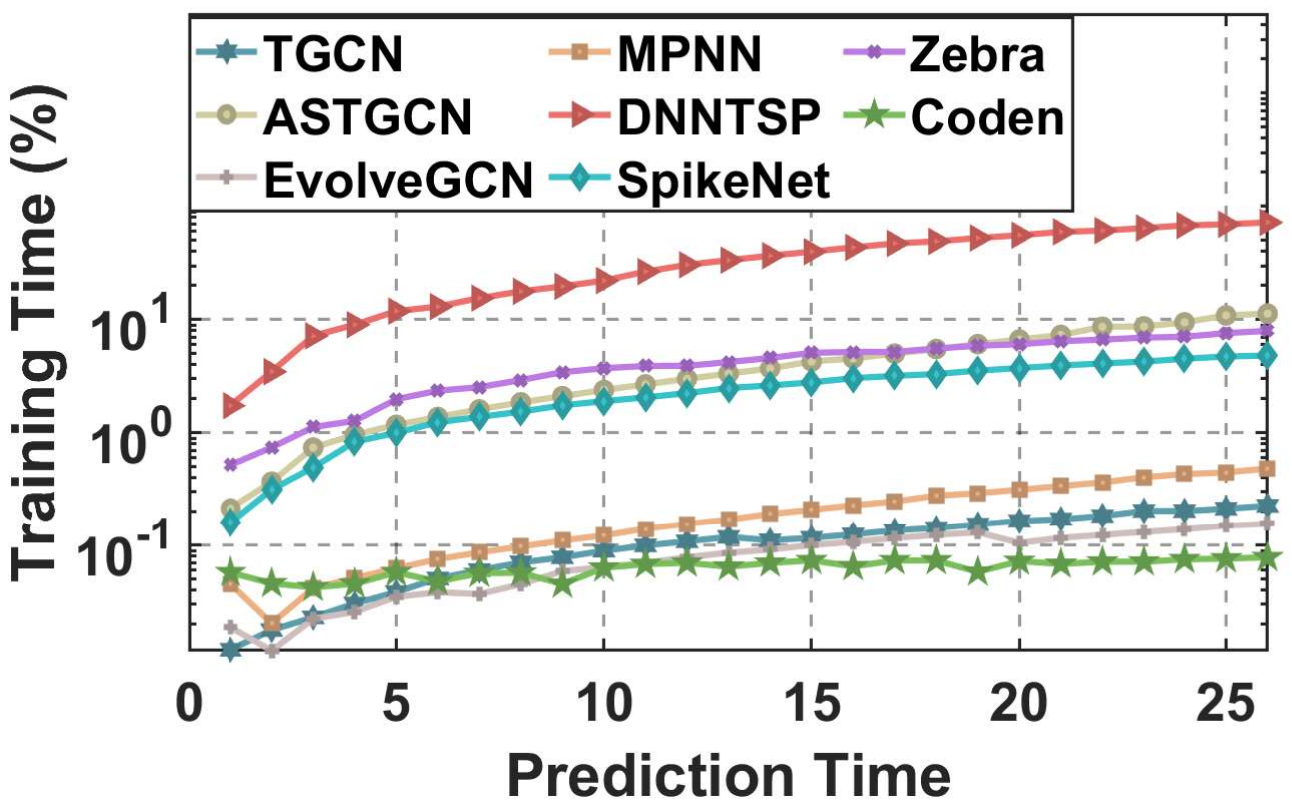}}
\subfigure[\textit{Tmall}]{
\includegraphics[width=1.55in]{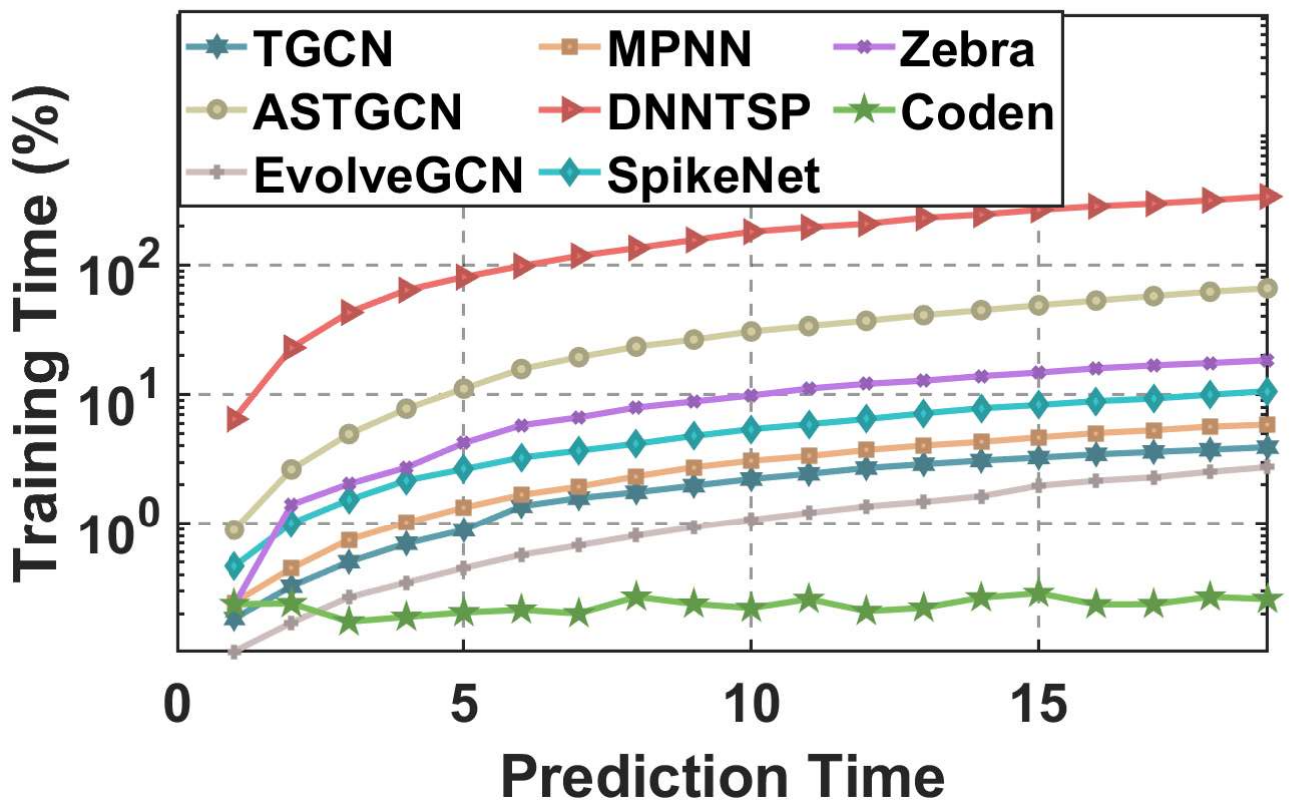}
}

\subfigure[\textit{Reddit}]{
\includegraphics[width=1.55in]{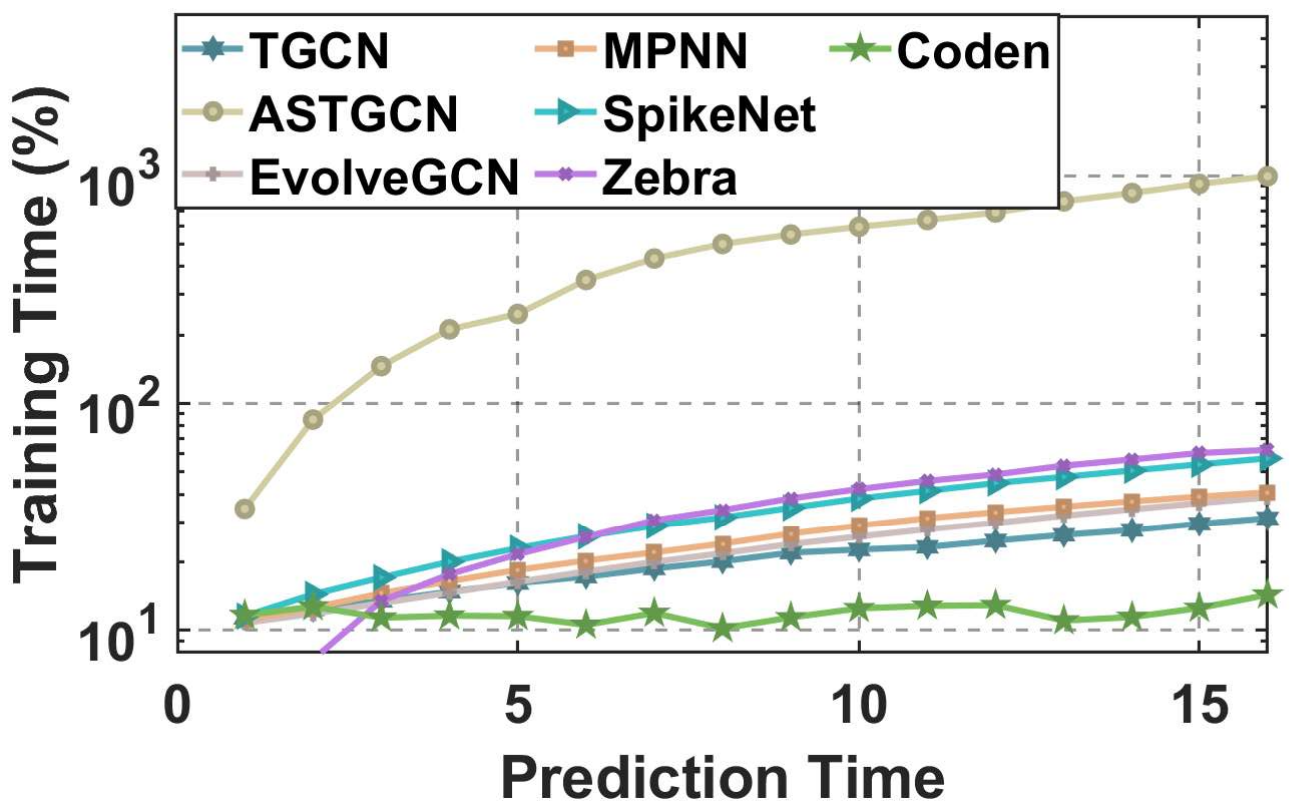}}
\subfigure[\textit{Patent}]{
\includegraphics[width=1.55in]{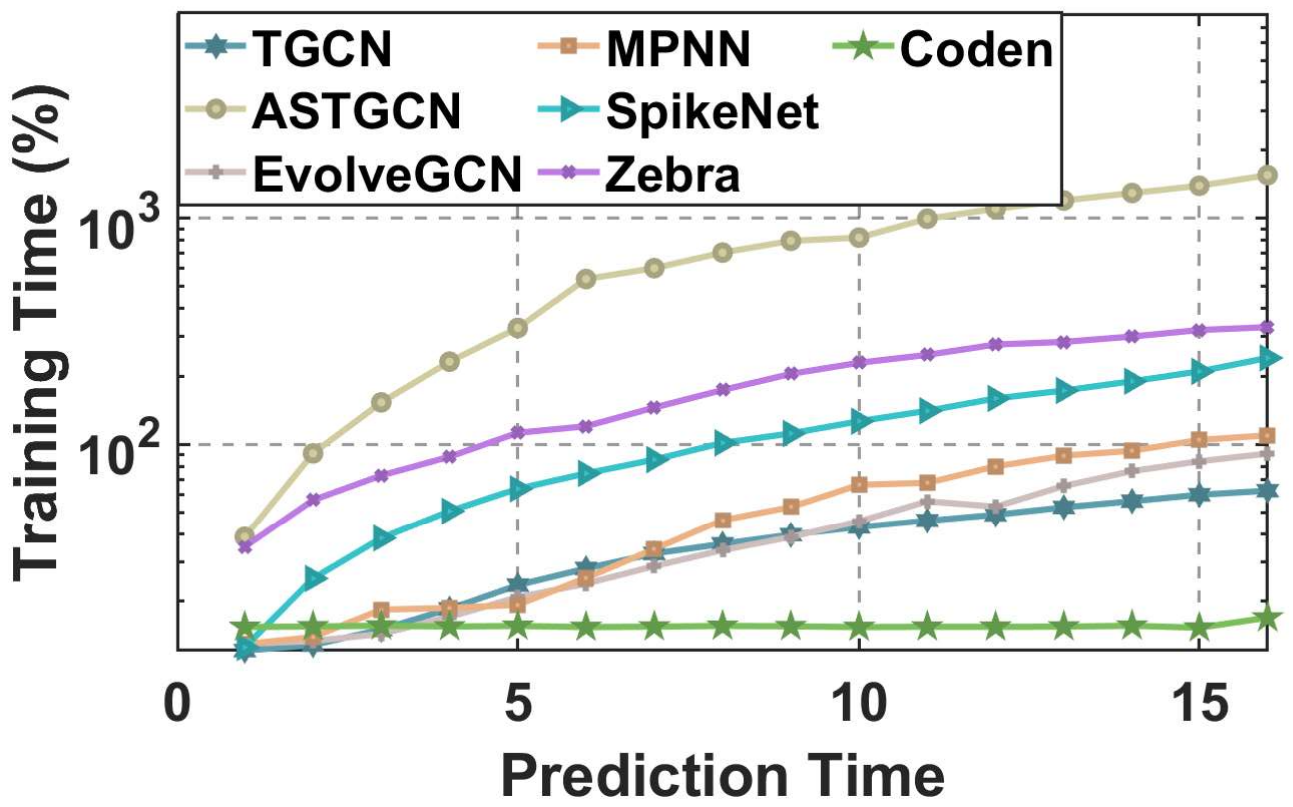}
}
 
\caption{Average training time (per epoch) for each prediction time.}
\label{fig:time_patent}
\end{figure}

\subsection{Ablation Study}\label{sec:ablation}
We then perform an ablation study to demonstrate the contribution of each module in {\sc Coden}.
We compare our model with its three variants: {{\sc Coden}-S} modifies {\sc Coden} by removing the temporal structure and employing an MLP classifier on the current node embeddings for straightforward prediction; {\sc Coden}-R inherits the overall architecture of {\sc Coden} while imposing simple constraints on the discretized SSM parameters (Lemma \ref{lem:coden_gru_form}); {\sc Coden}-A is the alternative model utilizing a attention structure for message passing in Sec. \ref{sec:equ}. As shown in Tab. \ref{table:ablation}, {\sc Coden}-S is the most efficient model as it lacks dependency between the current and previous graphs. The accuracy gap compared to {\sc Coden} highlights that incorporating temporal information can substantially enhance prediction accuracy. On the contrary, {\sc Coden}-A effectively incorporates historical information derived from the attention structure, producing more precise representations compared to {\sc Coden}-S. However, due to its high complexity, the accuracy improvement of {\sc Coden}-A does not offset the loss in efficiency. Moreover, we observed that despite {\sc Coden}-A's significantly longer processing time for temporal information, its predictive performance still falls short of {\sc Coden}. This shortfall may stem from an over-smoothing effect, as elucidated in Sec. \ref{sec:super}.  {Compared to {\sc Coden}-R, {\sc Coden} achieves consistently better predictive performance, showing that relaxing the RNN-style constraints yields a stronger state update mechanism. }

\subsection{Effect of Hyperparameters}\label{exp:hyper}
We also examine the sensitivity of {\sc Coden} to hyperparameters. First, we examine a critical threshold in our framework: the sampling threshold $\lambda$, which determines the embedding sequences appearing in the SSM module and further affects the training time. Second, acting like the RNN gating mechanism, the hidden dimension $F'$ can influence the degree of information compression \cite{mamba}. Hence, we vary it to investigate its impact on our model.
The following experimental results are reported based on the \textit{DBLP} and \textit{Patent} datasets.

\setlength{\tabcolsep}{1mm}{
\begin{table}[t]
\centering
\footnotesize
 
\caption{\small The average accuracy and the time consumption across all time steps for each variant of {\sc Coden}.}
 
\label{table:ablation}
\begin{tabular}[t]{c|cc|cc|cc|cc} \toprule[1pt]
\multirow{2}{*}{\bf Method} & \multicolumn{2}{c|}{\bf DBLP}&\multicolumn{2}{c|}{\bf Tmall} &\multicolumn{2}{c|}{\bf Reddit} & \multicolumn{2}{c}{\bf Patent}\\ & {\bf Ave.} &  {\bf Tim.}
 & {\bf Ave.} & {\bf Tim.} & {\bf Ave.} & {\bf Tim.} & {\bf Ave.}  & {\bf Tim.}\\ \midrule[0.5pt]
 {\sc Coden}-S &{$74.88$}  &\rka{$0.04$} & {$61.24$}&\rka{$0.18$}&{$91.11$} &\rka{$6.86$}&$81.78$ & \rka{$11.99$} \\
 {\sc Coden}-R & $74.33$&\rkb{$0.08$}&$62.32$& $0.26$&{$91.21$ }&$11.95$ & $81.79$& $16.36$\\
 {\sc Coden}-A & \rkb{$76.07$}&$3.19$ &\rkb{$64.69$}&$27.69$& \rkb{$91.67$}&$412.38$&\rkb{$83.09$}&$672.64$ \\
 {{\sc Coden}} &\rka{$ 76.35$} & \rkb{$0.08$}&\rka{$65.14$}&  \rkb{$0.28$}&\rka{$92.38$}&\rkb{$12.10$}&\rka{$83.74$}&\rkb{$17.26$} \\\bottomrule[1pt]
\end{tabular} 
\hfill
\centering

\end{table}
}

\noindent
\textbf{Threshold $\lambda$.}
To achieve the sequence discretization, we employ a threshold $\lambda$ to delay the embedding sampling until the estimated error surpasses this upper bound. This hyperparameter determines the number of sampled embeddings and consequently affects the model's training time. 
As shown in Fig. \ref{fig:lambda and F}, we present the average training time and the average accuracy across all time steps by varying the threshold $\lambda$ within a reasonable range. It has been noted that there is a trade-off between training efficiency and accuracy. Specifically, increasing the value of $\lambda$ can significantly reduce the training time at the expense of prediction accuracy. When $\lambda$ surpasses a certain value ($0.05$ for \textit{DBLP} and $0.1$ for \textit{Patent}) and becomes even larger, the model samples only a minimal number of embeddings, which is inclined to a scenario akin of processing a single snapshot. This suggests again that incorporating dynamic modeling is crucial for improving prediction accuracy.

\vspace{1mm}
\noindent
\textbf{Hidden dimension $F'$.} The hidden size $F'$ in SSM acts as the shape of selected information from the input. The experimental result from Fig. \ref{fig:lambda and F} shows that various settings typically do not significantly influence the prediction accuracy of Coden, and increasing the value of $F'$ from 16 yields only a marginal improvement of less than $0.5\%$. However, an overlarge size of the projected information ($F'>64$) will typically degrade the efficiency of training. Consequently, we set $F' = 16$ as our default configuration to deliver the optimal performance.

\begin{figure}[htb] 
\centering
\subfigure[DBLP]{
\includegraphics[width=1.54in]{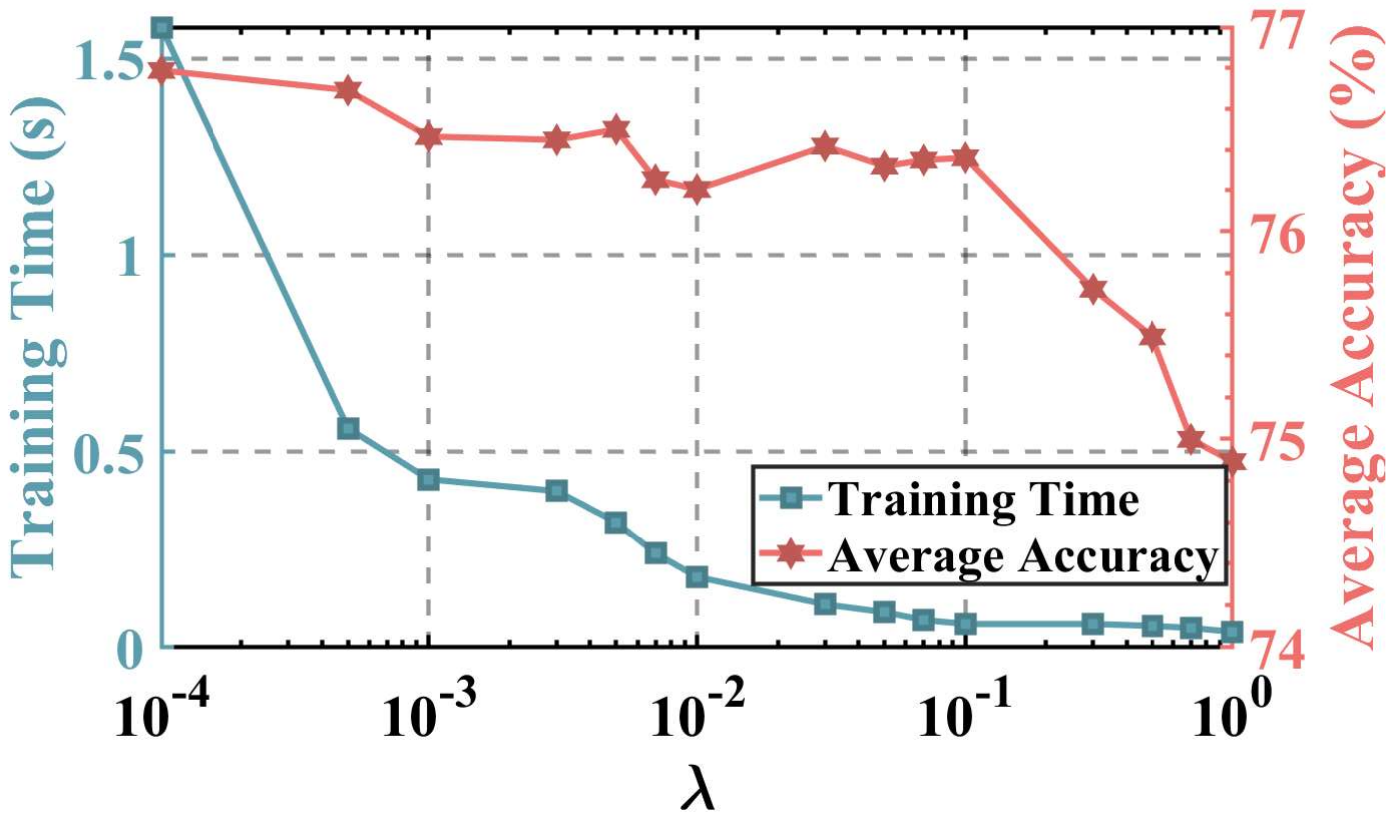}\label{fig:lambda_d}
\includegraphics[width=1.56in]{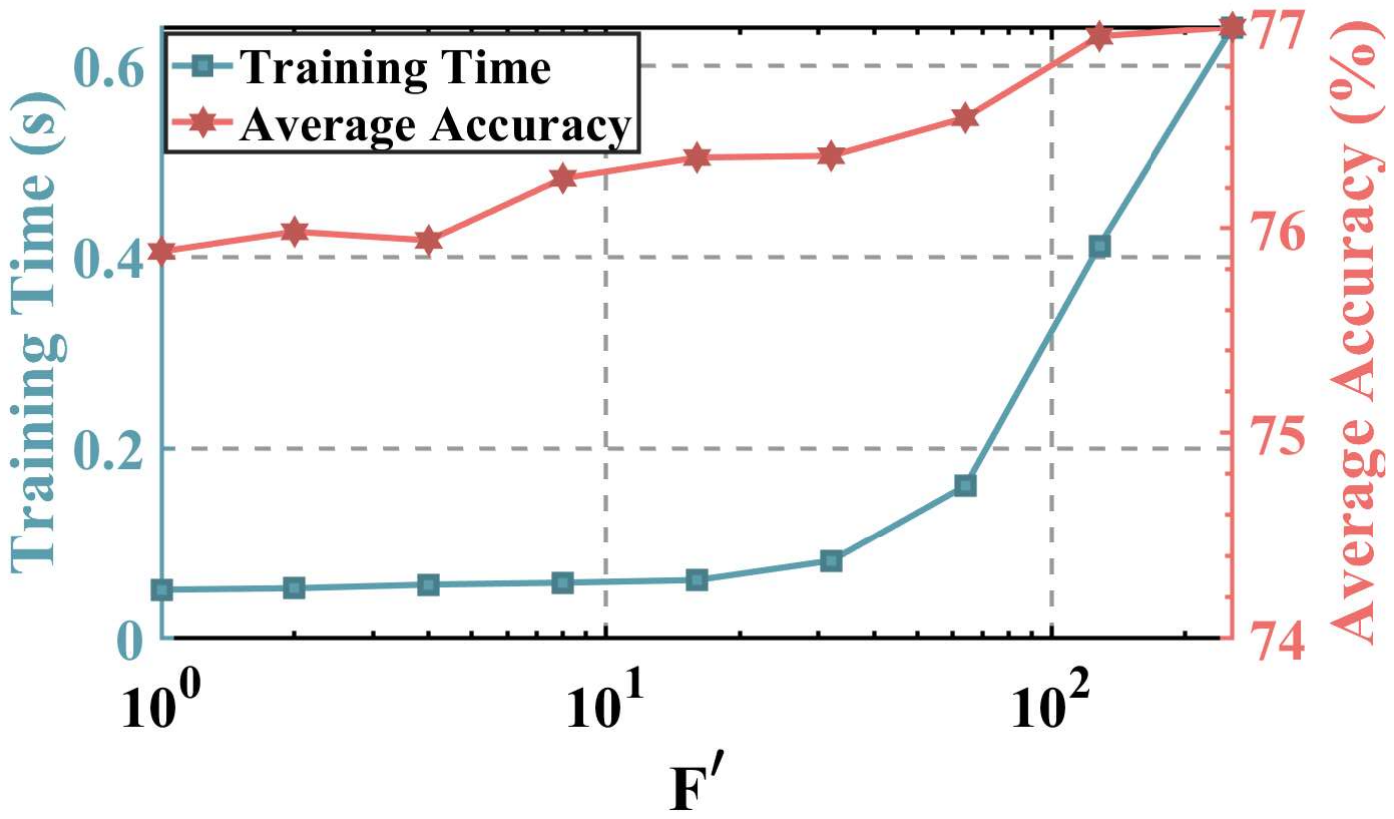}\label{fig:F_d}
}

\subfigure[Patent]{
\includegraphics[width=1.55in]{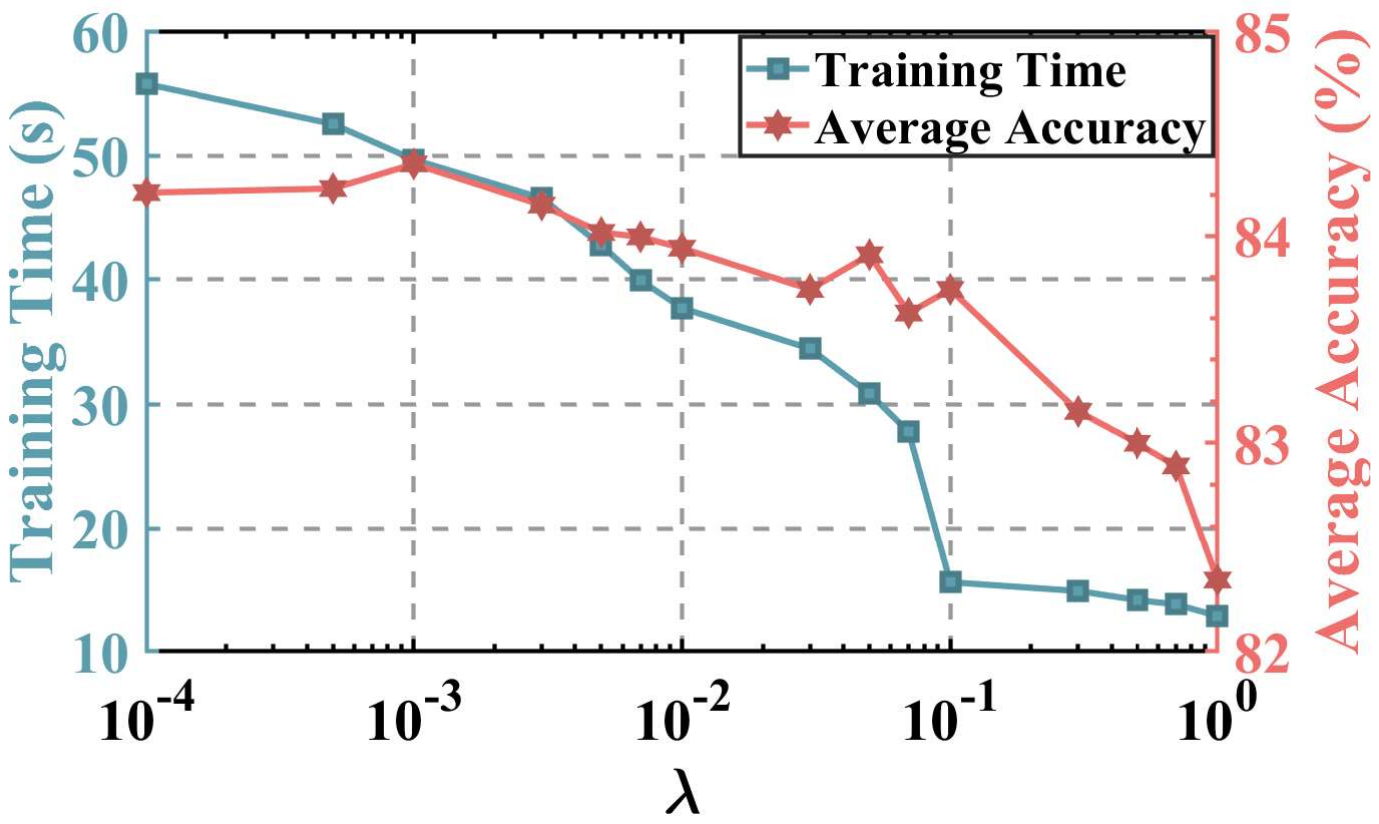}\label{fig:lambda_p}
\includegraphics[width=1.57in]{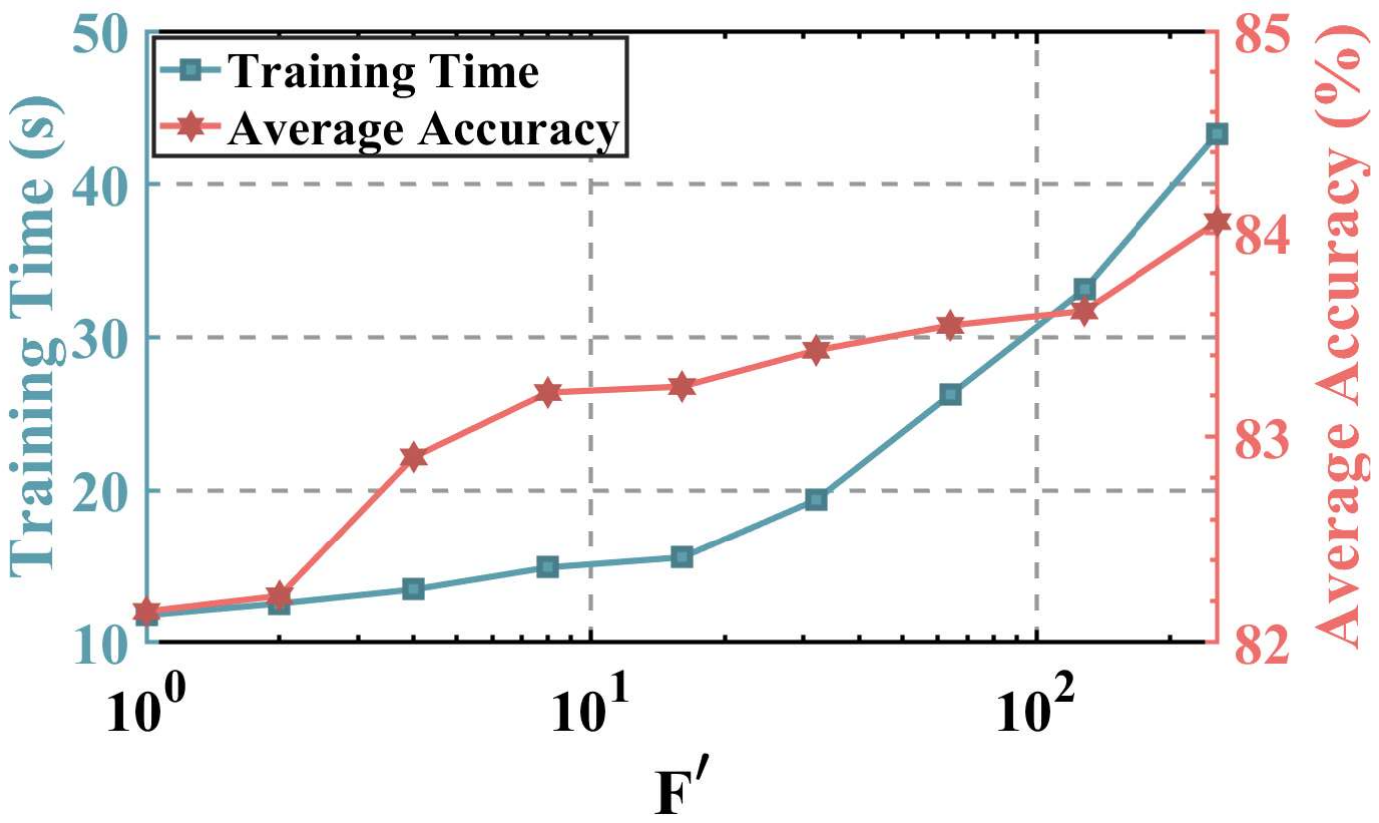}\label{fig:F_p}
}
\centering
 
\caption{\small The average training time and the average accuracy when setting different $\lambda$ and $F'$ on \textit{DBLP} and \textit{Patent} dataset.}\label{fig:lambda and F}

\end{figure}

\subsection{Information compression under flexible arrival patterns. }

In the experiments above, we added edges to the initial graph following an edge-inclined arrival pattern. However, using a more flexible arrival pattern would allow for a more thorough examination of the models' memory capabilities, particularly in assessing whether they experience significant performance degradation. Therefore, we employ more edge arrival patterns on each model and report the corresponding accuracy on \textit{Reddit} dataset, as depicted in Fig. \ref{fig:pattern}. In the edge-declined pattern, we reverse the order of the edge-inclined setting and remove the batch of edges sequentially from the final snapshot. Then, we randomly add and remove one batch of edges simultaneously to the initial graph, which is denoted as the edge-balanced pattern. In the edge-declined pattern, we observe that as the edges are missing gradually, all methods suffer from a performance decrease. However, {\sc Coden} is only slightly affected by the removal of edge batches, which particularly achieves a non-trivial improvement on the initial graph compared with the edge-inclined pattern at time step $16$ (e.g., $91.86\%$ vs $87.86\%$). Moreover, {\sc Coden} demonstrates a significant improvement in this pattern, achieving an average prediction accuracy of $93.48\%$, indicating its ability to effectively compress historical information for future predictions.

\begin{figure}[htb]
    \centering
\includegraphics[width=3.35in]{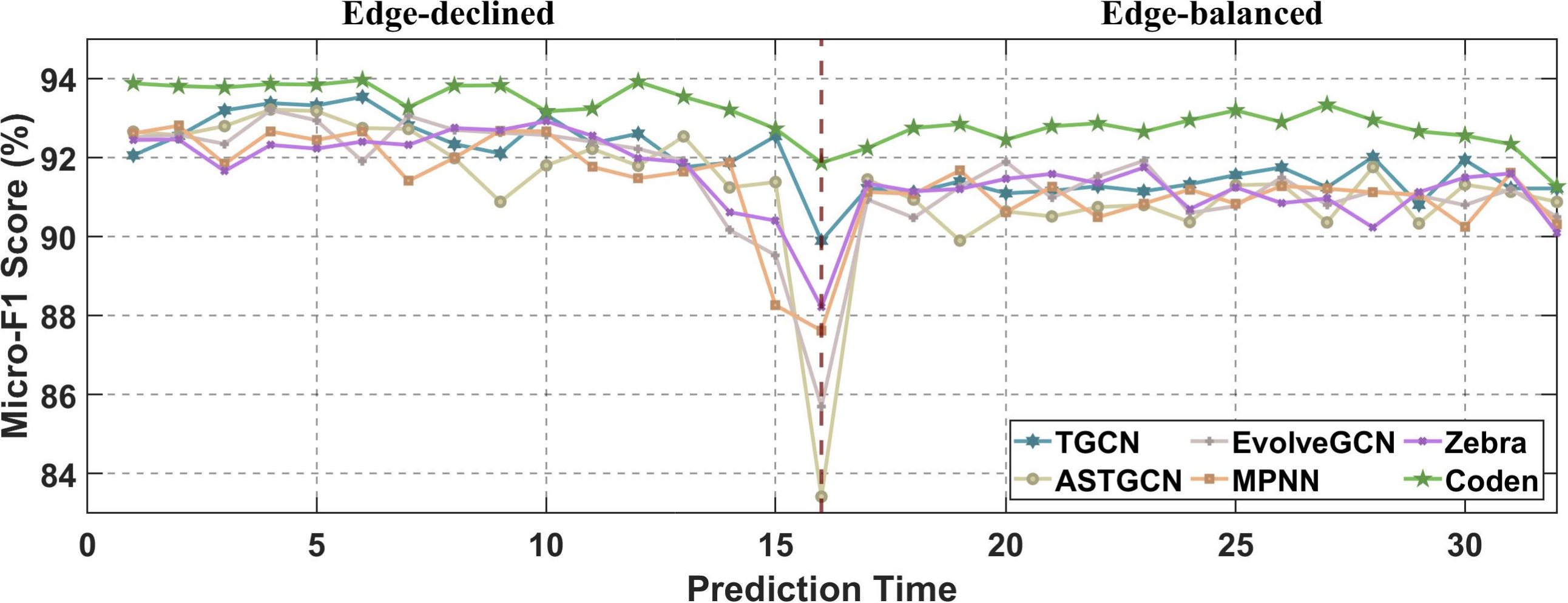}
    \caption{\small Micro-F1 scores under the Edge-declined and Edge-balanced patterns on \textit{Reddit} dataset.}

    \label{fig:pattern}
\end{figure}

\section{Conclusion}
In this work, we present {\sc Coden}, an innovative framework intending to improve the performance of TGNNs in a dynamic scenario requiring continuous predictions. {\sc Coden} performs a unique state-updating mechanism, where the node embeddings are updated incrementally and facilitate the compression
of historical information in the state. Theoretically, {\sc Coden} can provide an accuracy guarantee after the embedding update and approximately achieve information compression of the continuously evolving process.  Our extensive experimental results demonstrate {\sc Coden}'s superior performance in both efficacy and efficiency when compared with state-of-the-art methods.

\bibliographystyle{IEEEtran}
\bibliography{sample}

\begin{IEEEbiography}[ \raisebox{0.22\height}{\includegraphics[width=1in,height=1.25in,clip,keepaspectratio]{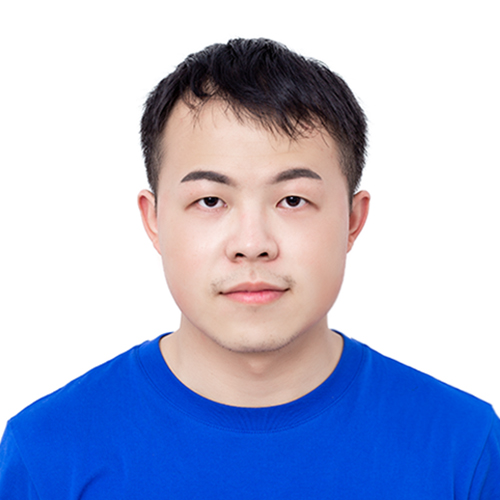}}]{Zulun Zhu} received the bachelor’s degree from Zhejiang University and the master’s degree from the National University of Defense Technology. He is currently a fourth-year PhD candidate with the College of Computing and Data Science, Nanyang Technological University. His research interests include databases for graphs, machine learning, graph neural networks, and spiking neural networks.
\end{IEEEbiography}

\begin{IEEEbiography}[ \raisebox{0.22\height}{\includegraphics[width=1in,height=1.25in,clip,keepaspectratio]{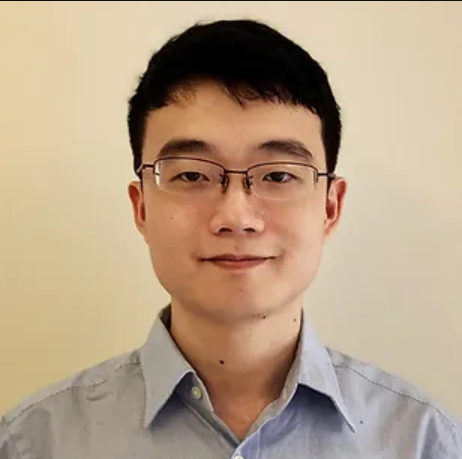}}]{Siqiang Luo}
is currently an assistant professor with the College of Computing and Data Science, Nanyang Technological University. He
was a postdoc at Harvard University from 2019 to 2020. He received his Ph.D. degree in computer science from the University of Hong Kong
in 2019. His research interest includes scalable data structures and systems, as well as graph analytics and mining.
\end{IEEEbiography}
\iftrue

\clearpage
\newpage

\appendix

\section{Appendix}

\subsection{Proof of Lemma \ref{lemma:single shift}}
\begin{proof}
    According to the definition of 1-Norm of the matrix, we have:
    {\footnotesize
    \begin{align*}
        &||{\bm H}^{(t+1)} - {\bm H}^{(t)}||_1 = \max_{f\in F}\sum_u|{\bm h}^{(t+1)}(u))-{\bm h}^{(t)}(u))|\\
        &\leq \max_{f\in F}\sum_u|{\bm z}^{(t+1)}(u)-{\bm z}^{(t)}(u))+2\epsilon|
    \end{align*}}
We can express ${\bm Z}^{(t)}$ as the result of global smoothness \cite{ma2021unified, unifews} which satisfies:
{\footnotesize
\begin{align} \label{equ:laplacian}
    {\bm Z}^{(t)} = \left({\bm I}+c {\bm L}^{(t)}\right)^{-1} {\bm X}^{(t)},
\end{align}}
where $c = \frac{1}{\alpha} - 1$ and ${\bm L}^{(t)} = {\bm I} - {\bm P}^{(t)}$ is the Laplacian matrix and denotes the topological status of graph $\mathcal{G}^{(t)}$. Hence we have:
{\footnotesize
   \begin{align*}
        &||{\bm H}^{(t+1)} - {\bm H}^{(t)}||_1 \\
        &\leq \max_{f\in F}\left\lVert\left( \left({\bm I}+c {\bm L}^{(t+1)}\right)^{-1}  - \left({\bm I}+c {\bm L}^{(t)}\right)^{-1}\right){\bm x}^{(t)}\right\rVert_1  +2n\epsilon\\ 
&\leq c\max_{f\in F}\left\lVert\left({\bm I}+c {\bm L}^{(t+1)}\right)^{-1}\left( {\bm L}^{(t+1)} - {\bm L}^{(t)}\right)\left({\bm I}+c {\bm L}^{(t)}\right)^{-1}{\bm x}^{(t)}\right\rVert_1 \\
        &+2n\epsilon\leq c \max_{f\in F}\left\lVert\left( {\bm P}^{(t+1)} - {\bm P}^{(t)}\right){\bm x}^{(t)}\right\rVert_1  +2n\epsilon \\
        &\leq \frac{1-\alpha}{\alpha}\left\lVert\Delta {\bm P}\cdot{\bm x}_{max}\right\rVert_1 +2n\epsilon,
    \end{align*}
}
where ${\bm x}_{max}$ is the row-wise maximum absolute value vector and $i$-th entry of ${\bm x}_{max}$ is defined as: $\{{\bm x}_{max}\}_i = \max_{1\leq j\leq n}|{\bm X}_{ij}|$. Proof finished.

\end{proof}

\subsection{Proof of Proposition \ref{pro:approx}}

\begin{proof}
  We assume there come $p$ edge updates between $t$ and $t+1$. Since we can not access the concrete timestamps of these $p$ edges, we can assume $k$-th each edge arrives at the system at a hidden timestamp $t+\tau_k$ ($1 \leq k \leq p$) without loss of generality. Specifically, we have $t+\tau_p = t+1$. According to \cite{graphssm}, the complete evolving process from $t$ to $t+1$ can be formulated as:
  {\footnotesize
  \begin{align*}
      {\bm M}^{(t+1)} &= \bar{\mathcal{A}} {\bm M}^{(t)} + \bar{\mathcal{B}} \sum_{k=1}^{p}\sum_{l=0}^{\infty} \alpha\left((1-\alpha)^l {\bm P}^{(t+\tau_k)}\right)^l {\bm X} {\Lambda}_{k}\\
      & = \bar{\mathcal{A}} {\bm M}^{(t)} + \bar{\mathcal{B}} \sum_{k=1}^{p}\alpha\left({\bm I}-(1-\alpha){\bm P}^{(t+\tau_k)}\right)^{-1} {\bm X} {\Lambda}_{k}.
  \end{align*}}
When setting $\alpha = \frac{1}{c+1}$, we have:
{\footnotesize
\begin{align*}
    {\bm M}^{(t+1)} &= \bar{\mathcal{A}} {\bm M}^{(t)} + \bar{\mathcal{B}} \sum_{k=1}^{p} \left({\bm I}+c {\bm L}^{(t+\tau_k)}\right)^{-1} {\bm X} {\Lambda}_{k},
\end{align*}}
where ${\bm L}^{(t)}$ is the Laplacian matrix at time $t$.
Given the exact PPR embedding $\left({\bm I}+c {\bm L}^{(t+1)}\right)^{-1} {\bm X}$ at time $t+1$, we have:
{\tiny
\begin{align*}
    &\left\lVert\left({\bm I}+c {\bm L}^{(t+1)}\right)^{-1} {\bm X} - \sum_{k=1}^{p} \left({\bm I}+c {\bm L}^{(t+\tau_k)}\right)^{-1} {\bm X} {\Lambda}_{k}\right\rVert\\
    &= \left\lVert\left({\bm I}+c {\bm L}^{(t+1)}\right)^{-1} {\bm X} \sum_{k=1}^{p} {\Lambda}_{k} - \sum_{k=1}^{p} \left({\bm I}+c {\bm L}^{(t+\tau_k)}\right)^{-1} {\bm X} {\Lambda}_{k}\right\rVert\\
    &=\left\lVert \sum_{k=1}^{p} \left( \left({\bm I}+c {\bm L}^{(t+1)}\right)^{-1}  - \left({\bm I}+c {\bm L}^{(t+\tau_k)}\right)^{-1} \right) {\bm X} {\Lambda}_{k}\right\rVert\\
    &\leq \left\lVert\sum_{k=1}^{p} \left({\bm I}+c {\bm L}^{(t+1)}\right)^{-1} \left(c {\bm L}^{(t+1)} -c {\bm L}^{(t+\tau_k)}\right)\left({\bm I}+c {\bm L}^{(t+\tau_k)}\right)^{-1} {\bm X} {\Lambda}_{k}\right\rVert\\
    &\leq c \left\lVert\sum_{k=1}^{p}\left({\bm I}+c {\bm L}^{(t+1)}\right)^{-1}\left( {\bm L}^{(t+1)} - {\bm L}^{(t+\tau_k)}\right)\left({\bm I}+c {\bm L}^{(t+\tau_k)}\right)^{-1}{\bm X}\right\rVert \left\lVert\sum_{k=1}^{p} {\Lambda}_{k}\right\rVert\\
    &\leq c\sum_{k=1}^{p} \frac{\left\lVert\left( {\bm L}^{(t+1)} - {\bm L}^{(t+\tau_k)}\right){\bm X}\right\rVert_2}{\left(1+c\lambda_1({\bm L}^{(t+1)})\right)\left(1+c\lambda_1({\bm L}^{(t+\tau_k)})\right)} \\
    &\leq c\sum_{k=1}^{p}\left\lVert\left( {\bm L}^{(t+1)} - {\bm L}^{(t+\tau_k)}\right){\bm X}\right\rVert \leq \lambda
\end{align*}}

\end{proof}

\subsection{Proof of Lemma \ref{lem:coden_gru_form}}

\begin{proof}
ZOH discretization maps continuous parameters $(\Delta,\mathcal A,\mathcal B)$ to discrete parameters
$(\bar{\mathcal A},\bar{\mathcal B})$ as
\begin{align*}
\bar{\mathcal A} &= \exp(\Delta\mathcal A), \\
\bar{\mathcal B} &= (\Delta\mathcal A)^{-1}\bigl(\exp(\Delta\mathcal A)-\bm I\bigr)\,\Delta\mathcal B.
\end{align*}
Apply the channel-wise constraints $\mathcal A=-\bm I$ and $\mathcal B=\bm I$, and denote the positive step size at time $t$ by $\Delta^{(t)}$.
Then
\begin{align*}
\bar{\mathcal A}^{(t)}
&= \exp\bigl(\Delta^{(t)}(-\bm I)\bigr)
= \exp(-\Delta^{(t)})\bm I .
\end{align*}
For $\bar{\mathcal B}^{(t)}$, note that $\Delta^{(t)}\mathcal A=-\Delta^{(t)}\bm I$, hence
$(\Delta^{(t)}\mathcal A)^{-1}=(-\Delta^{(t)})^{-1}\bm I$ and
$\exp(\Delta^{(t)}\mathcal A)-\bm I=\exp(-\Delta^{(t)})\bm I-\bm I$.
Therefore,
\begin{align*}
\bar{\mathcal B}^{(t)}
&= (-\Delta^{(t)})^{-1}\bigl(\exp(-\Delta^{(t)})\bm I-\bm I\bigr)\,\Delta^{(t)}\bm I \\
&= \bm I-\exp(-\Delta^{(t)})\bm I .
\end{align*}
Substituting into Equ.~\ref{equ:Coden_all} gives
\begin{align*}
\bm M^{(t+1)}
&= \exp(-\Delta^{(t+1)})\bm I\,\bm M^{(t)}
+\bigl(\bm I-\exp(-\Delta^{(t+1)})\bm I\bigr)\bm H^{(t+1)}.
\end{align*}
Let $\bm z^{(t+1)}:=\exp(-\Delta^{(t+1)})\bm I$. Since $\Delta^{(t+1)}>0$, we have $\bm z^{(t+1)}\in(0,1)$ and thus
\begin{align*}
\bm M^{(t+1)}
&= \bm z^{(t+1)}\bm M^{(t)}+\bigl(\bm I-\bm z^{(t+1)}\bigr)\bm H^{(t+1)}.
\end{align*}
Finally, if $\Delta^{(t)}=\mathrm{softplus}(s^{(t)})=\log(1+\exp(s^{(t)}))$, then
\begin{align*}
\bm z^{(t)}
&= \exp(-\Delta^{(t)})\bm I
= \exp\bigl(-\log(1+\exp(s^{(t)}))\bigr)\bm I \\
&= \frac{1}{1+\exp(s^{(t)})}\bm I
= \sigma(-s^{(t)})\bm I,
\end{align*}
which completes the proof.
\end{proof}

\subsection{Proof of Lemma \ref{lemma:emb complexity}}
\begin{proof}
    We first discuss the simple situation where only one edge $(u,v)$ is added into graph $\mathcal{G}^{(t)}$ and transfer it into $\mathcal{G}^{(t+1)}$. Then we can naturally derive the propagation complexity of a single dimension as $\|(1-\alpha)\left( {\bm P}^{(t+1)}- {\bm P}^{(t)}\right)$ $ {\bm h}^{(t)}\|_1/\alpha\epsilon$,
where $ {\bm h}^{(t)}$ is one dimension of $ {\bm H}^{(t)}$. As demonstrated in Alg. \ref{alg:increment}, given an added edge $(u,v)$  the residue of $\|(1-\alpha)\left( {\bm P}^{(t+1)}- {\bm P}^{(t)}\right)$ $ {\bm h}^{(t)}\|_1$ can be divided as: 
{\footnotesize
\begin{align*}
    &{\|(1-\alpha)\left( {\bm P}^{(t+1)}- {\bm P}^{(t)}\right) {\bm h}^{(t)}\|_1} 
    = \left|(1-\alpha){{\bm {h}}^{(t)}(u)}/{|\mathcal{N}_{out}^{(t+1)}(u)|}\right|\\& + \left|\sum_{w\in \mathcal{N}_{out}^{(t+1)}(u)}  (1-\alpha){{\bm {h}}^{(t)}(u)}\left({1}/{|\mathcal{N}_{out}^{(t+1)}(u)|} -{1}/{|\mathcal{N}_{out}^{(t)}(u)|} \right)\right| \\
    &{\leq}  (1-\alpha)| {\bm h}^{(t)} (u)| + \left| |\mathcal{N}_{out}^{(t+1)}(u)| \cdot  (1-\alpha){{\bm {h}}^{(t)}(u)}\cdot {1}/{|\mathcal{N}_{out}^{(t+1)}(u)|}  \right|\\
    &\overset{(1)}{\leq} 2 (1-\alpha)| {\bm \phi}^{(t)} (u)|+2 (1-\alpha) \epsilon,
\end{align*}
}
where ${\bm \phi}^{(t)} = \alpha\left(I-(1-\alpha){\bm P}^{(t)}\right)^{-1} {\bm x}^{(t)}$ and (1) holds since ${{\bm {h}}^{(t)}(u)}$ is the underestimation of $ {\bm \phi}^{(t)}(u)$. Under pattern (i), edges arrive in the system randomly so that each node has the same probability of being the start point of the changed edge. Therefore we can obtain the expected time for each update as 
 $E[\frac{2 (1-\alpha)| {\bm \phi}^{(t)} (u)|+2 (1-\alpha) \epsilon}{\alpha\epsilon}] = \frac{2(1-\alpha) ||{\bm x}^{(t)}||_1}{\epsilon n^{(t)}}+\frac{2(1-\alpha)}{\alpha}$.
Given there exist $p$ edge updates in a batch at time $t_k$, we add the superscript of ${\bm x}^{(t_k)}$ and formulate the complexity for propagation as $O(p\sum_{i=1}^F\frac{||{\bm x}_i^{(t_k)}||_1}{\epsilon n^{(t_k)}})$. Under pattern (ii), we have $\frac{2 (1-\alpha)| {\bm \phi}^{(t)} (u)|}{\alpha\epsilon}\leq \frac{2 (1-\alpha)\max_{u\in \mathcal{V}}|{\bm x}^{(t)}(u)|}{\epsilon }$, which indicates the complexity of this case is 
$O(p\sum_{i=1}^F\frac{\max_{u\in \mathcal{V}}|{\bm x}_i^{(t_k)}(u)|}{\epsilon})$
\end{proof}

\subsection{Proof of Lemma \ref{lemma: up state}}
\begin{proof}

Based on Alg. \ref{alg:Coden}, for an edge update $(u,v)$, the increased mass of $\sigma$ is at most $\frac{1-\alpha}{\alpha}\left\lVert\left( {\bm P}^{(t)} - {\bm P}^{(t-1)}\right)\cdot{\bm x}_{max}\right\rVert_1 +2n\epsilon$. Following the derivation of Lemma \ref{lemma:emb complexity}, we can naturally obtain the upper bound of this term as $2 (1-\alpha)| {\bm x}_{max} (u)|+2n\epsilon$. 
Therefore, the largest number of edges contained before $\sigma >\lambda$ is at least $\lambda/(2 (1-\alpha)| {\bm x}_{max} (u)|+2n\epsilon)$. Then given $p$ update events, the number of embeddings sampled $L$ can be bounded as $L \leq (2 (1-\alpha)| {\bm x}_{max} (u)|+2n\epsilon)p/\lambda$. Finally, the time complexity of updating the node state in SSM is $O(n^{(t)}LF^2)$ \cite{graphssm,mamba}. Therefore, we can naturally obtain the complexity of updating node states as $O(\frac{||{\bm x}_{max}||_1+\epsilon\left(n^{(t)}\right)^2}{\lambda}pF^2)$ under pattern (i) and $O(\frac{ \max_{u\in \mathcal{V}^{(t)}}|{\bm x}_{max}(u)| n^{(t)}+\epsilon\left(n^{(t)}\right)^2}{\lambda}pF^2)$ under pattern (ii).
\end{proof}

\subsection{Proof of Proposition \ref{lemma:energy}}

\begin{proof}



Based on the definition of the Dirichlet Energy and simplifying ${\bm I}-{{\bm A}^{(t)}}^{\top}{{\bm D}^{(t)}}^{-1}$ as $\Delta$, we have:
{\tiny
\begin{align*}
    \mathbb{DE}(\boldsymbol{M}^{(t)}_A) = \mathbb{DE}\left(\mathrm{softmax}\left(\frac{\left(\boldsymbol{W}_q\boldsymbol{H}^{(t)}\right)\cdot\left(\boldsymbol{W}_k\boldsymbol{H}^{(s)}\right)^\top}{\sqrt{F^{\prime}}}\right)_{0:t} \cdot \left(\boldsymbol{W}_v\boldsymbol{H}^{(s)}\right)_{0:t}\right).
\end{align*}
}
To simplify the demonstration, we denote 
{\footnotesize
\begin{align*}
{\bm S}_t = \mathrm{softmax}\left({\left(\boldsymbol{W}_q\boldsymbol{H}^{(t)}\right)
\cdot\left(\boldsymbol{W}_k\boldsymbol{H}^{(s)}\right)^\top}\right)_{0:t}. 
\end{align*}
}
Since ${\bm S}_t$ is a Row-Stochastic matrix where the sum of each row equals to 1, we can obtain the maximum eigenvalue $\sigma_{max} ({\bm S}_t)$ of ${\bm S}_t$ as  $\sigma_{max} ({\bm S}_t)\leq 1$.
Then we have:
{\footnotesize
\begin{align*}
    \mathbb{DE}(\boldsymbol{M}^{(t)}_A) &= tr\left({\bm S}_t \left(\boldsymbol{H}^{(s)}\right)_{0:t} \Delta {\bm S}_t^{\top} \left( \boldsymbol{H}^{(s)}\right)_{0:t}^{\top} \right) /F\\
    & = tr\left( \left( \boldsymbol{H}^{(s)}\right)_{0:t} \Delta  \left( \boldsymbol{H}^{(s)}\right)_{0:t}^{\top} {\bm S}_t {\bm S}_t^{\top}\right) /F\\
    &\leq tr\left( \left( \boldsymbol{H}^{(s)}\right)_{0:t} \Delta  \left( \boldsymbol{H}^{(s)}\right)_{0:t}^{\top}\right) \sigma_{max} ({\bm S}_t {\bm S}_t^{\top}) /F\\
    &\leq tr\left( \left( \boldsymbol{H}^{(s)}\right)_{0:t} \Delta  \left( \boldsymbol{H}^{(s)}\right)_{0:t}^{\top}\right)/F \\
    &= \sum_{s=0}^t tr\left( \left( \boldsymbol{H}^{(s)}\right) \left( \boldsymbol{H}^{(s)}\right)^{\top}\Delta  \right)/F
\end{align*}}

Following the similar derivation, we can obtain the Dirichlet Energy $\mathbb{DE}(\boldsymbol{M}^{(t)}_C)$ as:
{\footnotesize
\begin{align*}
    \mathbb{DE}(\boldsymbol{M}^{(t)}_C) &\ge \sum_{s=0}^t tr\left( \prod_{i=s+1}^t \bar{\mathcal{A}}^{(i)} \bar{\mathcal{B}}^{(s)} \left( \boldsymbol{H}^{(s)}\right) \left( \boldsymbol{H}^{(s)}\right)^{\top}\Delta  \right) \\
    & \ge \sum_{s=0}^t \sigma_{min} (\prod_{i=s+1}^t \bar{\mathcal{A}}^{(i)} \bar{\mathcal{B}}^{(s)}) tr\left( \left( \boldsymbol{H}^{(s)}\right) \left( \boldsymbol{H}^{(s)}\right)^{\top}\Delta  \right)\\
    &\ge  \mathbb{DE}(\boldsymbol{M}^{(t)}_A),
\end{align*}}
which holds when $F \cdot \sigma_{min}^2 (\prod_{i=s+1}^t \bar{\mathcal{A}}^{(i)} \bar{\mathcal{B}}^{(s)})\ge 1$ for $0 \leq s \leq t$.

In the following, we provide an preliminary lemma to indicate how we make  $F \cdot \sigma_{min}^2 (\prod_{i=s+1}^t \bar{\mathcal{A}}^{(i)} \bar{\mathcal{B}}^{(s)})\ge 1$ hold for $0 \leq s \leq t$.

\begin{lemma}\label{lemma:spectral_floor}
For each time step $1 \le i \le T$,  let $\bar{\mathcal A}^{(i)}$ be a
learnable square matrix, and for each $0 \le s \le T-1$,  let
$\bar{\mathcal B}^{(s)}$ be another learnable square matrix.
Define
$P_{s,t} =
\prod_{i=s+1}^{t}\bar{\mathcal A}^{(i)}\, \bar{\mathcal B}^{(s)}, 0 \le s \le t \le T $.
Assume every factor satisfies $\sigma_{\min}(W)\ge\gamma>0$.
If $\gamma \ge F^{-1/(2T)}$, then for all $0 \le s \le t \le T$, 
$F\cdot\sigma_{\min}^{2}\bigl(P_{s,t}\bigr)\ge 1$.
\end{lemma}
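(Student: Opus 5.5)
The plan is to prove a lower bound on the smallest singular value of each product $P_{s,t}$, and then check that the assumed floor $\gamma \ge F^{-1/(2T)}$ is strong enough to give $F\cdot\sigma_{\min}^2(P_{s,t})\ge 1$. Throughout, $\sigma_{\min}$ is read as the smallest singular value, $\sigma_{\min}(W)=\min_{\|x\|_2=1}\|Wx\|_2$. This is the reading under which the hypothesis ``every factor satisfies $\sigma_{\min}(W)\ge\gamma$'' is meaningful for general square (non-symmetric) learnable matrices.

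\textbf{Step 1: submultiplicativity.} For square matrices $U,W$, apply the definition twice:
\begin{align*}
\|UWx\|_2 \ge \sigma_{\min}(U)\,\|Wx\|_2 \ge \sigma_{\min}(U)\,\sigma_{\min}(W)\,\|x\|_2 .
\end{align*}
Taking the minimum over unit vectors $x$ gives $\sigma_{\min}(UW)\ge\sigma_{\min}(U)\sigma_{\min}(W)$. By induction on the number of factors, a product of $k$ square matrices satisfies $\sigma_{\min}(\prod_j W_j)\ge\prod_j\sigma_{\min}(W_j)$.

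\textbf{Step 2: count the factors.} The product $P_{s,t}=\prod_{i=s+1}^{t}\bar{\mathcal A}^{(i)}\,\bar{\mathcal B}^{(s)}$ has $(t-s)+1$ factors, each with $\sigma_{\min}\ge\gamma$. By Step 1,
\begin{align*}
\sigma_{\min}(P_{s,t})\ge\gamma^{\,t-s+1}.
\end{align*}

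\textbf{Step 3: reduce to the worst exponent.} This is the delicate point. If $\gamma\ge 1$, then $\sigma_{\min}(P_{s,t})\ge 1$, and $F\ge 1$ finishes the proof. If $\gamma<1$, the bound $\gamma^{k}$ is decreasing in $k$, so the worst case is the largest exponent, $k=t-s+1$. The pair $s=0,\ t=T$ would give $T+1$ factors. Note, however, that $\bar{\mathcal B}^{(s)}$ is only defined for $s\le T-1$, and the $s=t$ case is just $\bar{\mathcal B}^{(s)}$. With $k=T+1$ the hypothesis $\gamma\ge F^{-1/(2T)}$ would only yield $\gamma^{2(T+1)}\ge F^{-(T+1)/T}$, which falls slightly short of $F^{-1}$. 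So I would first try to pin the maximal factor count at $T$: exploit the index ranges (take $s\ge 0$ and $t\le T$ with the first transition matrix $\bar{\mathcal A}^{(1)}$ absorbed, i.e.\ the state at time $0$ has no $\bar{\mathcal A}$ applied), so that $t-s+1\le T$ on all admissible pairs. If that counting cannot be justified, the fallback is to state the lemma with the adjusted threshold $\gamma\ge F^{-1/(2(T+1))}$; the proof structure is unchanged.

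\textbf{Step 4: conclude.} With at most $T$ factors and $\gamma<1$, Step 2 gives
\begin{align*}
\sigma_{\min}^2(P_{s,t})\ge\gamma^{2T}\ge F^{-1},
\end{align*}
that is, $F\cdot\sigma_{\min}^2(P_{s,t})\ge 1$, which is the condition required in Lemma~\ref{lemma:energy}.

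I expect the main obstacle to be the bookkeeping in Step 3: reconciling the factor count with the exponent $2T$ in the hypothesis. The rest is routine once singular values, rather than eigenvalues, are used.
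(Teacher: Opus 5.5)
Your route is the paper's route: the same submultiplicativity argument $\|UWx\|_2\ge\sigma_{\min}(U)\|Wx\|_2\ge\sigma_{\min}(U)\sigma_{\min}(W)\|x\|_2$, the bound $\sigma_{\min}(P_{s,t})\ge\gamma^{t-s+1}$, and then a reduction to $\gamma^{T}$. You differ in Step 3, and there you are right and the paper is not. The paper simply asserts ``since $t-s+1\le T$'' and concludes $\sigma_{\min}(P_{s,t})\ge\gamma^{T}$. At $(s,t)=(0,T)$, however, the product $P_{0,T}=\bar{\mathcal A}^{(T)}\cdots\bar{\mathcal A}^{(1)}\bar{\mathcal B}^{(0)}$ has $T+1$ factors; it is exactly the $s=0$ coefficient in the unrolled $\bm M^{(T)}$. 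The paper also tacitly needs $\gamma\le 1$ to pass from $\gamma^{t-s+1}$ to $\gamma^{T}$, and your separate treatment of the case $\gamma\ge 1$ covers that.

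The off-by-one cannot be repaired by bookkeeping, because the lemma as stated is false. Take $F>1$, $\gamma=F^{-1/(2T)}<1$, and every factor equal to $\gamma\bm I$; here the eigenvalue and singular-value readings of $\sigma_{\min}$ coincide. All hypotheses hold, yet $\sigma_{\min}(P_{0,T})=\gamma^{T+1}$, so $F\sigma_{\min}^{2}(P_{0,T})=F^{-1/T}<1$. For $F=2$, $T=1$ this equals $1/2$.

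So your first option in Step 3, ``absorbing'' $\bar{\mathcal A}^{(1)}$ to cap the count at $T$, is not available: it changes the object $P_{0,T}$ that the statement defines. Commit to the fallback and state it as an amended lemma. Either assume $\gamma\ge F^{-1/(2(T+1))}$, or restrict the conclusion to pairs with $t-s+1\le T$. With either amendment, your Steps 1, 2 and 4 form a complete proof, i.e.\ a correct version of what the paper's proof intended. Also tighten the index range to $0\le s\le\min(t,T-1)$, since $\bar{\mathcal B}^{(T)}$ is undefined, as you noticed.
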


\begin{proof}
For any two real matrices $A$ and $B$ of compatible shapes,
\begin{equation}
\sigma_{\min}(AB)\ge\sigma_{\min}(A)\sigma_{\min}(B),
\label{eq:submult}
\end{equation}
because $\|ABx\|_2=\|A(Bx)\|_2\ge\sigma_{\min}(A)\|Bx\|_2
      \ge\sigma_{\min}(A)\sigma_{\min}(B)\|x\|_2$ for every unit vector $x$.

Apply~\eqref{eq:submult} recursively to
$P_{s,t}=\prod_{i=s+1}^{t}\bar{\mathcal A}^{(i)}
  \bar{\mathcal B}^{(s)}, 0\le s\le t\le T,$ to obtain
\begin{equation}
\sigma_{\min}(P_{s,t})\ge
\prod_{i=s+1}^{t}\sigma_{\min}\bigl(\bar{\mathcal A}^{(i)}\bigr)
  \sigma_{\min}\bigl(\bar{\mathcal B}^{(s)}\bigr).
\end{equation}

Because every matrix $W \in \mathcal W=\{\bar{\mathcal A}^{(i)},\bar{\mathcal B}^{(s)}\mid 1\le i\le T,\,0\le s\le T-1\}$ satisfies that $\sigma_{\min}(W)\ge\gamma$,
$\sigma_{\min}(P_{s,t}) \ge\gamma^{\,t-s+1}.$
Since $t-s+1 \le T$, we need to guarantee that
$\sigma_{\min}(P_{s,t}) \ge \gamma^{T}.$
Therefore
$F \cdot\sigma_{\min}^{2}(P_{s,t})\ge F \cdot\gamma^{2T} \ge 1$ when $\gamma \ge F^{-1/(2T)}$.
\end{proof}

Because we set $\gamma = c\,F^{-1/(2T)} (c>1)$ and $\sigma_{\min}(W) $ will be close to $\gamma$, we can have $F \cdot\sigma_{\min}^{2}(P_{s,t}) >1$ in our practical experiments. 

\end{proof}

 {
We collect all learnable transition matrices in  
$\mathcal W
  =\bigl\{
      \bar{\mathcal A}^{(i)},\,
      \bar{\mathcal B}^{(s)}
      \mid
      1\le i\le T,\; 0\le s\le T-1
    \bigr\},$
where $T$ is the overall length of the edge sequence.  
Following the soft-orthogonality regularizer of \cite{huang2018orthogonal}, we
penalize every $W\in\mathcal W$ with
$\mathcal R_{\text{orth}}(W)=\bigl\|W^{\top}W-\gamma^{2}I\bigr\|_{F}^{2}, \gamma = c\,F^{-1/(2T)} (c>1)$.
This term pulls all singular values of parameters toward $\gamma$ therefore
guarantees in practice
$F\cdot\sigma_{\min}^{2}\!\Bigl(\textstyle\prod_{i=s+1}^{t}
     \bar{\mathcal A}^{(i)}\bar{\mathcal B}^{(s)}\Bigr)\ge 1, 0\le s\le t$ (see Lemma \ref{lemma:spectral_floor} for the detailed proof).}

\subsection{Related Work}

\subsubsection{Single-snapshot Methods}

The single-snapshot GNN methods directly utilize the node embedding as the node states, which will be rapidly updated based on the new interactions. Specifically, single-snapshot GNN methods aim to transmit the updates into node embeddings ${\bm H}^{(t+1)}$ from ${\bm H}^{(t)}$ with the minimal time cost following: 
{\footnotesize
\begin{align}
    {\bm H}^{(t+1)} = \text{MSG} \left({\mathcal{G}}^{(t)}, {\bm H}^{(t)},  e_{t+1}\right).
\end{align}}
To facilitate the incremental updating based on these pre-computed node embeddings, Instant \cite{zheng2022instant}, DynAnom \cite{guo2022subset} and IDOL \cite{IDOL} explore the invariant rules of graph propagation and conduct an invariant-based
algorithm of Personalized PageRank (PPR) to refresh the node embedding locally. The dominant update complexity of these methods can be bounded as $O(p\sum_{i=1}^F\frac{||{\bm x}_i^{(t)}||_1}{\epsilon n^{(t)}})$ given $p$ updates and $\epsilon$ approximation error \cite{zheng2022instant}.
However, only focusing on the current snapshot will miss significant interactions of past time steps, leading to sub-optimal prediction results. 


\subsubsection{RNN-based Methods} 
RNN-based methods are generally based on the classical RNN \cite{cho2014learning}, GRU \cite{chung2014empirical}, LSTM \cite{hochreiter1997long} algorithm, etc., which simply use both the current input and the previous hidden state to iteratively capture temporal dependencies. For example, the GRU algorithm updates the node states at time $t+1$ as:
{\small
\begin{align*}
&  {\bm Z}^{(t+1)}=\operatorname{sigmoid}\left({\bm W}_Z  {\bm H}^{(t+1)}+{\bm U}_Z  {\bm M}^{(t)}+{\bm B}_Z\right) \\
& {\bm R}_t=\operatorname{sigmoid}\left({\bm W}_R {\bm H}^{(t+1)}+{\bm U}_R {\bm M}^{(t)}+{\bm B}_R\right) \\
& \widetilde{{\bm H}}^{(t+1)}=\tanh \left({\bm W}_H {\bm H}^{(t+1)}+{\bm U}_H\left({\bm R}_t \circ {\bm M}^{(t)}\right)+{\bm B}_H\right) \\
& {\bm M}^{(t+1)}=\left(1- {\bm Z}^{(t+1)}\right) \circ {\bm M}^{(t)}+ {\bm Z}^{(t+1)} \circ \widetilde{{\bm H}}^{(t+1)},
\end{align*}}
where ${\bm W}, {\bm U}, {\bm B} $ denote the trainable parameters of the linear layer and ${\bm H}^{(t+1)}$ can be updated using the $\text{MSG}(\cdot)$. Specifically,
TGCN \cite{tgcn} and EvolveGCN \cite{evolvegcn} and incorporate the Graph Convolutional Network (GCN)\cite{DBLP:conf/iclr/KipfW17} as the $\text{MSG}(\cdot)$ function to regenerate the node embeddings while coupling with an RNN-based module to learn temporal node representations. Similarly, MPNN \cite{MPNN} transforms the node embeddings at different time steps into an RNN module and then captures the long-range dependency in the final hidden state. This category of method generally requires $O(Km^{(t)}F)$ and $O(pn^{(t)}F^2)$ to update the node embeddings and states \footnote{For a clear presentation, we assume the dimension of node state $F' = F$}, respectively. Due to their iterative structure, RNN-based methods can efficiently update node states. However, the simplistic recurrence mechanism often leads to difficulties in retaining historical information, especially as the graph size and temporal scope expand \cite{graphssm, gers2000learning}.

\subsubsection{Attention-based Methods} To address the forgetting problem of RNNs, attention-based methods rely on the attention mechanism and abstain from using recurrence form, which encodes the position of sequences and enables the efficient information flow from past to current representations. We take the representative work APAN \cite{wang2021apan} as an example to demonstrate the core mechanism of this category.
Considering matrices ${\bm Q}\in \mathbb{R}^{n\times F}$ denoted as "query", ${\bm K}\in \mathbb{R}^{n\times F}$ denoted as "keys", and ${\bm V}^{n\times F}$ denoted as "values", the classical attention algorithms perform the following computation to obtain the optimized embeddings:
{\footnotesize
\begin{align*}
    &{\bm M}^{(t+1)} = \mathrm{Attn}({\bm Q},{\bm K},{\bm V})=\mathrm{softmax}\left(\frac{{\bm Q \bm K}^{\top}}{\sqrt{F}}\right){\bm V},\\
    &{\bm Q} = {\bm H}^{(t+1)} {\bm W}_q,  {\bm K} = {\bm M}^{(t)} {\bm W}_k,
    {\bm V} = {\bm M}^{(t)} {\bm W}_v,
\end{align*}}
where ${\bm W}_q, {\bm W}_k, {\bm W}_ \in \mathbb{R}^{ F\times F'}$ are the network parameters. The dot-product term $\left(\frac{{\bm Q \bm K}^{\top}}{\sqrt{F}}\right)$ takes the role of weighting the interactions between entity "query-key" pairs. A higher value within this term increases the contribution of ${\bm V}$ to the embedding space. Thus, attention-based methods create the expressive attention score to capture the relationship between the current embedding and the state of the last time step. Following this intuition, DySat \cite{sankar2020dysat} employs the generalized GAT module \cite{velivckovic2018graph} to integrate the embeddings of a single node from different time steps to generate its refreshed one. ASTGCN \cite{ASTGCN} and DNNTSP \cite{yu2020predicting} further incorporate the attention mechanism to capture the spatial and temporal dependency for enhanced representation quality. For each time step, these methods generally require $O(T\left(n^{(t)}\right)^2F)$ time complexity to calculate the final output given $T$ time step.
While attention mechanisms can retain the most relevant parts of the sequences to avoid the forgetting issue, they can become computationally expensive with frequent updates \cite{thomasgraph}.

\subsubsection{Other TGNN methods.} (i) \textit{SNN-based methods.} Another typical mechanism of this category is based on the biological Spiking Neural Networks (SNNs), which simulate the brain behaviors and maintain the membrane potential given the data sequences. SpikeNet \cite{spikenet} retrieves the node embeddings from multiple time steps and finally generates the prediction results by the spike firing 
process. Dy-SIGN \cite{yin2024dynamic} incorporates SNNs mechanism into dynamic graphs to mitigate the information loss and memory consumption problem. Nevertheless, the demand for multiple simulation steps to generate reliable embeddings can significantly degrade the efficiency of these TGNN methods. (ii) \textit{SSM-based methods.} There are also some works which employs the SSM mechanism on temporal graphs.
For example, STG-Mamba \cite{li2024stg} formulates the feature stream of each node as the long-term context, which improves the embedding quality for feature-varying graphs. Graph-SSM \cite{graphssm} addresses the unobserved graph mutations between consecutive snapshots, and achieves an effective discretization with long-term information. However, these methods only focus on the discrete-time dynamic graph and fail
to model the continuous topology changes as the graph evolves. Furthermore, directly adapting these methods will incur significant computational overhead as the graph evolves, creating a gap between current algorithms and continuous prediction in practical dynamic scenarios.

\subsubsection{Forward Push}\label{app:forward}
 The prevailing paradigm for existing PPR-based propagation is derived from the concept of \textit{Forward Push}, which calculates the approximated solution ${\bm z}_i =\sum_{l=0}^{\infty} \alpha(1-\alpha)^l \left({{\bm A}^{(t)}}^{\top}{{\bm D}^{(t)}}^{-1}\right)^l  {\bm x}_i^{(t)}$ ($1\leq i \leq F$) under a given error bound $\epsilon$, where $\alpha$ is the decay factor of random walk. Specifically, \textit{Forward Push} (depicted in Alg. \ref{alg:forward}) assigns the attribute vector ${\bm x}_i^{(t)}$ to the \textit{residue vector} ${\bm r}^{(t)}$ (e.g., ${\bm r}^{(t)} = {\bm x}_i^{(t)}$), which represents the unpropagated mass of attribute vector ${\bm x}_i^{(t)}$. \footnote{In the following sections, we will omit the subscript $i$ for simplicity. } We iteratively conduct the following two steps: (1) For each node $s \in \mathcal{V}^{(t)}$ such that ${\bm r}^{(t)}(s)> \epsilon$, $(1-\alpha)$ fraction of ${\bm r}^{(t)}(s)$ will be propagated into the out-neighbors $t\in \mathcal{N}_{out}(s)$ averagely. (2) $(1-\alpha)$ fraction of ${\bm r}^{(t)}(s)$ will be transferred into the reserve vector ${\bm h}^{(t)}$ and then ${\bm r}^{(t)}(s)$ is set as 0. This iteration will be terminated until ${\bm r}^{(t)}(s)\leq \epsilon$ for all nodes $s \in \mathcal{V}^{(t)}$ and we can deploy ${\bm h}^{(t)}$ as the approximated node embedding which satisfies $|{\bm h}^{(t)} (s)- {\bm z}^{(t)}(s)| \leq \epsilon$ for each node $s \in \mathcal{V}^{(t)}$. 

\IncMargin{1em}
\begin{algorithm}[tb]
\small
   \SetKwInOut{Input}{Input}\SetKwInOut{Output}{Output}
   \Input{Graph $\mathcal{G}^{(t)} = (\mathcal{V}^{(t)},\mathcal{E}^{(t)})$, { reserve vector ${\bm h}^{(t)}$,}  residue vector ${\bm r}^{(t)}$.}
   
   \Output{Reserve vector ${\bm {h}}^{(t)}$}
${\bm h}^{(t)} = {\bm h}^{(t-1)}$\;   
\While{exists $s \in \mathcal{V}$ such that ${\bm r}^{(t)}(s)> \epsilon $}{
\ForEach{$v\in \mathcal{N}_{out}^{(t)}(s)$}{
${\bm r}^{(t)}(v) += (1-\alpha)\cdot \frac{{\bm r}^{(t)}(s)}{|\mathcal{N}_{out}^{(t)}(s)|}$;
}

${{\bm h}^{(t)}}(s)+=\alpha \cdot {\bm r}^{(t)}(s)$;
${\bm r}^{(t)}(s)=0$\;
}

\caption{ {Forward Push}}\label{alg:forward}
 
\end{algorithm}\DecMargin{1em}

\subsection{Introduction of baseline methods}\label{app:listed baseline}
\subsubsection{Single-snapshot methods}
\textbf{Instant \cite{zheng2022instant}, DynAnom \cite{guo2022subset}, IDOL \cite{IDOL}. } These three methods inherit the updating skeleton of PPR \cite{zhang2016approximate} to incrementally update the node embeddings.  Following the analysis of \cite{zheng2022instant}, the dominant complexity of updating embeddings can be formulated as $O\left(\frac{||{\bm x}^{(t)}||_1}{\epsilon n}\right)$ for a feature vector ${\bm x}^{(t)}$ assuming $\alpha$ as the constant. Hence we summarize the complexity of this catogory as $O\left(p\sum_{i=1}^F{||{\bm x}_i^{(t)}||_1}\right)$ for $F$ dimensions given $\epsilon = \frac{1}{n}$.

\subsubsection{RNN-based methods}
\textbf{TGN \cite{tgn}.} TGN proposes a general framework for learning the representation in CTDG, which models the past interactions between nodes using a compressed node state vector. Based on existing techniques, TGN provides flexible modules to compute the embeddings and update node states. For example, TGN utilizes the attention mechanism and GCN \cite{DBLP:conf/iclr/KipfW17} to compute the node embeddings and LSTM or GRU module to update the node state. 

\textbf{TGCN \cite{tgcn}, EvolveGCN \cite{evolvegcn}, MPNN \cite{MPNN}, ROLAND \cite{you2022roland}.}  These four methods employ a unified pipeline to update the node embeddings and states. Specifically, these methods conduct the graph propagation based on GCN \cite{DBLP:conf/iclr/KipfW17} and update the state with GRU or LSTM units. Since each update will require to compute the embeddings from scratch, we summarize their complexity as $O(Km^{(t)}F)$ and $O(pn^{(t)}F^2)$ given $p$ updates.

\subsubsection{Attention-based methods} 

\textbf{DySat \cite{sankar2020dysat}, ASTGCN \cite{ASTGCN}.} DySat and ASTGCN utilize structural and temporal attention mechanism for dynamic graph representation learning. Both of them employ graph propagation by GCN.
As a result, the graph propagation and state update require $O(n^2F)$ time complexity for each update.

\textbf{TGAT \cite{xu2020inductive}.} TGAT adopts the GraphSage \cite{graphsage} for the embedding computation. Different from DySat, TGAT aims to formulate the interactions between the time encoding and node features. However, due to the polynomial property of the attention mechanism, TGAT still needs to consume $O(n^2F)$ time for state update.

\textbf{DNNTSP \cite{yu2020predicting}, SEIGN\cite{qin2023seign}, DyGFormer \cite{yu2023towards}.} These three methods follow a similar intuition to our alternative model {\sc Coden}-A, which intends to calculate the interactions between different time steps for each node. Given the time step $t$, this framework needs to consume $O(tn^2 F)$ time for the state update.

\subsection{Accuracy-guaranteed Embedding Update} \label{sec: incremental embed}

\textbf{Compensated Propagation.}
In this section, we discuss how to efficiently update the node embedding ${\bm H}^{(t)}$ and let $||{\bm H}^{(t)} - {\bm Z}^{(t)}||_1 \leq n^{(t)}\epsilon$ holds for each time step $t$ as demonstrated in Lemma \ref{lemma:single shift}. We draw on insights from recent works \cite{yoon2018fast, zheng2022instant, IDOL} to compensate for embedding distance across different time steps.

 Without loss of generality,
we start with an example that ${\bm x}^{(t+1)} = {\bm x}^{(t)} \in {\bm X}^{(t)}$ and a new edge $e_{t+1} = (u,v)$ is added at time $t+1$, creating an unbounded value difference between the old embedding ${\bm h}^{(t)}$ and the approximation target ${\bm z}^{(t+1)}.$ 
Note that ${\bm h}^{(t)}$ is the approximation of ${\bm z}^{(t)}$, and we have:
{\footnotesize
\begin{align}\label{equ:diff}
    &{\bm z}^{(t+1)}-{\bm z}^{(t)} \nonumber \\
    &= \alpha \left(\left(I-(1-\alpha){\bm P}^{(t+1)}\right)^{-1} - \left(I-(1-\alpha){\bm P}^{(t)}\right)^{-1} \right) {\bm x}^{(t+1)}  \nonumber \\
    & \overset{(1)}{=} \alpha (1-\alpha) \left(I-(1-\alpha){\bm P}^{(t+1)}\right)^{-1} \Delta{\bm P} \left(I-(1-\alpha){\bm P}^{(t)}\right)^{-1} {\bm x}^{(t+1)}  \nonumber \\
    & = \underbrace{\alpha \left(I-(1-\alpha){\bm P}^{(t+1)}\right)^{-1}}_{\textit{propagation process}} \cdot \underbrace{ (1-\alpha) \Delta{\bm P} {\bm z}^{(t)}}_{\textit{compensated vector}} 
\end{align}
}
where (1) holds since ${\bm U}^{-1}-{\bm V}^{-1}={\bm U}^{-1}({\bm V}-{\bm U}){\bm V}^{-1}$ and $\Delta{\bm P} = \left( {\bm P}^{(t+1)}- {\bm P}^{(t)}\right)$. Based on Equ. \ref{equ:diff}, we have a key observation: \textit{the difference between node embedding ${\bm z}^{(t)}$ and ${\bm z}^{(t+1)}$ can be compensated by propagating the new feature vector $(1-\alpha)\left( {\bm P}^{(t+1)}- {\bm P}^{(t)}\right) {\bm z}^{(t)} $ along the updated graph $\mathcal{G}^{(t+1)}$}. 
Since ${\bm h}^{(t)}$ is the approximation of ${\bm z}^{(t)}$ with the allowed error, we hence propose to implement the {embedding update} based on the \textit{compensated vector} $(1-\alpha) \Delta{\bm P} {\bm h}^{(t)}$.

\IncMargin{1em}
\begin{algorithm}[tb]
\small
   \SetKwInOut{Input}{Input}\SetKwInOut{Output}{Output}
   \Input{Graph $\mathcal{G}^{(t)} = (\mathcal{V}^{(t)},\mathcal{E}^{(t)})$, update events $\Gamma = \{(u_1,v_1), (u_2,v_2), ...., (u_p,v_p)\}$, reserve vector ${\bm h}^{(t)}$,  old adjacent matrix ${\bm P}^{(t)}$, updated adjacent matrix ${\bm P}^{(t+p)}$.}
   
   \Output{Reserve vector ${\bm {h}}^{(t+p)}$ }

${\bm r}^{(t+p)} = {\bm 0}$\;
$\mathcal{V}_{changed}\leftarrow\{u|\mathrm{~the~nodes~whose~out~degree}\mathrm{~has~changed}\}$\;

\ForEach{$u\in \mathcal{V}_{changed}$ \label{line:rev start}}{
    $\mathcal{N}_{add}(u)\leftarrow\{v|\mathrm{~the~added~neighbors~of~}u\}$\;
    $\mathcal{N}_{del}(u)\leftarrow\{v|\mathrm{~the~deleted~neighbors~of~}u\}$\;
    \ForEach{$v\in \mathcal{N}_{add}(u)$ \label{line:2_1}}{
    ${\bm r}^{(t+p)}(v) + = (1-\alpha){{\bm {h}}^{(t)}(u)}/{|\mathcal{N}_{out}^{(t+p)}(u)|}$\;
    }
    \ForEach{$v\in \mathcal{N}_{del}(u)$}{
        ${\bm r}^{(t+p)}(v) - = (1-\alpha){{\bm {h}}^{(t)}(u)}/{|\mathcal{N}_{out}^{(t)}(u)|}$\;
    }
    \ForEach{$w\in \mathcal{N}_{out}^{(t+p)}(u)\setminus\left(\mathcal{N}_{add}(u)\cup\mathcal{N}_{del}(u)\right)$}{
     \hspace{-2mm}${\bm r}^{(t+p)}(w) + = (1-\alpha){{\bm {h}}^{(t)}(u)}\left(\frac{1}{|\mathcal{N}_{out}^{(t+p)}(u)|} -\frac{1}{|\mathcal{N}_{out}^{(t)}(u)|} \right)$\label{line:2_2}\;
    }}   
$\textit{Forward Push }(\mathcal{G}^{(t+p)}, {\bm h}^{(t)}, {\bm r}^{(t+p)})$\label{line:2_3}\; 
\caption{ {Embedding Update}}\label{alg:increment}
\end{algorithm}\DecMargin{1em}

\textbf{Scalable Batch Update.} By adhering to the principle above, we extend the incremental update of node embedding into the batch setting, which is depicted in Alg. \ref{alg:increment}. Consider the graph $\mathcal{G}^{(t)}$ at time $t$ and the batch update events in $\Gamma = \{(u_1,v_1), (u_2,v_2), ...., (u_p,v_p)\}$ containing $p$ edge updates, where $(u_i,v_i)$ ($1\leq i \leq p$) will be viewed as deletion if it exists in $\mathcal{G}^{(t+i-1)}$ otherwise as an addition.
 First, we obtain the set $\mathcal{V}_{changed}$ denoting the nodes whose out-degree has changed. Given a node $u\in \mathcal{V}_{changed}$ and for each node $v\in \mathcal{N}_{out}^{(t)}(u) \cup \mathcal{N}_{out}^{(t+p)}(u)$, the corresponding entry of $ \Delta {\bm P} = {\bm P}^{(t+p)}- {\bm P}^{(t)}$ can be expressed as:
{\footnotesize
\begin{align*}
\Delta {\bm P}[v,u] = \left\{\begin{array}{cl}                       \frac{1}{|\mathcal{N}_{out}^{(t+p)}(u)|},        & v \text{ is the added neighbor of } u,         \\
                       -\frac{1}{|\mathcal{N}_{out}^{(t)}(u)|}, & v \text{ is the deteted neighbor of } u, \\
  \frac{1}{|\mathcal{N}_{out}^{(t+p)}(u)|}-\frac{1}{|\mathcal{N}_{out}^{(t)}(u)|}, &\text{otherwise}.         
                   \end{array}\right.
\end{align*}
}
Then, we multiply $\Delta {\bm P}[v,u]$ with $(1-\alpha){{\bm {h}}^{(t)}(u)}$ and finish the computation of the compensated vector $(1-\alpha)\left( {\bm P}^{(t+p)}- {\bm P}^{(t)}\right) {\bm h}^{(t)}$ (lines \ref{line:2_1}-\ref{line:2_2}). 
Moreover, we assign this new feature vector as the unpropagated residue vector ${\bm r}^{(t+p)}$. Note that ${\bm r}^{(t+p)}$ may exceed the permissible error for certain nodes, such as when ${\bm r}^{(t+p)}(u)> \epsilon $ ($u\in \mathcal{V}^{(t+p)}$). Consequently, we trigger the \textit{Forward Push} mechanism \cite{andersen2006local} again to propagate the residue vector ${\bm r}^{(t+p)}$ (line \ref{line:2_3}), which will subsequently influence the embedding values of other nodes and ensure the desired accuracy. Additionally, our algorithm can also be compatible with the attribute changes, which is detailed in Appendix \ref{app:attribute}.

\textbf{Theoretical Accuracy Guarantee.} Alg.  \ref{alg:increment} returns the updated embeddings by propagating the compensated vector along the updated graph, where the compensated vector is derived using ${\bm z}^{(t)}$, as specified in Equ. \ref{equ:diff}. Although we process this vector with the approximated vector ${\bm h}^{(t)}$, we can still establish an error bound for the output vector ${\bm h}^{(t+p)}$, which is the result of propagation from scratch at time $t+p$. We formally state this accuracy guarantee in the following lemma:
\begin{lemma}\label{lemma:update_error}
    Given  the normalized adjacent matrix ${\bm P}^{(t)}$ at time $t$ and the update event set $\Gamma = \{(u_1,v_1), (u_2,v_2), ...., (u_p,v_p)\}$, Alg \ref{alg:increment} can output the approximated embedding vector ${\bm {h}}^{(t+p)}$ which satisfies:
    {\footnotesize
    \begin{align*}
      ||{\bm {H}}^{(t+p)} - {\bm {Z}}^{(t+p)}||_1  \leq n^{(t)}\epsilon
    \end{align*}}
where ${\bm Z}^{(t+p)} =\sum_{l=0}^{\infty} \alpha(1-\alpha)^l \left({\bm P}^{(t+p)}\right)^l {\bm X}^{(t+p)}$.
\end{lemma}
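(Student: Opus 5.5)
The plan is to track, for each feature dimension ${\bm x}={\bm x}_i$, the Forward Push invariant $\bm z = \bm h + \alpha(\bm I-(1-\alpha)\bm P)^{-1}\bm r$. The argument has three steps.

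\textbf{Step 1: the invariant at time $t$.} Forward Push (Alg.~\ref{alg:forward}) started from residue ${\bm x}^{(t)}$ maintains
\[
{\bm z}^{(t)} = {\bm h}^{(t)} + \alpha\bigl(I-(1-\alpha){\bm P}^{(t)}\bigr)^{-1}{\bm r}^{(t)}
\]
at every push. This follows by a one-line check: a push at $s$ moves $\alpha{\bm r}(s)$ into ${\bm h}(s)$ and spreads $(1-\alpha){\bm r}(s)$ along ${\bm P}^{(t)}$. On termination $|{\bm r}^{(t)}(s)|\le\epsilon$ for every $s$, and we take this as the state at time $t$.

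\textbf{Step 2: re-establish the invariant on $\mathcal G^{(t+p)}$.} Write $\Delta{\bm P}={\bm P}^{(t+p)}-{\bm P}^{(t)}$ and apply the resolvent identity ${\bm U}^{-1}-{\bm V}^{-1}={\bm U}^{-1}({\bm V}-{\bm U}){\bm V}^{-1}$, exactly as in Equ.~\ref{equ:diff} but for the full batch of $p$ updates. The aim is to show that the new residue
\[
{\bm r}' = {\bm r}^{(t)} + (1-\alpha)\,\Delta{\bm P}\,{\bm h}^{(t)},
\]
which is the compensated vector assembled in lines \ref{line:2_1}--\ref{line:2_2} plus the leftover old residue, satisfies
\[
{\bm z}^{(t+p)} = {\bm h}^{(t)} + \alpha\bigl(I-(1-\alpha){\bm P}^{(t+p)}\bigr)^{-1}{\bm r}'.
\]
The cleanest verification is to multiply the target identity on the left by $(I-(1-\alpha){\bm P}^{(t+p)})/\alpha$. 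The left side becomes ${\bm x}$. The right side becomes $(I-(1-\alpha){\bm P}^{(t+p)}){\bm h}^{(t)}/\alpha + {\bm r}'$. Substituting the old invariant in the form $(I-(1-\alpha){\bm P}^{(t)}){\bm h}^{(t)}/\alpha + {\bm r}^{(t)} = {\bm x}$, the $\Delta{\bm P}$ terms then cancel.

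I expect the main obstacle here, and it is a bookkeeping issue. As written, Alg.~\ref{alg:increment} initializes ${\bm r}^{(t+p)}={\bm 0}$, which silently drops ${\bm r}^{(t)}$. One must either carry ${\bm r}^{(t)}$ forward, or absorb its contribution as an extra $\epsilon$-level error. The scaling factor $\alpha$ must also be matched consistently between the compensated vector and the push step.

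\textbf{Step 3: run Forward Push on $\mathcal G^{(t+p)}$.} Pushing from ${\bm r}'$ with ${\bm h}$ initialized to ${\bm h}^{(t)}$ preserves the new invariant. Negative residues from deletions are handled by pushing whenever $|{\bm r}(s)|>\epsilon$. On termination $\|{\bm r}\|_\infty\le\epsilon$, so
\[
|{\bm h}^{(t+p)}(s)-{\bm z}^{(t+p)}(s)| = \bigl|\bigl[\alpha(I-(1-\alpha){\bm P}^{(t+p)})^{-1}{\bm r}\bigr](s)\bigr|.
\]
Because $\alpha(I-(1-\alpha){\bm P})^{-1}$ is column-stochastic, summing over $s$ gives
\[
\|{\bm h}^{(t+p)}-{\bm z}^{(t+p)}\|_1\le\|{\bm r}\|_1\le n^{(t)}\epsilon.
\]
The matrix $1$-norm of ${\bm H}^{(t+p)}-{\bm Z}^{(t+p)}$ is the maximum over the $F$ dimensions of these column sums, which yields the claim. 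If ${\bm X}$ also changes, add ${\bm x}^{(t+p)}-{\bm x}^{(t)}$ to ${\bm r}'$; the same identity still holds.
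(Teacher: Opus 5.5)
\textbf{The gap is the cancellation you assert in Step 2.} Carry out your own check: multiply the target identity by $(I-(1-\alpha){\bm P}^{(t+p)})/\alpha$ and substitute the old invariant. You get
\[
\tfrac{1}{\alpha}\bigl(I-(1-\alpha){\bm P}^{(t+p)}\bigr){\bm h}^{(t)}+{\bm r}'={\bm x}-{\bm r}^{(t)}-\tfrac{1-\alpha}{\alpha}\Delta{\bm P}\,{\bm h}^{(t)}+{\bm r}'.
\]
With ${\bm r}'={\bm r}^{(t)}+(1-\alpha)\Delta{\bm P}\,{\bm h}^{(t)}$ the right side is ${\bm x}-\frac{(1-\alpha)^2}{\alpha}\Delta{\bm P}\,{\bm h}^{(t)}\neq{\bm x}$. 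The $\Delta{\bm P}$ terms cancel only for ${\bm r}'={\bm r}^{(t)}+\frac{1-\alpha}{\alpha}\Delta{\bm P}\,{\bm h}^{(t)}$. Your factor matches Alg.~\ref{alg:increment} and the last line of Equ.~\ref{equ:diff}, which carries a spurious $\alpha$. The correct identity is ${\bm z}^{(t+1)}-{\bm z}^{(t)}=(I-(1-\alpha){\bm P}^{(t+1)})^{-1}(1-\alpha)\Delta{\bm P}\,{\bm z}^{(t)}$.

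So the scaling issue you flag is not bookkeeping. Alg.~\ref{alg:increment} as written, with a push that deposits $\alpha{\bm r}(s)$, recovers only about an $\alpha$-fraction of ${\bm z}^{(t+p)}-{\bm z}^{(t)}$, and that shortfall is not $O(\epsilon)$. Likewise, of your two options for the old residue, only carrying it forward gives $n^{(t)}\epsilon$. Absorbing it already gives $2n^{(t)}\epsilon$ after one batch, and the bound keeps growing over batches. What your argument actually proves is the lemma for a corrected procedure: initial residue ${\bm r}^{(t)}+\frac{1-\alpha}{\alpha}\Delta{\bm P}\,{\bm h}^{(t)}$, with pushes triggered by $|{\bm r}(s)|>\epsilon$. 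You should state ${\bm r}'$ that way rather than assert the cancellation.

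With that correction the proof goes through, and it shares the paper's skeleton: the Forward Push invariant plus the resolvent identity. The paper gets the identity by telescoping the Neumann series rather than by multiplying through. It models the update as exact propagation, $\hat{\bm h}^{(t+p)}=\hat{\bm h}^{(t)}+(I-(1-\alpha){\bm P}^{(t+p)})^{-1}(1-\alpha)\Delta{\bm P}\,\hat{\bm h}^{(t)}$. That is the full resolvent with no $\alpha$ prefactor, i.e.\ exactly the corrected scaling. It arrives at $\hat{\bm h}^{(t+p)}={\bm z}^{(t+p)}-\alpha(I-(1-\alpha){\bm P}^{(t+p)})^{-1}{\bm r}^{(t)}$, so the never-pushed old residue is the only error.

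You instead fold the old residue into the pushed residue and bound the error by the terminal residue of a finite push. This covers what Forward Push actually does (early termination, signed residues from deletions), which the paper's idealisation skips. It also makes explicit the $\ell_1$-nonexpansiveness of $\alpha(I-(1-\alpha){\bm P})^{-1}$: it is entrywise nonnegative with column sums at most $1$, which also handles dangling nodes. The paper's closing ``since ${\bm r}(u)<\epsilon$'' leaves this implicit.
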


\begin{proof}
 According to \cite{zheng2022instant}, the estimated vector ${\bm \hat{h}}^{(t-1)}$ at time $t-1$ and the exact PPR vector $\sum_{l=0}^{\infty} \alpha\left((1-\alpha)^l {\bm P}^{(t-1)}\right)^l \cdot {\bm x}_i$ have the following relationship:
 {\footnotesize
 \begin{align*}
     \sum_{l=0}^{\infty} \alpha\left((1-\alpha)^l {\bm P}^{(t-1)}\right)^l {\bm x}_i &= {\bm \hat{h}}^{(t-1)}+ \sum_{l=0}^{\infty} \alpha\left((1-\alpha)^l {\bm P}^{(t-1)}\right)^l {\bm r}_i\\
&= {\bm \hat{h}}^{(t-1)}+\alpha\left(I-(1-\alpha){\bm P}^{(t-1)}\right)^{-1}{\bm r}_i
 \end{align*}}
For the purpose of clear demonstration, we express $(1-\alpha) P^{(t)}$ as $P^{(t)}$ for the following proof.
Then we formulate the output ${\bm \hat{h}}^{(t)}$ of Alg. \ref{alg:increment} as:
{\footnotesize
\begin{align}\label{euq:app1}
  {\bm \hat{h}}^{(t)} &=  \sum_{j=0}^{\infty} (P^{(t)})^{j}(P^{(t)}-P^{(t-1)})\left.(\alpha\sum_{i=0}^{\infty}( P^{(t-1)})^{i}{\bm x}_i - \alpha\sum_{i=0}^{\infty}( P^{(t-1)})^{i}\right. \nonumber\\
&\left.{\bm r}_i^{(t-1)}
  \right.)+ {\bm \hat{h}}^{(t-1)}\nonumber\\
  &= \sum_{j=0}^{\infty} (P^{(t)})^{j}(P^{(t)}-P^{(t-1)})\alpha\sum_{i=0}^{\infty}( P^{(t-1)})^{i}{\bm x}_i - \nonumber\\
  &\sum_{j=0}^{\infty} (P^{(t)})^{j}(P^{(t)}-
  P^{(t-1)}) \alpha\sum_{i=0}^{\infty}( P^{(t-1)})^{i}{\bm r}_i^{(t-1)}+{\bm \hat{h}}^{(t-1)}
\end{align}
}

The first term of the last line in Equ. \ref{euq:app1} can be simplified as:
{\footnotesize
\begin{align*}
    &\sum_{j=0}^{\infty} (P^{(t)})^{j}(P^{(t)}-P^{(t-1)})\alpha\sum_{i=0}^{\infty}( P^{(t-1)})^{i}{\bm x}_i \\
    &= \sum_{j=1}^{\infty} (P^{(t)})^{j}\alpha\sum_{i=0}^{\infty}( P^{(t-1)})^{i}{\bm x}_i-\sum_{j=0}^{\infty} (P^{(t)})^{j}\alpha\sum_{i=j}^{\infty}( P^{(t-1)})^{i}{\bm x}_i\\
    &=\sum_{j=1}^{\infty} (P^{(t)})^{j}\alpha\left(\sum_{i=1}^{\infty}( P^{(t-1)})^{i}+{\bm I}\right){\bm x}_i-\\
    &\left(\sum_{j=1}^{\infty} (P^{(t)})^{j}+{\bm I}\right)\alpha\sum_{i=1}^{\infty}( P^{(t-1)})^{i}{\bm x}_i\\
    &=\alpha\sum_{j=1}^{\infty}( P^{(t)})^{i}{\bm x}_i - \alpha\sum_{i=1}^{\infty}( P^{(t-1)})^{i}{\bm x}_i \\
    &= \alpha\sum_{j=0}^{\infty}( P^{(t)})^{i}{\bm x}_i - \alpha\sum_{i=0}^{\infty}( P^{(t-1)})^{i}{\bm x}_i
\end{align*}
}
Similarly, the second term can be formed as:
{\footnotesize
\begin{align*}
    &\sum_{j=0}^{\infty} (P^{(t)})^{j}(P^{(t)}-P^{(t-1)}) \alpha\sum_{i=0}^{\infty}( P^{(t-1)})^{i}{\bm r}_i^{(t-1)} \\&= \alpha\sum_{j=0}^{\infty}( P^{(t)})^{i}{\bm r}_i^{(t-1)} - \alpha\sum_{i=0}^{\infty}( P^{(t-1)})^{i}{\bm r}_i^{(t-1)}
\end{align*}
}

Then Equ. \ref{euq:app1} is further expressed as:
{\footnotesize
\begin{align*}
  {\bm \hat{h}}^{(t)} &= \alpha\sum_{j=0}^{\infty}( P^{(t)})^{i}{\bm x}_i - \alpha\sum_{i=0}^{\infty}( P^{(t-1)})^{i}{\bm x}_i - \alpha\sum_{j=0}^{\infty}( P^{(t)})^{i}{\bm r}_i^{(t-1)} +\\
  &\alpha\sum_{i=0}^{\infty}( P^{(t-1)})^{i}{\bm r}_i^{(t-1)}+\alpha\sum_{i=0}^{\infty}( P^{(t-1)})^{i}{\bm x}_i - \alpha\sum_{i=0}^{\infty}( P^{(t-1)})^{i}{\bm r}_i^{(t-1)}\\
  &= \alpha\sum_{j=0}^{\infty}( P^{(t)})^{i}{\bm x}_i- \alpha\sum_{j=0}^{\infty}( P^{(t)})^{i}{\bm r}_i^{(t-1)}
\end{align*}
Since the ${\bm r}_i^{(t-1)}(u)<\epsilon$ for each $u \in \mathcal{V}^{(t-1)}$, the proof is finished.

}

\end{proof}

\subsection{Extension to attribute changes.}\label{app:attribute}
Note that we have made an assumption that ${\bm x}^{(t+p)} = {\bm x}^{(t)}$ such that we can focus on the topological change of the graph. Actually, this assumption can be easily relaxed since extending our approach to dynamic-attribute graphs is much more straightforward. Specifically, given $\Delta {\bm x} = {\bm x}^{(t+p)} - {\bm x}^{(t)}$ and ${\bm P}^{(t+p)} = {\bm P}^{(t)}$, we can naturally derive the difference between ${\bm z}^{(t)}$ and ${\bm z}^{(t+p)} $ as:
{\footnotesize
\begin{align}
  {\bm z}^{(t+p)} -{\bm z}^{(t)} =  \alpha \left(I-(1-\alpha){\bm P}^{(t+1)}\right)^{-1} \Delta {\bm x}.
\end{align}}
Therefore, we can view $\Delta {\bm x}$ as the compensated vector and easily obtain the updated embedding by directly invoking $\textit{Forward Push}$ $(\mathcal{G}^{(t+p)}, {\bm h}^{(t)}, \Delta {\bm x})$. \footnote{In the this paper, we mainly focus on the topology change and assume ${\bm X}^{(t)}$ remains unchanged in the following sections for a clear presentation.}

\fi
\end{document}